\newcommand{\mat}[1]{\boldsymbol{#1}}
\newcommand{\norm}[1]{\left\lVert\mat{#1}\right\rVert}
\newcommand{\dotprod}[2]{\mat{#1}^{\top} \mat{#2}}
\newcommand{\dotprodi}[3]{\mat{#1}_{#3}^{\top} \mat{#2}}
\newcommand{\bigO}{\mathcal{O}}
\newcommand{\iu}{\mathrm{i}\mkern1mu}
\newcommand{\BlackBox}{\rule{1.5ex}{1.5ex}}  
\newenvironment{proof}{\par\noindent{\bf Proof\ }}{\hfill\BlackBox\\[2mm]}
\newtheorem{theorem}{Theorem}[section]
\newtheorem{lemma}[theorem]{Lemma}
\newtheorem{corollary}[theorem]{Corollary}
\begin{document}

%

\runningtitle{Complex-to-Real Sketches for Tensor Products}

%
\runningauthor{Jonas Wacker, Ruben Ohana, Maurizio Filippone}

\twocolumn[


\aistatstitle{Complex-to-Real Sketches for Tensor Products\\with Applications to the Polynomial Kernel}

\aistatsauthor{ Jonas Wacker \And Ruben Ohana \And  Maurizio Filippone }

\aistatsaddress{ EURECOM, France \And  CCM, Flatiron Institute, USA \And EURECOM, France } ]

\begin{abstract}
Randomized sketches of a tensor product of $p$ vectors follow a tradeoff between statistical efficiency and computational acceleration. Commonly used approaches avoid computing the high-dimensional tensor product explicitly, resulting in a suboptimal dependence of $\bigO(3^p)$ in the embedding dimension. We propose a simple Complex-to-Real (CtR) modification of well-known sketches that replaces real random projections by complex ones, incurring a lower $\bigO(2^p)$ factor in the embedding dimension. The output of our sketches is real-valued, which renders their downstream use straightforward. In particular, we apply our sketches to $p$-fold self-tensored inputs corresponding to the feature maps of the polynomial kernel. We show that our method achieves state-of-the-art performance in terms of accuracy \textit{and} speed compared to other randomized approximations from the literature.
\end{abstract}

\section{INTRODUCTION}

Randomized linear sketching \citep{Woodruff2014} is a computationally efficient method for dimensionality reduction, where an input point $\mat{x} \in \mathbb{R}^d$ is multiplied by a random $D$-by-$d$ matrix $\mat{S}$ to yield a low-distortion embedding. When $D \ll d$, the sketched data is more compact, accelerating downstream learning algorithms with statistical guarantees. It is well-known that an optimal choice of $\mat{S}$ requires an embedding dimension $D = \Theta(\log(1/\delta) \epsilon^{-2})$ to guarantee that $\norm{Sx}_2$ lies within $(1 \pm \epsilon) \norm{x}_2$ with probability at least $1-\delta$ \citep{Larsen2017}.

Here we consider sketches of tensor products $\otimes_{i=1}^p \mat{x}_i$ for some arbitrary vectors $\mat{x}_1 \in \mathbb{R}^{d_1}, \dots, \mat{x}_p \in \mathbb{R}^{d_p}$. Storing $\otimes_{i=1}^p \mat{x}_i$ takes $\bigO(\prod_{i=1}^p d_i)$ memory and becomes infeasible when $p$ or $\{d_i\}_{i=1}^p$ are moderately large, impeding the construction of an explicit sketch. To solve this problem, \textit{implicit} sketching methods have been developed in the past \citep[e.g.,][]{Kar2012, Pham2013} that compute $\mat{S} (\otimes_{i=1}^p \mat{x}_i)$ without ever forming $\otimes_{i=1}^p \mat{x}_i$.

Sketches for tensor products have been successfully applied to compress deep neural networks for the tasks of fine-grained visual recognition \citep{Gao2016} and multi-modal fusion \citep{Fukui2016}. Furthermore, when considering the special case of self-tensored inputs (we set $\mat{x} := \mat{x}_1 = \dots = \mat{x}_p$), then $\otimes_{i=1}^p \mat{x}_i$ corresponds to the feature map of the polynomial kernel. For two inputs $\mat{x}, \mat{y} \in \mathbb{R}^d$, the sketch thus yields a randomized approximation $\hat{k}(\mat{x}, \mat{y}) = (\mat{S} (\otimes_{i=1}^p \mat{x}))^\top \mat{S} (\otimes_{i=1}^p \mat{y})$ of the polynomial kernel $k(\mat{x}, \mat{y}) = (\mat{x}^\top \mat{y})^p$. This observation connects these sketching methods to random feature maps originally proposed for shift-invariant kernels \citep{Rahimi2007}. Polynomial kernels are among the most popular kernels and have proven effective in applications such as natural language processing \citep{Goldberg08}, recommender systems \citep{Rendle2010}, and genomic data analysis \citep{Aschard2016}. Moreover, more general dot product kernels can be formulated as a positively weighted sum of polynomial kernels through a Taylor expansion \citep{Kar2012}. An extended version of this expansion also exists for the Gaussian kernel \citep{Cotter2011}.

Although it is of high interest to accelerate the aforementioned applications via sketching, commonly used methods proposed in the past require a suboptimal embedding dimension $D = \bigO(3^p \log(1/\delta) \epsilon^{-2})$ as shown by \citet{Avron2014} and \citet[Appendix A.2]{Ahle2020}, thus
trading statistical efficiency for computational accelerations.
\citet{Ahle2020} improve the dependence on $p$ to polynomial by composing well-known base sketches, but require a more expensive meta-algorithm \citep{Song21c}.

In this work, we address this issue from another angle by studying simple complex-valued modifications of existing sketches. These can yield much lower variances as shown in \citet{wacker2022}, but may render a downstream task such as ridge regression more expensive due to linear algebra operations being applied to complex data. Moreover, \citet{wacker2022} do not provide guarantees on the preservation of the L2-norm, nor do they provide an intuitive explanation for the improved statistical properties of such sketches.
In this sense, our work continues where the previous work falls short. We show that complex sampling distributions have smaller higher-order moments than real-valued analogs while also yielding valid sketches, and we provide an in-depth analysis of resulting theoretical guarantees. We further show that a concatenation of the real and imaginary parts of a complex sketch inherits its statistical advantages and we call the real-valued result a {\em Complex-to-Real (CtR)} sketch. CtR-sketches are simple to construct and can be used in any downstream task without requiring the model to handle complex data.

More precisely, we make the following main contributions: 1) In Section~\ref{sec:complex-bounds}, we show that complex sketches preserve the L2-norm of an input vector using only $D=\bigO(2^p)$ instead of $D=\bigO(3^p)$ required by their real analogs, while explaining the intuition for this improvement. 2) In Section~\ref{sec:ctr-concentration}, we show that these results readily extend to CtR-sketches resulting in the same guarantees for the approximate matrix product. 3) In Section~\ref{sec:ctr-variance}, we focus on polynomial kernels and derive the variances of kernel approximations obtained through CtR-sketches, while comparing them against real-valued analogs. 4) In Section~\ref{sec:experiments}, we empirically compare a newly developed structured CtR-sketch against the state-of-the-art.

We made the code for this work publicly available.\footnote{\url{https://github.com/joneswack/dp-rfs}}

\section{PRELIMINARIES}

\label{sec:preliminaries}

\paragraph{Notation}

We denote the tensor product of two vectors $\mat{a}, \mat{b}$ as $\mat{a} \otimes \mat{b} = {\rm vec}(\mat{ab}^\top)$. For $p$ vectors $\{ \mat{a}_i \}_{i=1}^p$, we use $\otimes_{i=1}^p \mat{a}_i$. In particular, we write $\mat{a}^{\otimes p} := \otimes_{i=1}^p \mat{a}$ when this operation is applied to a vector with itself. For two matrices $\mat{A}, \mat{B}$, we denote their element-wise product as $\mat{A} \odot \mat{B}$. When they are positive semi-definite (psd), we write $\mat{A} \preceq \mat{B}$ if $\mat{B}-\mat{A}$ is psd. The Frobenius norm is defined as $\norm{A}_F = (\sum_{i,j} A_{i,j}^2)^{1/2}$. For a random variable $X$, we denote its expected value by $\mathbb{E}[X]$ and its variance by $\mathbb{V}[X]$. Its $L^t$-norm is $\| X \|_{L^t} = \mathbb{E} [|X|^t]^{1/t}$ for $t \geq 1$.


We define $\iu := \sqrt{-1}$. The real-valued standard normal distribution is defined as $\mathcal{N}(\mat{0}, \mat{I})$, and the complex one as $\mathcal{CN}(\mat{0}, \mat{I})$. The real Rademacher distribution is denoted by ${\rm Unif}(\{ 1, -1 \})$, and the complex one by ${\rm Unif }(\{ 1, -1, \iu, -\iu \})$. A Rademacher vector has its elements drawn i.i.d. from the Rademacher distribution.

\paragraph{Polynomial kernel}

In this work, we consider polynomial kernels of the form
\begin{equation} \label{eq:poly-kernels}
    k(\mat{x}, \mat{y}) = (\gamma \mat{x}^\top \mat{y} + \nu  )^p
\end{equation}
for some $\mat{x}, \mat{y} \in \mathbb{R}^d$, where $\gamma, \nu \geq 0$ and $p \in \mathbb{N}$. Both parameters $\gamma$ and $\nu$ can be absorbed by the input vectors by setting $\tilde{\mat{x}} :=  (\sqrt{\gamma} \mat{x}^\top, \sqrt{\nu})^\top \in \mathbb{R}^{d+1}$ and $\tilde{\mat{y}} :=  (\sqrt{\gamma} \mat{y}^\top, \sqrt{\nu})^\top \in \mathbb{R}^{d+1}$. Therefore, without loss of generality, we assume the kernel to be {\em homogeneous}, i.e., it can be written as
\begin{equation}
    (\gamma \mat{x}^\top \mat{y} + \nu  )^p = (\tilde{\mat{x}}^\top \tilde{\mat{y}})^p
    = (\tilde{\mat{x}}^{\otimes p})^{\top} \tilde{\mat{y}}^{\otimes p}.
\end{equation}
Although its feature maps $\tilde{\mat{x}}^{\otimes p}, \tilde{\mat{y}}^{\otimes p}$ can be computed explicitly,
they are $(d+1)^p$-dimensional and therefore infeasible to construct when $d$ or $p$ are large. For $n$ data points, applying the kernel trick costs at least $\bigO(n^2)$ and is not possible when $n$ is large. This makes randomized sketching, i.e., reducing the dimensionality of $\tilde{\mat{x}}^{\otimes p}$ and $\tilde{\mat{y}}^{\otimes p}$ through linear random projections, an attractive choice.

\subsection{Sketching Tensor Products}

\label{sec:real-complex-sketches}

We study sketches of tensor products $\otimes_{i=1}^p \mat{x}_i$ for some $\mat{x}_1 \in \mathbb{R}^{d_1}, \dots, \mat{x}_p \in \mathbb{R}^{d_p}$. There exist several sketching techniques for this purpose (see Section \ref{sec:related}). Here we focus on the following construction.

We generate $p \times D$ i.i.d. random weights $\mat{w}_{i, \ell} \in \mathbb{C}^{d_i}$ satisfying $\mathbb{E} [\mat{w}_{i,\ell} \overline{\mat{w}_{i,\ell}}^{\top}] = \mat{I}_{d_i}$ for $i \in \{1, \dots, p\}, \ell \in \{1, \dots, D\}$, where $\mat{I}_{d_i}$ is the identity matrix of size $d_i$. E.g., $\mat{w}_{i, \ell}$ can be a (complex) Rademacher vector or be sampled from the (complex) standard normal distribution.

We define a sketch $\mat{S} = (\mat{s}_1, \dots, \mat{s}_D)^\top \in \mathbb{C}^{D \times d_1 \cdots d_p}$ with $\mat{s}_\ell = \otimes_{i=1}^p \mat{w}_{i, \ell} / \sqrt{D}$. A naive computation of $\mat{S} (\otimes_{i=1}^p \mat{x}_i)$ would cost $\bigO(D \prod_{i=1}^p d_i)$ time and memory, but we can exploit the following property of the tensor product:
\begin{align}
    \label{eqn:sketch-element}
    (\otimes_{i=1}^p \mat{w}_{i, \ell})^\top (\otimes_{i=1}^p \mat{x}_i)
    = \prod_{i=1}^p \mat{w}_{i, \ell}^\top \mat{x}_i
\end{align}
that lets us compute $\mat{S} (\otimes_{i=1}^p \mat{x}_i)$ in $\bigO(D \sum_{i=1}^p d_i)$ using the r.h.s. of Eq.~\ref{eqn:sketch-element}. In particular, $\otimes_{i=1}^p \mat{x}_i$ never needs to be constructed explicitly in this case.

Although our sketches are applicable to arbitrary tensor products, in this work we focus on feature maps of the polynomial kernel. That is, we set $\mat{x} = \mat{x}_1 = \cdots = \mat{x}_p$, such that $\otimes_{i=1}^p \mat{x}_i = \mat{x}^{\otimes p}$. For two inputs $\mat{x}, \mat{y} \in \mathbb{R}^d$, we define the approximate kernel $\hat{k}(\mat{x}, \mat{y}) := (\mat{S} \mat{x}^{\otimes p})^\top (\overline{\mat{S} \mat{y}^{\otimes p}})$, which is unbiased because
\begin{align*}
    \mathbb{E}\left[\hat{k}(\mat{x}, \mat{y})\right]
    = \frac{1}{D} \sum_{\ell=1}^D \prod_{i=1}^p \mat{x}^{\top} \mathbb{E} [\mat{w}_{i,\ell} \overline{\mat{w}_{i,\ell}}^{\top}] \mat{y}
    = (\dotprod{x}{y})^p.
\end{align*}
In this case, we may alternatively call $\Phi(\mat{x}) := \mat{S} \mat{x}^{\otimes p}$ a random feature map, which we express as
\begin{align}
    \label{eqn:polynomial-estimator}
    \Phi(\mat{x})
    =  (\mat{W}_1 \mat{x} \odot \cdots \odot \mat{W}_p \mat{x}) / \sqrt{D},
\end{align}
where $\mat{W}_i := (\mat{w}_{i, 1}, \dots, \mat{w}_{i, D})^\top$, to simplify the notation.

The random feature map (\ref{eqn:polynomial-estimator}) has originally been proposed by \citet{Kar2012} and been further studied in \citet{Hamid2014, Meister2019, Ahle2020} for the case of real-valued $\{ \mat{W}_i \}_{i=1}^p$. Recently, \citet[Thm. 3.1]{wacker2022} derived a variance lower bound for $\hat{k}(\mat{x}, \mat{y})$, which can be obtained through Rademacher weights. They further showed that lower variances can be achieved using more general complex-valued $\{ \mat{W}_i \}_{i=1}^p$ that subsume the real-valued case \citep[Thm. 3.3]{wacker2022}. Hereafter, we use $\Phi_{\rm R}$ to denote a real-valued and $\Phi_{\rm C}$ to denote a complex-valued random feature map, thus emphasizing their difference. The caveat of using $\Phi_{\rm C}$ is that it requires the downstream model to handle complex data, which may incur additional computational costs.

The purpose of this work instead, is to analyze the real-valued kernel estimate $\hat{k}_{\rm CtR}(\mat{x}, \mat{y}) := {\rm Re} \{ \hat{k}(\mat{x}, \mat{y}) \}$, which can be written as
\begin{align}
    \label{eqn:ctr-kernel}
    \hat{k}_{\rm CtR}&(\mat{x}, \mat{y})
    = {\rm Re}\{\Phi_{\rm C}(\mat{x})\}^\top {\rm Re} \{ \Phi_{\rm C}(\mat{y}) \} \\
    &+ {\rm Im} \{ \Phi_{\rm C}(\mat{x}) \}^\top {\rm Im} \{ \Phi_{\rm C}(\mat{y}) \} = \Phi_{\rm CtR}(\mat{x})^\top \Phi_{\rm CtR}(\mat{y}),\nonumber
\end{align}
where we call $\Phi_{\rm CtR}(\mat{x})$ a \textit{Complex-to-Real (CtR)} sketch. Since it is real-valued, it can be used as a drop-in replacement for any input to a downstream model. The downside of CtR-sketches is that they are $2D$-dimensional. In order to yield a fair comparison with real sketches, we reduce the dimension of CtR-sketches to $D$ by using half the number of rows for $\{ \mat{W}_i \}_{i=1}^p$ from now onward. We summarize the construction of CtR-sketches in Alg.~\ref{alg:ctr-algorithm}.




\begin{algorithm}[tb]
   \caption{Complex-to-Real (CtR) Sketches}
   \label{alg:ctr-algorithm}
\begin{algorithmic}
    \STATE {\bfseries Input:} Data point $\mat{x} \in \mathbb{R}^d$
    \STATE Choose dimension $D=2k$ ($k \in \mathbb{N}$), degree $p \in \mathbb{N}$
    \STATE Sample $\{\mat{W}_i\}_{i=1}^p$ with $\mat{W}_i \in \mathbb{C}^{D/2 \times d}$ independently according to one of the following sketch distributions:
    \begin{itemize}
        \item Gaussian: $(\mat{W}_i)_{\ell, k} \stackrel{i.i.d.}{\sim} \mathcal{CN}(0,1)$
        \item Rademacher: $(\mat{W}_i)_{\ell, k} \stackrel{i.i.d.}{\sim} {\rm Unif}(\{1, -1, \iu, -\iu\})$
        \item ProductSRHT: $\mat{W}_i=\mat{P}_i \mat{H} \mat{D}_i$ (see Appendix~\ref{sec:app-structured-sketches})
    \end{itemize}
    \STATE Compute $\Phi_{\rm C}(\mat{x}) := \sqrt{2/D} \, (\mat{W}_1 \mat{x} \odot \dots \odot \mat{W}_p \mat{x})$
    \STATE {\bfseries Return:}
    \STATE $\Phi_{\rm CtR}(\mat{x}) := ( {\rm Re}\{\Phi_{\rm C}(\mat{x})_1\}, \dots, {\rm Re}\{\Phi_{\rm C}(\mat{x})_{D/2}\},$
    \STATE $\quad\quad\quad\quad {\rm Im}\{\Phi_{\rm C}(\mat{x})_1\}, \dots, {\rm Im}\{\Phi_{\rm C}(\mat{x})_{D/2}\} )^{\top} \in \mathbb{R}^{D}$
\end{algorithmic}
\end{algorithm}

\section{ANALYSIS OF CtR-SKETCHES}

The following section is dedicated to the theoretical analysis of CtR-sketches for tensor products and feature maps of the polynomial kernel. For the first part of our analysis, we treat them as \textit{linear} sketches in a high-dimensional tensor-product space (see Section \ref{sec:preliminaries}). In the second part, we focus on the variances of CtR-sketches for the particular case of feature maps of the polynomial kernel in order to obtain useful insights for their practical application.

\subsection{Concentration Bounds for Complex Sketches}

\label{sec:complex-bounds}

We start by analyzing the complex sketch $\mat{S} = (\mat{s}_1, \dots, \mat{s}_D)^\top \in \mathbb{C}^{D \times d_1 \cdots d_p}$ (see Section \ref{sec:preliminaries}). Recall that $\mat{s}_\ell = \otimes_{i=1}^p \mat{w}_{i, \ell} / \sqrt{D}$ with $\mat{w}_{i,\ell} \in \mathbb{C}^{d_i}$ i.i.d. $\mat{x} \in \mathbb{R}^{d_1 \cdots d_p}$ can take on any value that may not necessarily result from a tensor product in our analysis. This makes our results more general and is required to derive the spectral guarantee (\ref{eqn:spectral-order}) in Section \ref{sec:ctr-concentration}. Moreover, we only study (complex) Gaussian/Rademacher distributions for $\mat{w}_{i,\ell}$ here since Rademacher distributions achieve a variance lower bound for the sketch in Eq.~\ref{eqn:polynomial-estimator} as we show later in Thm.~\ref{thrm:rad-ctr-advantage}.


The following key lemma shows that $\mat{s}_\ell^\top \mat{x}$ has lower absolute moments if $\mat{s}_\ell$ is sampled from a complex Gaussian/Rademacher distribution instead of a real one. It is an extension of \citet[Lem. 19]{Ahle2020} to complex $\mat{s}_\ell$.

\begin{lemma}[Absolute Moment Bound]
\label{lemma:abs-moment-bound}
Let $t \geq 2, p \in \mathbb{N}$, $C_t > 0$, $\mat{x} \in \mathbb{R}^{d_1 \cdots d_p}$ and $\mat{w}_i \in \mathbb{C}^{d_i}$ for $i=1,\dots,p$.
If $\| \mat{w}_i^\top \mat{a} \|_{L^t} \leq C_t \| \mat{a} \|_2$ for all $\mat{a} \in \mathbb{R}^{d_i}$ and $\{ \mat{w}_{i} \}_{i=1}^p$, then
$$
\textstyle
\| (\otimes_{i=1}^p \mat{w}_i)^\top \mat{x} \|_{L^t} \leq C_t^p \| \mat{x} \|_2 \quad \text{holds.}
$$
In particular, for $t = 2 k$ with $k \in \mathbb{N}$, we obtain:
\begin{align*}
    &C_{t} = \sqrt{2} \pi^{- 1 / (2t)} \Gamma((t + 1)/2)^{1/t}
    & \text{(real Gauss./Rad.)} \\
    &C_{t} = \Gamma(t/2 + 1)^{1/t}
    & \text{(complex Gauss./Rad.)}
\end{align*}
which are tight constants. $\Gamma(\cdot)$ is the Gamma function.
\end{lemma}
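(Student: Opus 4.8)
The plan is to prove the multiplicative bound $\| (\otimes_{i=1}^p \mat{w}_i)^\top \mat{x} \|_{L^t} \leq C_t^p \| \mat{x} \|_2$ by induction on $p$, peeling off one factor of the tensor product at a time and conditioning on the remaining factors, and then separately to verify the two explicit constants by a direct moment computation.

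For the induction, the base case $p=1$ is exactly the hypothesis $\| \mat{w}_1^\top \mat{a} \|_{L^t} \leq C_t \| \mat{a} \|_2$. For the inductive step, I would write $\mat{x} \in \mathbb{R}^{d_1 \cdots d_p}$ as a $d_1 \times (d_2 \cdots d_p)$ reshaping, i.e. use the identity $(\otimes_{i=1}^p \mat{w}_i)^\top \mat{x} = \sum_{j=1}^{d_1} (\mat{w}_1)_j \, (\otimes_{i=2}^p \mat{w}_i)^\top \mat{x}^{(j)}$, where $\mat{x}^{(j)} \in \mathbb{R}^{d_2 \cdots d_p}$ is the $j$-th ``slice'' of $\mat{x}$ along the first mode. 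Conditioning on $\mat{w}_2, \dots, \mat{w}_p$, the quantity $Z_j := (\otimes_{i=2}^p \mat{w}_i)^\top \mat{x}^{(j)}$ is a fixed (complex) scalar, and $\sum_j (\mat{w}_1)_j Z_j = \mat{w}_1^\top \mat{z}$ with $\mat{z} = (Z_1, \dots, Z_{d_1})^\top$. The hypothesis on $\mat{w}_1$ is stated for real $\mat{a}$, so the first technical point is to extend it to complex arguments: splitting $\mat{w}_1^\top \mat{z} = \mat{w}_1^\top \mathrm{Re}\{\mat{z}\} + \iu\, \mat{w}_1^\top \mathrm{Im}\{\mat{z}\}$ and using the triangle inequality in $L^t$ gives $\| \mat{w}_1^\top \mat{z} \|_{L^t} \le C_t(\|\mathrm{Re}\{\mat{z}\}\|_2 + \|\mathrm{Im}\{\mat{z}\}\|_2) \le \sqrt 2\, C_t \|\mat{z}\|_2$, which would lose a constant — so instead I would argue that the relevant $\mat{w}_1$ (real or complex Gauss./Rad.) have enough symmetry that $\|\mat{w}_1^\top \mat{z}\|_{L^t} \le C_t \|\mat{z}\|_2$ holds for complex $\mat{z}$ directly, or appeal to the fact that this complex-argument version is precisely what the constants $C_t$ are computed from. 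Taking $L^t$-norms over $\mat{w}_1$ first, then over the remaining weights, and using that $\|\mat{z}\|_2^2 = \sum_j |Z_j|^2$ with $\big\| \sum_j |Z_j|^2 \big\|_{L^{t/2}} \le \sum_j \big\| |Z_j|^2 \big\|_{L^{t/2}} = \sum_j \|Z_j\|_{L^t}^2 \le C_t^{2(p-1)} \sum_j \|\mat{x}^{(j)}\|_2^2 = C_t^{2(p-1)} \|\mat{x}\|_2^2$ (triangle inequality in $L^{t/2}$, valid since $t \ge 2$, plus the inductive hypothesis applied to each slice), yields $\| \mat{w}_1^\top \mat{z} \|_{L^t} \le C_t \cdot C_t^{p-1} \|\mat{x}\|_2 = C_t^p \|\mat{x}\|_2$.

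For the explicit constants, I would reduce to computing $\|g\|_{L^t}$ for a single standard real/complex Gaussian or Rademacher variable $g$, using rotational invariance: for $\mat{w}$ real standard Gaussian and unit $\mat{a}$, $\mat{w}^\top \mat{a} \sim \mathcal N(0,1)$, and $\mathbb E|\mathcal N(0,1)|^t = 2^{t/2}\pi^{-1/2}\Gamma((t+1)/2)$, giving $C_t = \sqrt 2\, \pi^{-1/(2t)}\Gamma((t+1)/2)^{1/t}$; for $\mat{w}$ complex standard Gaussian, $\mat{w}^\top \mat{a}$ has the distribution of a standard complex Gaussian whose squared modulus is $\mathrm{Exp}(1)$ (equivalently $\frac{1}{2}\chi^2_2$), so $\mathbb E|\mat{w}^\top\mat{a}|^t = \mathbb E[E^{t/2}] = \Gamma(t/2+1)$ with $E \sim \mathrm{Exp}(1)$, giving $C_t = \Gamma(t/2+1)^{1/t}$. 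The Rademacher cases require showing their even moments match (or are dominated by) the Gaussian ones for $t = 2k$: for real Rademacher this is the classical fact that $\mathbb E|\sum a_j \epsilon_j|^{2k}$ is maximized, among unit vectors, in the Gaussian-like regime and equals $(2k-1)!! = \mathbb E|\mathcal N(0,1)|^{2k}$ in the limit of many equal coordinates, which can be shown via a multinomial expansion and the observation that cross terms with an odd power vanish; the complex Rademacher case is analogous using $\mathrm{Unif}(\{1,-1,\iu,-\iu\})$ whose moments up to the relevant order match those of the complex Gaussian. Tightness follows by exhibiting the extremal $\mat{x}$ (a single nonzero coordinate for the distributional computation, and the ``spread-out'' limit for the Rademacher even-moment bound).

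The main obstacle I anticipate is the passage through complex-valued intermediate vectors $\mat{z}$: the lemma's hypothesis is phrased for $\mat{a} \in \mathbb{R}^{d_i}$, but after one round of peeling the argument naturally becomes complex, and one must either strengthen the hypothesis for free (using the symmetry of the specific $\mat{w}_i$ distributions under multiplication by $\iu$, so that $\mat{w}_i^\top \mat{z}$ and $\mat{w}_i^\top \bar{\mat{z}}$ are equidistributed and the bound transfers without constant loss) or carefully restructure the induction so that the hypothesis is only ever applied to real arguments. The other delicate point is establishing that Rademacher even moments do not exceed the Gaussian constants — making precise the combinatorial argument that, for even $t$, no unit vector $\mat{a}$ makes $\mathbb E|\mat{w}^\top \mat{a}|^t$ larger than its value under the continuous analog — and confirming the claimed tightness, which I would handle by the explicit extremal configurations above.
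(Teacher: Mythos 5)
Your proposal is correct and follows essentially the same route as the paper: the tensor bound is proved by induction on $p$, peeling off one factor, conditioning on the remaining weights, and using Minkowski's inequality in $L^{t/2}$ on the slice norms, while the constants come from exact Gaussian/chi-square (exponential) moments and a Khintchine-type domination of Rademacher even moments, with tightness from the single-coordinate and spread-out (CLT) limits. The complex-intermediate-vector subtlety you flag is not a divergence from the paper either: its induction likewise applies the inductive hypothesis to the complex vector $B_{I_1,\dots,I_{p-1}}$ built from $\mat{w}_p$, and for the complex Rademacher moment computation it proceeds concretely by rotating the weights by $e^{\iu\pi/4}$ to write them as $(\mat{a}+\iu\,\mat{b})/\sqrt{2}$ with independent real Rademacher vectors, expanding binomially, and invoking real Khintchine term by term against the complex Gaussian expansion.
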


\begin{proof}
Appendix \ref{sec:app-complex-moments}, where we also bound $C_t$ if $t \neq 2k$.
\end{proof}

The left plot of Fig.~\ref{fig:moment-bound} shows the constants $C_t$ for different values of $t$ and it becomes clear that higher order moments for the complex Gaussian/Rademacher distribution are smaller than for the real-valued one, with an increasing gain for larger $t$. This effect is again amplified with a larger $p$ that enters the moment bounds exponentially.

\begin{figure}[t]
\centerline{\includegraphics[width=1.0\linewidth]{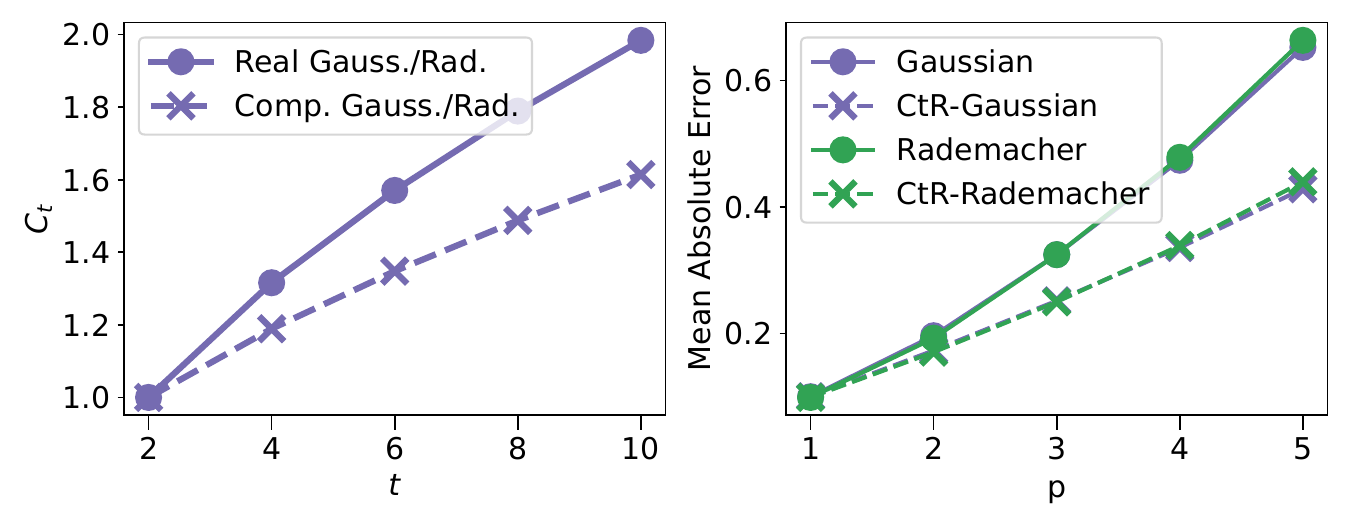}}
\caption{(Left) $C_t$ over $t=2k, k \in \mathbb{N}$.
(Right) Mean $| \norm{Sx}_2^2 - \norm{x}_2^2 |$ over $3 \cdot 10^4$ samples of $\mat{S}$ for real/CtR-sketches with an equal number of 128 rows and $\mat{x} = \mat{a}^{\otimes p}, \mat{a} = (1/\sqrt{d}, \dots, 1/\sqrt{d})^\top \in \mathbb{R}^d, d=64$.
}
\label{fig:moment-bound}
\end{figure}

The following theorem shows that complex sketches $\mat{Sx}$ thus require a lower sketching dimension $D$ than real ones to preserve the norm of $\mat{x}$, which is a direct consequence of the tighter moment bounds in Lem.~\ref{lemma:abs-moment-bound}.

\begin{theorem}[Norm Preservation]
\label{thrm:complex-norm-pres}

Let $0 < \epsilon, 0 < \delta < \exp(-2), \mat{x} \in \mathbb{R}^{d_1 \cdots d_p}, \mat{S} = (\mat{s}_1, \dots, \mat{s}_D)^\top \in \mathbb{C}^{D \times d_1 \cdots d_p}$ with $\mat{s}_\ell = \otimes_{i=1}^p \mat{w}_{i, \ell} / \sqrt{D}$ and $\mat{w}_{i,\ell} \in \mathbb{C}^{d_i}$ be i.i.d. Gaussian/Rademacher samples.
In order to guarantee
\begin{align*}
    &{\rm Pr} \left\{ | \norm{Sx}_2^2 - \norm{x}_2^2 | \leq \epsilon \norm{x}_2^2 \right\}
    \geq 1 - \delta, \quad \text{we need}
\end{align*}
$D = \bigO(\max\{C_4^{4p} \log(1/\delta) \epsilon^{-2}, (C_4^2 e / 2)^p \log^p(1/\delta) \epsilon^{-1}\})$,
where $C_4$ is defined in Lem.~\ref{lemma:abs-moment-bound} for the real/complex case.
\end{theorem}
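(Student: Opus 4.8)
The plan is to view $\norm{Sx}_2^2$ as an average of i.i.d.\ nonnegative random variables and to apply a moment-based concentration bound whose only inputs are the absolute-moment estimates of Lemma~\ref{lemma:abs-moment-bound}. By rescaling we may assume $\norm{x}_2 = 1$. Put $Z_\ell := |(\otimes_{i=1}^p \mat{w}_{i,\ell})^{\top} \mat{x}|^2$; the $Z_\ell$ are i.i.d., $\norm{Sx}_2^2 = \frac1D\sum_{\ell=1}^D Z_\ell$, and since $\mathbb{E}[\mat{w}_{i,\ell}\overline{\mat{w}_{i,\ell}}^{\top}] = \mat{I}_{d_i}$ with the $\mat{w}_{i,\ell}$ independent over $i$, one gets $\mathbb{E}[Z_\ell] = \mat{x}^{\top}(\otimes_{i=1}^p \mat{I}_{d_i})\mat{x} = \norm{x}_2^2 = 1$. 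So the goal is to bound ${\rm Pr}\{\,|\frac1D\sum_{\ell=1}^D Z_\ell - 1| > \epsilon\,\}$.

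First I would record the moment growth of $Z_\ell$. For every integer $k \ge 1$, Lemma~\ref{lemma:abs-moment-bound} applied with $t = 2k$ gives $\|Z_\ell\|_{L^k} = \|(\otimes_{i=1}^p \mat{w}_{i,\ell})^{\top}\mat{x}\|_{L^{2k}}^2 \le C_{2k}^{2p}$. In particular $\mathbb{V}[Z_\ell] \le \|Z_\ell\|_{L^2}^2 \le C_4^{4p}$, and Stirling's formula yields the comparison $C_{2k}^2 \le \tfrac{e}{2}\,k\,C_4^2$ for both the real and the complex constants, whence $\|Z_\ell - 1\|_{L^k} \le 1 + C_{2k}^{2p} \le 1 + (\tfrac{e}{2}C_4^2 k)^p$. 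Thus $Z_\ell - 1$ are i.i.d., mean zero, with variance at most $C_4^{4p}$ and with only polynomially (in $k$) growing higher moments.

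Next I would combine a moment inequality of Rosenthal/Marcinkiewicz--Zygmund type for sums of i.i.d.\ mean-zero variables with Markov's inequality ${\rm Pr}\{|\frac1D\sum_\ell(Z_\ell - 1)| > \epsilon\} \le \epsilon^{-t}\|\frac1D\sum_\ell(Z_\ell - 1)\|_{L^t}^t$, exactly as in the real-valued argument of \citet[Appendix~A.2]{Ahle2020}. This bounds the $L^t$-norm of the centered average by a universal constant times $\sqrt{t/D}\,C_4^{2p} + D^{-1}(\tfrac{e}{2}C_4^2 t)^p$; choosing the moment order $t = \Theta(\log(1/\delta))$ --- a legitimate choice $\ge 2$ precisely because $\delta < \exp(-2)$ --- makes the tail $\le \delta$ as soon as each of the two summands is at most a fixed multiple of $\epsilon$. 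The first summand is small once $D = \Omega(C_4^{4p}\log(1/\delta)\,\epsilon^{-2})$ (a Gaussian-type, variance-dominated regime), and the second once $D = \Omega((\tfrac{e}{2}C_4^2)^p\log^p(1/\delta)\,\epsilon^{-1})$; taking the maximum of the two requirements gives the stated bound, with $C_4$ the real constant of Lemma~\ref{lemma:abs-moment-bound} for real sketches and the strictly smaller complex constant for complex ones.

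The main obstacle is this last step: pinning down the moment inequality with explicit, dimension-free constants and carrying them through the optimization over $t$ so that the final bases and exponents come out clean --- in particular, establishing and using the Stirling comparison $C_{2k}^2 \le \tfrac{e}{2}k\,C_4^2$, which is what lets the heavy-moment term be expressed purely through $C_4$. The remaining ingredients --- the reduction to $\norm{x}_2 = 1$, the expectation identity, and discharging the rounding in the choice of $t$ --- are routine. For intuition, note that $C_4^4 = 2$ for the complex constant versus $C_4^4 = 3$ for the real one, which is exactly the origin of the $\bigO(2^p)$ versus $\bigO(3^p)$ improvement highlighted in the introduction.
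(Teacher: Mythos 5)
Your proposal is correct and follows essentially the same route as the paper's proof in Appendix~\ref{sec:app-complex-norm-pres}: reduce to $\norm{x}_2=1$, center $Z_\ell = |(\otimes_{i=1}^p \mat{w}_{i,\ell})^\top \mat{x}|^2 - 1$, control $\|Z_\ell\|_{L^t}$ through Lemma~\ref{lemma:abs-moment-bound} via a comparison of the form $C_{2t}^2 = \bigO(t\,C_4^2)$, bound the $L^t$-norm of the centered average by a sharp moment inequality for sums of i.i.d.\ mean-zero variables, and finish with Markov's inequality at $t \asymp \log(1/\delta)$. The only substantive difference is bookkeeping: the paper produces the $e^p$ factor from the $(D/t)^{1/t}$ term in Latala's inequality (via \citet[Cor.~38]{Ahle2020}) rather than from your Stirling comparison, and the two-term ``Rosenthal-type'' bound you assert is precisely the Latala-strength estimate used there --- a generic Rosenthal inequality, whose constant grows with $t$, would leak an extra $\log(1/\delta)$ factor into one of the two regimes, so the sharp Latala form is the one your argument must invoke at that step.
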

\begin{proof}
Appendix \ref{sec:app-complex-norm-pres}, where we provide an additional bound for the case when $\delta \in (0,1)$.
\end{proof}
The upper bound on $D$ in Thm.~\ref{thrm:complex-norm-pres} is hence controlled by $C_4^{4p} = 3^p (2^p)$ and $C_4^{2p} = \sqrt{3}^p (\sqrt{2}^p)$ for real (complex) Gaussian/Rademacher sketches leading to a sharper dependence on $p$ for the complex case. In particular,
the $3^p$ dependence is tight for the real case as shown in the lower bound on $D$ in \citet[Appendix A.1]{Ahle2020}, which makes our bound a remarkable improvement. It thus takes us one step closer to reaching the optimal $D = \Theta(\log(1/\delta) \epsilon^{-2})$ for Johnson-Lindenstrauss embeddings \citep{Larsen2017} that is independent from $p$, but has a prohibitive $\bigO(D \prod_{i=1}^p d_i)$ computational cost. Lastly, our result improves over \citet[Thm. 3.4]{wacker2022} that bounds errors relative to the L1-norm instead of the L2-norm, which makes their bound much looser than ours as explained in Appendix \ref{sec:app-wacker-comparison}.

\subsection{Concentration Bounds for CtR-Sketches}

\label{sec:ctr-concentration}

It is easy to see that CtR-sketches directly inherit the guarantees in Thm.~\ref{thrm:complex-norm-pres}. Following the construction of CtR-features in Eq.~\ref{eqn:ctr-kernel}, we define the $2D$-dimensional CtR-sketch
\begin{align*}
    \mat{S}_{\rm CtR}
    := ({\rm Re} \{\mat{s}_1\}, \dots, {\rm Re} \{\mat{s}_{D}\}, {\rm Im} \{\mat{s}_1\}, \dots, {\rm Im} \{\mat{s}_{D}\})^\top
\end{align*}
giving $\| \mat{S}_{\rm CtR} \mat{x} \|_2^2
= \sum_{\ell=1}^D {\rm Re} \{ \mat{s}_\ell^\top \mat{x} \}^2
+ {\rm Im} \{ \mat{s}_\ell^\top \mat{x} \}^2
= \norm{Sx}_2^2$. We can thus substitute $\mat{S}$ in Thm.~\ref{thrm:complex-norm-pres} by $\mat{S}_{\rm CtR}$ to obtain the same guarantees. For a fair comparison, we need to multiply the required number of features $D$ in Thm.~\ref{thrm:complex-norm-pres} by two when using the \textit{same} number of rows for $\mat{S}_{\rm CtR}$ and $\mat{S}$. Crucially however, the improved dependence on $p$ remains the same implying that CtR-sketches must outperform real-valued analogs when $p$ is large enough. The right plot of Fig.~\ref{fig:moment-bound} shows that this is already the case from $p \geq 2$ with a larger gain for larger $p$. A more detailed variance comparison follows in Section~\ref{sec:ctr-variance}.

The following corollary of Thm.~\ref{thrm:complex-norm-pres} shows that inner products as well as matrix products are preserved under the same conditions provided in the theorem.
\begin{corollary}[Approximate Matrix Product]
\label{cor:approx-matrix}
Let $0 < \epsilon, 0 < \delta < \exp(-2), \mat{x}, \mat{y} \in \mathbb{R}^{d_1 \cdots d_p}$ and $\mat{S}_{\rm CtR}$ defined as in Section~\ref{sec:ctr-concentration}. In order to guarantee
\begin{align*}
    {\rm Pr} \left\{ | (\mat{S}_{\rm CtR} \mat{x})^\top (\mat{S}_{\rm CtR} \mat{y}) - \dotprod{x}{y} | \leq \epsilon \norm{x} \norm{y} \right\}
    \geq 1 - \delta,
\end{align*}
or for two matrices $\mat{X} \in \mathbb{R}^{d_1 \cdots d_p \times n}, \mat{Y} \in \mathbb{R}^{d_1 \cdots d_p \times m}$
\begin{align*}
    {\rm Pr} \left\{ \frac{\| (\mat{S}_{\rm CtR} \mat{X})^\top (\mat{S}_{\rm CtR} \mat{Y}) - \dotprod{X}{Y} \|_F}{\norm{X}_F \norm{Y}_F} \leq \epsilon \right\}
    \geq 1 - \delta,
\end{align*}
$\mat{S}_{\rm CtR}$ needs to have $2D$ rows with $D$ being the same as in Thm.~\ref{thrm:complex-norm-pres}.
\end{corollary}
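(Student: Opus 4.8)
The plan is to derive both claims from Theorem~\ref{thrm:complex-norm-pres} by the standard polarization trick, noting that $\mat{S}_{\rm CtR}$ acts as a real linear map that preserves squared norms exactly, i.e. $\| \mat{S}_{\rm CtR} \mat{z} \|_2^2 = \| \mat{S} \mat{z} \|_2^2$ for every real input $\mat{z}$ (as observed right above the corollary). First I would handle the inner-product case. Write the polarization identity $\dotprod{x}{y} = \tfrac14(\norm{x+y}_2^2 - \norm{x-y}_2^2)$, and similarly $(\mat{S}_{\rm CtR}\mat{x})^\top(\mat{S}_{\rm CtR}\mat{y}) = \tfrac14(\|\mat{S}_{\rm CtR}(\mat{x}+\mat{y})\|_2^2 - \|\mat{S}_{\rm CtR}(\mat{x}-\mat{y})\|_2^2)$, since $\mat{S}_{\rm CtR}$ is linear over $\mathbb{R}$. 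Subtracting and applying the triangle inequality, the approximation error is at most $\tfrac14$ times the sum of the two squared-norm errors for the inputs $\mat{x}+\mat{y}$ and $\mat{x}-\mat{y}$. By Theorem~\ref{thrm:complex-norm-pres} applied to each of these two vectors (with $2D$ rows and the stated $D$, and a union bound that halves $\delta$, absorbed into the $\bigO$), each squared-norm error is at most $\epsilon'\norm{x\pm y}_2^2$ with probability $1-\delta/2$; choosing $\epsilon' = c\epsilon$ for a suitable constant and using $\norm{x+y}_2^2 + \norm{x-y}_2^2 = 2(\norm{x}_2^2+\norm{y}_2^2)$ together with $\norm{x}_2\norm{y}_2 \le \tfrac12(\norm{x}_2^2+\norm{y}_2^2)$ is not quite tight, so instead I would rescale $\mat{x},\mat{y}$ to unit norm (both sides of the claimed inequality are homogeneous of degree $1$ in each argument), reducing to $\norm{x}_2 = \norm{y}_2 = 1$, in which case $\norm{x\pm y}_2^2 \le 4$ and the bound closes with an $\epsilon$ rescaled by a universal constant, again absorbed into the $\bigO(D)$.

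For the matrix version, I would reduce to the entrywise inner-product statement. Each entry of $(\mat{S}_{\rm CtR}\mat{X})^\top(\mat{S}_{\rm CtR}\mat{Y}) - \dotprod{X}{Y}$ is $(\mat{S}_{\rm CtR}\mat{x}_j)^\top(\mat{S}_{\rm CtR}\mat{y}_k) - \mat{x}_j^\top\mat{y}_k$ for columns $\mat{x}_j,\mat{y}_k$. The cleanest route is not a union bound over all $nm$ entries (which would cost a $\log(nm)$ factor) but the second-moment argument familiar from JL-style approximate matrix multiplication: bound $\mathbb{E}\big[\|(\mat{S}_{\rm CtR}\mat{X})^\top(\mat{S}_{\rm CtR}\mat{Y}) - \dotprod{X}{Y}\|_F^2\big]$ and apply Markov. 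However, since the corollary is stated as inheriting exactly the conditions of Theorem~\ref{thrm:complex-norm-pres}, I expect the intended proof instead invokes the fact that a norm-preserving (JL) embedding automatically yields approximate matrix multiplication with the \emph{same} $D$ up to constants — this is a black-box implication (e.g. along the lines of Woodruff or Kane--Nelson): from $(1\pm\epsilon)$-preservation of norms of all vectors in the column span of $[\mat{X}\ \mat{Y}]$, one gets the Frobenius-norm matrix-product guarantee. I would state this reduction, cite the relevant JL-to-matrix-multiplication lemma, and note that the column span has dimension at most $n+m$ but the relevant quantity entering is $\norm{X}_F\norm{Y}_F$, not the ambient dimension, so no extra $p$- or dimension-dependence appears.

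The main obstacle is getting the constants and the failure probabilities to line up so that the final $D$ is \emph{literally} the same as in Theorem~\ref{thrm:complex-norm-pres} rather than inflated by a $\log(n+m)$ or $\log(nm)$ factor. The polarization step for a single inner product is harmless (two applications of the theorem, constant-factor rescaling of $\epsilon$, union bound over two events), but the matrix case genuinely needs the moment-based or subspace-embedding argument to avoid a union bound over entries; I would therefore lean on the standard result that an $\ell_2$-subspace embedding of the relevant subspace gives approximate matrix multiplication, and verify that Theorem~\ref{thrm:complex-norm-pres} indeed delivers such a subspace embedding when applied to a net over the unit sphere of $\mathrm{span}([\mat{X}\ \mat{Y}])$ — the net-size $\exp(\bigO(n+m))$ is exactly absorbed by the $\log(1/\delta)$ term, so setting $\delta$ accordingly keeps $D$ unchanged up to constants. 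Carefully tracking that absorption through both the $C_4^{4p}\log(1/\delta)\epsilon^{-2}$ and the $(C_4^2 e/2)^p \log^p(1/\delta)\epsilon^{-1}$ terms is the one place where the bookkeeping must be done with some care.
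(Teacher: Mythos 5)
Your treatment of the single inner product is essentially the paper's: polarization of the squared norm plus Thm.~\ref{thrm:complex-norm-pres} and the identity $\|\mat{S}_{\rm CtR}\mat{z}\|_2^2=\|\mat{S}\mat{z}\|_2^2$. The paper, however, runs this argument at the level of $L^t$-moments: the proof of Thm.~\ref{thrm:complex-norm-pres} actually establishes $\big\| \|\mat{S}_{\rm CtR}\mat{z}\|_2^2 - \|\mat{z}\|_2^2 \big\|_{L^t} \leq \epsilon\,\delta^{1/t}\|\mat{z}\|_2^2$, and applying Minkowski's inequality to the polarization identity together with the parallelogram law $\|\mat{x}+\mat{y}\|_2^2+\|\mat{x}-\mat{y}\|_2^2=2(\|\mat{x}\|_2^2+\|\mat{y}\|_2^2)$ gives exactly $\epsilon\,\delta^{1/t}\|\mat{x}\|_2\|\mat{y}\|_2$ with no union bound and no rescaling of $\epsilon$; one application of Markov then finishes. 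Your probability-level version with a union bound over the two vectors $\mat{x}\pm\mat{y}$ works, but it is lossier than you suggest: replacing $\delta$ by $\delta/2$ enters the second term of Thm.~\ref{thrm:complex-norm-pres} through $\log^p(2/\delta)$, i.e.\ it multiplies the exponential base by a constant, which is not a harmless absorption if one cares about the $2^p$-versus-$3^p$ improvement the corollary is meant to inherit verbatim.

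The genuine gap is in the matrix statement. The route you say you would lean on --- view $\mat{S}_{\rm CtR}$ as a subspace embedding of ${\rm span}([\mat{X}\ \mat{Y}])$ via a net over its unit sphere and ``absorb'' the net size $\exp(\bigO(n+m))$ into $\log(1/\delta)$ --- does not keep $D$ unchanged up to constants: it replaces $\log(1/\delta)$ by $\log(1/\delta)+\bigO(n+m)$, inflating the first term of $D$ by a factor of order $(n+m)/\log(1/\delta)$ and the second term by that factor raised to the $p$-th power, whereas the corollary's $D$ has no dependence on $n$ or $m$ at all. Moreover, no subspace embedding is needed for a Frobenius-norm guarantee. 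The paper's proof is precisely the second-moment argument you mention only in passing and then set aside: the entrywise moment bound $\big\|(\mat{S}_{\rm CtR}\mat{x}_i)^\top(\mat{S}_{\rm CtR}\mat{y}_j)-\mat{x}_i^\top\mat{y}_j\big\|_{L^t}\leq\epsilon\,\delta^{1/t}\|\mat{x}_i\|_2\|\mat{y}_j\|_2$ from the first part is summed over all $nm$ entries, and Markov is applied to $\|(\mat{S}_{\rm CtR}\mat{X})^\top(\mat{S}_{\rm CtR}\mat{Y})-\mat{X}^\top\mat{Y}\|_F^2$ with threshold $\epsilon^2\|\mat{X}\|_F^2\|\mat{Y}\|_F^2$; since $\sum_{i,j}\|\mat{x}_i\|_2^2\|\mat{y}_j\|_2^2=\|\mat{X}\|_F^2\|\mat{Y}\|_F^2$, the failure probability comes out as $\delta$ with exactly the same $D$ and no $\log(nm)$ or $(n+m)$ overhead. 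You should make that argument the proof and drop the net-based reduction.
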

\begin{proof}
Appendix~\ref{sec:app-approx-matrix}.
\end{proof}
In particular, Cor.~\ref{cor:approx-matrix} gives guarantees on the approximation error of polynomial kernels using CtR-sketches. In this case, we simply set $\mat{X} = \mat{A}^{\otimes p}, \mat{Y} = \mat{B}^{\otimes p}$, with $\mat{A}^{\otimes p}, \mat{B}^{\otimes p}$ being matrices whose columns are the polynomial kernel feature maps of some data points $\{\mat{a}_i\}_{i=1}^{n}$ and $\{\mat{b}_i\}_{i=1}^{m}$, respectively, where $\mat{a}_i, \mat{b}_i \in \mathbb{R}^d$. 

We can directly derive spectral kernel approximation guarantees from the approximate matrix product property as shown in Appendix~\ref{sec:app-ose}. Let $\mat{K} := (\mat{A}^{\otimes p})^\top \mat{A}^{\otimes p} \in \mathbb{R}^{n \times n}$ be the gram matrix for the points $\{\mat{a}_i\}_{i=1}^n$ and $\hat{\mat{K}} := (\mat{S}_{\rm CtR} \mat{A}^{\otimes p})^\top (\mat{S}_{\rm CtR} \mat{A}^{\otimes p})$ be its randomized approximation. Then with probability at least $1-\delta$, we have
\begin{align}
    \label{eqn:spectral-order}
    (1-\epsilon) (\mat{K} + \lambda \mat{I})
    \preceq
    \hat{\mat{K}} + \lambda \mat{I}
    \preceq (1+\epsilon) (\mat{K} + \lambda \mat{I})
\end{align}
for some $\lambda \geq 0$, if $\mat{S}_{\rm CtR}$ has $2 D s_{\lambda}(\mat{K})^2$ rows with $D$ being the same as in Thm.~\ref{thrm:complex-norm-pres} and $s_{\lambda}(\mat{K}) = {\rm Tr } \{\mat{K} (\mat{K} + \lambda \mat{I})^{-1}\} \leq n$ being the $\lambda$-statistical dimension of $\mat{K}$.

The spectral approximation guarantee directly implies statistical guarantees for downstream kernel-based
learning applications, such as bounds on the empirical risk of kernel ridge regression \citep[Lem. 2]{avron17a}. The quadratic dependence on $s_{\lambda}(\mat{K})$ is not optimal and arises due to the element-wise error bound in Cor.~\ref{cor:approx-matrix}. A linear dependence could be achieved by bounding the operator norm instead as it is done in \citet[Section 5]{Ahle2020}.
As the focus of this work is to obtain a sharp dependence w.r.t. $p$ and $\delta$, we leave this issue to future work and focus on a careful variance analysis of CtR-sketches instead.

%
\subsection{\hspace{-0.4em}Variances of CtR-Sketches for Polynomial Kernels}

\label{sec:ctr-variance}

%
%
In this section, we derive the closed form variances of CtR-sketches for the specific task of polynomial kernel approximation, and compare them against their real-valued analogs. This analysis is crucial, since we saw in Thm.~\ref{thrm:complex-norm-pres} that the improvement of CtR over real-valued sketches is because of a lower fourth moment $C_4$ as defined in Lem.~\ref{lemma:abs-moment-bound}. To be more precise, let $\mat{s}_\ell^\top \mat{x}$ be a single element of our complex sketch $\mat{Sx} \in \mathbb{C}^D$ as defined in Section~\ref{sec:complex-bounds}. Then we have $\mathbb{E} [| \dotprodi{s}{x}{\ell} |^4] \leq (C_4 \norm{x}_2 / \sqrt{D})^4$ as implied by Lem.~\ref{lemma:abs-moment-bound}. This is equal to the second moment $\mathbb{E} [| \dotprodi{s}{x}{\ell} \overline{\dotprodi{s}{y}{\ell}} |^2]$ for two inputs $\mat{x}, \mat{y}$ when $\mat{x}=\mat{y}$, which in turn is directly linked to the variance of $\dotprodi{s}{x}{\ell} \overline{\dotprodi{s}{y}{\ell}}$ via
$$
    \mathbb{E} \left[\left| \dotprodi{s}{x}{\ell} \overline{\dotprodi{s}{y}{\ell}} \right|^2\right]
    = \mathbb{V}\left[\dotprodi{s}{x}{\ell} \overline{\dotprodi{s}{y}{\ell}}\right] + \frac{1}{D^2} (\dotprod{x}{y})^2.
$$
The purpose of this section is therefore to carry out a careful variance analysis, elucidating conditions on $\mat{x}$ and $\mat{y}$ under which CtR-sketches perform better than real-valued analogs \textit{in practice}, and \textit{beyond the worst-case scenario}: $\mat{x}=\mat{y}$ as explained in Appendix~\ref{sec:app-approx-matrix}.

As we focus on polynomial kernels from now onward, we restrict our input space to vectors $\mat{x}^{\otimes p}, \mat{y}^{\otimes p} \in \mathbb{R}^{d^p}$ for some $\mat{x}, \mat{y} \in \mathbb{R}^d$. In this case, we can write:
\begin{align*}
    (\mat{S}_{\rm CtR} \mat{x}^{\otimes p})^\top (\mat{S}_{\rm CtR} \mat{y}^{\otimes p})
    = \hat{k}_{\rm CtR}(\mat{x}, \mat{y})
    \quad \text{(as in Eq.~\ref{eqn:ctr-kernel})}
\end{align*}
Hence, the approximate kernel and its variance $\mathbb{V} [\hat{k}_{\rm CtR}(\mat{x}, \mat{y})]$ depend directly on $\mat{x}, \mat{y} \in \mathbb{R}^d$. Let further $\hat{k}_{\rm C}(\mat{x}, \mat{y}) = (\mat{S} \mat{x}^{\otimes p})^\top (\overline{\mat{S} \mat{y}^{\otimes p}})$. In Appendix \ref{sec:ctr-structure}, we show that CtR-sketches have the following variance structure:
\begin{align}
    \label{eqn:ctr-var}
    &\mathbb{V} [\hat{k}_{\rm CtR}(\mat{x}, \mat{y})]
    = \frac{1}{2} \left( \mathbb{V} [\hat{k}_{\rm C}(\mat{x}, \mat{y})] + \mathbb{PV} [\hat{k}_{\rm C}(\mat{x}, \mat{y})] \right) \\
    \nonumber
    &\text{with} \quad \mathbb{PV} [\hat{k}_{\rm C}(\mat{x}, \mat{y})] := \mathbb{E} [\hat{k}_{\rm C}(\mat{x}, \mat{y})^2] - (\dotprod{x}{y})^{2p}
\end{align}
being the {\em pseudo-variance} of $\hat{k}_{\rm C}(\mat{x}, \mat{y})$ and $\mathbb{V} [\hat{k}_{\rm C}(\mat{x}, \mat{y})]$ its variance.

Our major contribution of this section is to derive $\mathbb{V} [\hat{k}_{\rm C}(\mat{x}, \mat{y})]$ and $\mathbb{PV} [\hat{k}_{\rm C}(\mat{x}, \mat{y})]$ for Gaussian/Rademacher sketches in Section \ref{sec:gaus-rad-variances} and we summarize these results in Table \ref{tbl:variances}. For a direct comparison, we also add the variances of real sketches $\Phi_{\rm R}$ (see Section \ref{sec:real-complex-sketches}) to Table \ref{tbl:variances}.
The question that we address in the following is:
Does the CtR estimator in Eq.~\ref{eqn:ctr-kernel} yield lower variances than $\hat{k}_{\rm R}(\mat{x}, \mat{y}) = \Phi_{\rm R}(\mat{x})^{\top} \Phi_{\rm R}(\mat{y})$ if $\Phi_{\rm CtR}$ and $\Phi_{\rm R}$ have the {\em same} output dimension $D$? We show next that this is indeed the case.

\begin{table*}[t]
\caption{Variances of complex $\hat{k}_{\rm C}(\mat{x}, \mat{y})$ and real $\hat{k}_{\rm R}(\mat{x}, \mat{y})$ and pseudo-variances of complex $\hat{k}_{\rm C}(\mat{x}, \mat{y})$ are shown.\\
$\mathbb{V}_{\rm Rad.}^{(p)}, \mathbb{V}_{\rm Rad.}^{(1)}$ and $\mathbb{PV}_{\rm Rad.}^{(p)}, \mathbb{PV}_{\rm Rad.}^{(1)}$ are the Rademacher variances / pseudo-variances for a given $p$ and $p=1$, respectively.}
\label{tbl:variances}
\begin{center}
\small
\resizebox{1 \linewidth}{!}{
\begin{tabular}{l | l | l}
\toprule
\textbf{Sketch} &
\textbf{Variance} $\mathbb{V} [\hat{k}_{\rm C}(\mat{x}, \mat{y})] \, (q=1)$ and $\mathbb{V} [\hat{k}_{\rm R}(\mat{x}, \mat{y})] \, (q=2)$ &
\textbf{Pseudo-Variance} $\mathbb{PV} [\hat{k}_{\rm C}(\mat{x}, \mat{y})]$ \\
\midrule
Gaussian &
$D^{-1} [ ( \norm{x}^2 \norm{y}^2 + q (\dotprod{x}{y})^2 )^p - (\dotprod{x}{y})^{2p} ]$ &
$D^{-1} [ (2 (\dotprod{x}{y})^2)^p - (\dotprod{x}{y})^{2p}]$ \\
Rademacher &
$D^{-1} [ ( \norm{x}^2 \norm{y}^2 + q ((\dotprod{x}{y})^2 - \sum_{i=1}^d x_i^2 y_i^2))^p - (\dotprod{x}{y})^{2p} ]$ &
$D^{-1} [ ( 2 (\dotprod{x}{y})^2 - \sum_{i=1}^d x_i^2 y_i^2 )^p - (\dotprod{x}{y})^{2p}]$ \\
ProductSRHT & $\mathbb{V}_{\rm Rad.}^{(p)} - (1-1/D) \cdot [(\dotprod{x}{y})^{2p} - ({\rm C}_{\rm Var.})^p]$
&
$\mathbb{PV}_{\rm Rad.}^{(p)} - (1 - 1/D) \cdot [(\dotprod{x}{y})^{2p} - ({\rm C}_{\rm PVar.})^p]$ \\
& ${\rm C}_{\rm Var.} = (\dotprod{x}{y})^2 - (\lceil D/d \rceil d - 1)^{-1} \ \mathbb{V}_{\rm Rad.}^{(1)} $
&
${\rm C}_{\rm PVar.}=(\dotprod{x}{y})^2 - (\lceil D/d \rceil d-1)^{-1} \ \mathbb{PV}_{\rm Rad.}^{(1)}$ \\
\bottomrule
\end{tabular}
}
\end{center}
\vspace{-1em}
\end{table*}

\subsection{Variance Reduction of CtR-Sketches}

\label{sec:var-reduction-properties}

We begin by studying the variance reduction properties of Gaussian/Rademacher CtR-sketches over their real-valued analogs. Let $\Phi_{\rm R}: \mathbb{R}^d \rightarrow \mathbb{R}^{D}$ be a real-valued sketch (see Section \ref{sec:real-complex-sketches}) and $\Phi_{\rm CtR}: \mathbb{R}^d \rightarrow \mathbb{R}^{D}$ a CtR-sketch as defined in Alg.~\ref{alg:ctr-algorithm}. Let $\hat{k}_{\rm R}(\mat{x}, \mat{y})$ and $\hat{k}_{\rm CtR}(\mat{x}, \mat{y})$ (\ref{eqn:ctr-kernel}) be the respective approximate kernels for some $\mat{x}, \mat{y} \in \mathbb{R}^d$. Then we can provide the following theorem for Rademacher sketches.
\begin{theorem}[CtR-Rademacher advantage]
    \label{thrm:rad-ctr-advantage}
    Let $a = \sum_{i \neq j'}^d x_i x_{j'} y_i y_{j'}$, $b_j = (\norm{x} \norm{y})^{2j} - (\sum_{i} x_i^2 y_i^2)^j \geq 0$. Then $\mathbb{V}[\hat{k}_{\rm R}(\mat{x}, \mat{y})] - \mathbb{V} [\hat{k}_{\rm CtR}(\mat{x}, \mat{y})]$ is equal to
    \begin{align*}
        \frac{1}{D} \sum_{k=2}^p \sum_{j=0}^{k-1} \binom{p}{k} \binom{k}{j} b_j \ a^{p-j} \geq 0 \quad \text{if} \quad a \geq 0.
    \end{align*}
    Furthermore, if $a \geq 0$, CtR-Rademacher sketches achieve the lowest possible variance for $\hat{k}_{\rm CtR}(\mat{x}, \mat{y})$ (\ref{eqn:ctr-kernel}) assuming the entries of $\{\mat{W}_i\}_{i=1}^p$ in Eq.~\ref{eqn:polynomial-estimator} are i.i.d. If $a < 0$, the lowest possible variance is attained by real Rademacher sketches instead, i.e., using $\hat{k}_{\rm R}(\mat{x}, \mat{y})$.
\end{theorem}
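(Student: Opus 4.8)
I would prove this in two stages: first the explicit variance identity together with its sign when $a\ge0$, then the two optimality claims.

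\noindent\emph{The identity.} The ingredients are the Rademacher rows of Table~\ref{tbl:variances} and the decomposition $\mathbb{V}[\hat{k}_{\rm CtR}]=\tfrac12\big(\mathbb{V}[\hat{k}_{\rm C}]+\mathbb{PV}[\hat{k}_{\rm C}]\big)$ of Eq.~\ref{eqn:ctr-var}; since a CtR-sketch with $D$ real outputs uses $D/2$ complex rows, the $\tfrac12$ cancels the $D/2$, so $\mathbb{V}[\hat{k}_{\rm CtR}]$ and $\mathbb{V}[\hat{k}_{\rm R}]$ both carry a $1/D$. Writing $N=\norm{x}^2\norm{y}^2$, $P=(\dotprod{x}{y})^2$, $S=\sum_i x_i^2 y_i^2$, one has $a=P-S$, $b_j=N^j-S^j\ge0$ (because $S\le N$), and the table gives $D\,\mathbb{V}[\hat{k}_{\rm R}]=(N+2a)^p-P^p$ and $D\,\mathbb{V}[\hat{k}_{\rm CtR}]=(N+a)^p+(S+2a)^p-2P^p$. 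Using $P=S+a$ the difference collapses to
\begin{align*}
D\big(\mathbb{V}[\hat{k}_{\rm R}]-\mathbb{V}[\hat{k}_{\rm CtR}]\big)=\big[(N+2a)^p-(S+2a)^p\big]-\big[(N+a)^p-(S+a)^p\big],
\end{align*}
and expanding $(T+ca)^p=\sum_m\binom{p}{m}T^m(ca)^{p-m}$ for $T\in\{N,S\}$, $c\in\{1,2\}$ cancels the $m=0$ terms and turns this into $\sum_{m=1}^{p-1}\binom{p}{m}(2^{p-m}-1)\,b_m\,a^{p-m}$. Writing $2^{p-m}-1=\sum_{r\ge1}\binom{p-m}{r}$, using $\binom{p}{m}\binom{p-m}{r}=\binom{p}{m+r}\binom{m+r}{m}$ and putting $k=m+r$ re-indexes this to $\sum_{k=2}^{p}\sum_{j=1}^{k-1}\binom{p}{k}\binom{k}{j}b_j a^{p-j}$, and appending the vanishing $j=0$ terms (since $b_0=0$) gives exactly the stated double sum. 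Every $\binom{p}{k}$, $\binom{k}{j}$, $b_j$ and $2^{p-m}-1$ is $\ge0$, so the sign is that of $a^{p-j}$; in particular the whole sum is $\ge0$ when $a\ge0$.

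\noindent\emph{Optimality when $a\ge0$.} For a single weight block $\mat{w}\in\mathbb{C}^d$ with i.i.d.\ entries and $\mathbb{E}[|w_j|^2]=1$, set $\mu_1=\mathbb{E}[|\mat{w}^\top\mat{x}|^2|\mat{w}^\top\mat{y}|^2]$ and $\mu_2=\mathbb{E}[(\mat{w}^\top\mat{x})^2\overline{(\mat{w}^\top\mat{y})^2}]$. A second-moment expansion over index pairings shows both are real, that they depend on the entry law only through $m:=\mathbb{E}[|w_j|^4]\in[1,\infty)$ and $\gamma:=|\mathbb{E}[w_j^2]|^2\in[0,1]$, namely $\mu_1=N+(m-1)S+(1+\gamma)a$ and $\mu_2=\gamma N+(m-\gamma)S+2a$, and every $(\gamma,m)$ in the rectangle is realizable. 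Independence across the $p$ blocks gives $D\,\mathbb{V}[\hat{k}_{\rm CtR}]=\mu_1^p+\mu_2^p-2(\dotprod{x}{y})^{2p}$, so I must minimize $\mu_1^p+\mu_2^p$ over $(\gamma,m)$. Two inputs do the work: $\mu_1=\mathbb{E}[|\mat{w}^\top\mat{x}|^2|\mat{w}^\top\mat{y}|^2]\ge0$ always, and $|\mu_2|\le\mu_1$ by Cauchy--Schwarz ($|\mathbb{E}[Z^2]|\le\mathbb{E}[|Z|^2]$ with $Z=(\mat{w}^\top\mat{x})\overline{(\mat{w}^\top\mat{y})}$). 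Then $\partial_m(\mu_1^p+\mu_2^p)=pS\big(\mu_1^{p-1}+\mu_2^{p-1}\big)\ge pS\big(\mu_1^{p-1}-|\mu_2|^{p-1}\big)\ge0$, so the minimum sits at $m=1$; and at $m=1$, $\partial_\gamma(\mu_1^p+\mu_2^p)=p\big(a\,\mu_1^{p-1}+b_1\,\mu_2^{p-1}\big)\ge0$ when $a\ge0$, since then $\mu_1,\mu_2\ge0$ and $b_1\ge0$. Hence the optimum is $(\gamma,m)=(0,1)$, i.e.\ the complex Rademacher sketch.

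\noindent\emph{The case $a<0$ and the main obstacle.} Here the claim is that among i.i.d.\ sketches one should use the real Rademacher estimator $\hat{k}_{\rm R}$ (optimal among real sketches by \citealt[Thm.~3.1]{wacker2022}), i.e.\ $\mathbb{V}[\hat{k}_{\rm R}]\le\mathbb{V}[\hat{k}_{\rm CtR}]$ for \emph{every} CtR sketch. Reducing to $m=1$ as above and writing $c:=N+2a\ge0$, $t:=1-\gamma\in[0,1]$, one has $\mu_1=c+|a|t$, $\mu_2=c-b_1 t$ and $(\dotprod{x}{y})^2=c-(b_1-|a|)$, so the claim becomes $(c+|a|t)^p+(c-b_1 t)^p\ge c^p+(\dotprod{x}{y})^{2p}$ on $t\in[0,1]$. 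The endpoint $t=0$ reads $c\ge(\dotprod{x}{y})^2$ and the endpoint $t=1$ is exactly the $a<0$ instance of the identity above; both rest on $b_1\ge|a|$, i.e.\ $\norm{x}^2\norm{y}^2+(\dotprod{x}{y})^2\ge2\sum_i x_i^2 y_i^2$, which I would obtain from $\norm{x}^2\norm{y}^2+(\dotprod{x}{y})^2-2\sum_i x_i^2 y_i^2=\sum_{i<j}(x_i y_j+x_j y_i)^2\ge0$. The remaining piece is to control $\phi(t):=(c+|a|t)^p+(c-b_1 t)^p$ between the endpoints: its minimizer on $[0,1]$ is interior, so I would bound $\phi$ at the critical point $t^\ast$ using the stationarity relation $|a|\,\mu_1(t^\ast)^{p-1}=b_1\,\mu_2(t^\ast)^{p-1}$ together with the convexity of $s\mapsto s^p$, while separately handling that $\mu_2$ may be negative and that the parity of $p$ flips the behaviour of $s\mapsto s^{p-1}$. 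I expect this last step --- a global, parity-robust lower bound on $\phi$ rather than a boundary/monotonicity argument --- to be the main obstacle; everything before it is either the Table~\ref{tbl:variances} bookkeeping or the clean $a\ge0$ monotonicity.
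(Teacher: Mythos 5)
Your treatment of the displayed identity and of the $a\ge 0$ optimality claim is correct, and differs from the paper only in bookkeeping. For the identity you telescope $D(\mathbb{V}[\hat{k}_{\rm R}]-\mathbb{V}[\hat{k}_{\rm CtR}])=[(N+2a)^p-(S+2a)^p]-[(N+a)^p-(S+a)^p]$ (with $N=\norm{x}^2\norm{y}^2$, $S=\sum_i x_i^2y_i^2$) and re-index $\sum_m\binom{p}{m}(2^{p-m}-1)b_m a^{p-m}$ into the stated double sum, whereas the paper writes $(\dotprod{x}{y})^{2p}=(a+S)^p$, expands every addend in powers of $a$ (Appendix~\ref{sec:proof-rad-ctr-advantage}) and shows the inner bracket is a nonnegative combination; the two computations are equivalent and give the same expression. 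For optimality when $a\ge0$, your joint minimization over $m=\mathbb{E}[|w|^4]\ge1$ and $\gamma=|\mathbb{E}[w^2]|^2\in[0,1]$ (monotone in $m$ via $|\mu_2|\le\mu_1$, then monotone in $\gamma$ when $a\ge0$) is a more careful version of the paper's argument at the end of Appendix~\ref{sec:gaus-rad-variances}, which minimizes the variance and pseudo-variance contributions separately using Jensen's inequality and the sign of $a$; your $\mu_2$ even carries the $\gamma(N-S)$ term that arises for general entry laws, and since it is nonnegative the conclusion is unchanged.

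The genuine gap is the $a<0$ statement, which you flag but do not close. After the (valid) reduction to $m=1$, what must be shown is exactly $\phi(t)=(c+|a|t)^p+(c-b_1t)^p\ \ge\ c^p+(\dotprod{x}{y})^{2p}$ for all $t=1-\gamma\in[0,1]$, with $c=\norm{x}^2\norm{y}^2+2a\ge0$. The endpoint $t=0$ does follow from $c\ge(\dotprod{x}{y})^2\ge0$, but the endpoint $t=1$ is not a consequence of the identity alone: for $a<0$ the double sum contains terms of both signs (for $p=3$ it collapses to $3b_1a\,(3a+N+S)$, whose sign needs $N+(\dotprod{x}{y})^2\ge 2S$ together with a further step), so even that case requires an argument you have not written out, and the interior of $[0,1]$ is left entirely open. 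The interior genuinely matters here: since $\partial_\gamma\mu_1=a<0$ while $\partial_\gamma\mu_2=b_1\ge0$, the two contributions pull $\gamma$ in opposite directions and the CtR-variance can attain its minimum at an interior $\gamma$, so no term-wise or boundary argument can substitute for the global bound on $\phi$. For comparison, the paper disposes of this case only with the brief remark at the end of Appendix~\ref{sec:gaus-rad-variances} that real Rademacher minimizes $\mathbb{E}[|\hat{k}_{\rm C}(\mat{x},\mat{y})|^2]$ when $a<0$; your parametrization makes precise what a complete proof requires, but as submitted the final claim of the theorem remains unproven.
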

\begin{proof}
    The variance reduction and lowest variance property are proved in Appendix \ref{sec:proof-rad-ctr-advantage} and \ref{sec:gaus-rad-variances}, respectively.
\end{proof}
The theorem tells us that $\Phi_{\rm CtR}$ should be preferred over $\Phi_{\rm R}$ when $a \geq 0$ for two given inputs $\mat{x}, \mat{y} \in \mathbb{R}^d$ and the variance gap increases as $p$ increases. The condition
$$
\textstyle
a = \sum_{i=1}^d \sum_{j' \neq i}^d x_i x_{j'} y_i y_{j'} = (\dotprod{x}{y})^2 - \sum_{i=1}^d x_i^2 y_i^2 \geq 0
$$
always holds if $\mat{x},\mat{y}$ are non-negative or if they are parallel, thus leading to improved worst-case guarantees in Thm.~\ref{thrm:complex-norm-pres} and Cor.~\ref{cor:approx-matrix}. Non-negative data typically appears in applications for polynomial kernels such as categorical and image data, as well as outputs of convolutional neural networks. We carry out corresponding numerical experiments in Section~\ref{sec:experiments}. We also note that CtR-Rademacher sketches outperform real-valued analogs when the condition $a \geq 0$ is not always met as shown in Appendix~\ref{sec:further-var-comp}. This is because $a\geq 0$ always holds for the diagonal elements of the kernel matrix, leading to an inherent bias towards $a \geq 0$.

We can additionally provide the following theorem for Gaussian sketches proved in Appendix~\ref{sec:proof-gaus-ctr-advantage}.

\begin{theorem}[CtR-Gaussian advantage]
    \label{thrm:gauss-ctr-advantage}
    For any $\mat{x}, \mat{y} \in \mathbb{R}^d$, $\mathbb{V}[\hat{k}_{\rm R}(\mat{x}, \mat{y})] - \mathbb{V} [\hat{k}_{\rm CtR}(\mat{x}, \mat{y})]$ is equal to
    \begin{align*}
        \frac{1}{D} \sum_{k=0}^{p-1} \binom{p}{k} (2^k - 1) (\dotprod{x}{y})^{2k} \left( \norm{x}^2 \norm{y}^2 \right)^{p-k}
    \geq 0.
    \end{align*}
\end{theorem}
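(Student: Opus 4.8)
The plan is to prove Theorem~\ref{thrm:gauss-ctr-advantage} by reducing it to an explicit closed-form computation of the two variances $\mathbb{V}[\hat{k}_{\rm R}(\mat{x},\mat{y})]$ and $\mathbb{V}[\hat{k}_{\rm CtR}(\mat{x},\mat{y})]$ for Gaussian sketches, and then simplifying the difference. From Eq.~\ref{eqn:ctr-var} we have $\mathbb{V}[\hat{k}_{\rm CtR}] = \tfrac12(\mathbb{V}[\hat{k}_{\rm C}] + \mathbb{PV}[\hat{k}_{\rm C}])$, and from Table~\ref{tbl:variances} the Gaussian entries are $\mathbb{V}[\hat{k}_{\rm C}] = D^{-1}[(\norm{x}^2\norm{y}^2 + (\dotprod{x}{y})^2)^p - (\dotprod{x}{y})^{2p}]$, $\mathbb{V}[\hat{k}_{\rm R}] = D^{-1}[(\norm{x}^2\norm{y}^2 + 2(\dotprod{x}{y})^2)^p - (\dotprod{x}{y})^{2p}]$, and $\mathbb{PV}[\hat{k}_{\rm C}] = D^{-1}[(2(\dotprod{x}{y})^2)^p - (\dotprod{x}{y})^{2p}]$. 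So the whole theorem is the algebraic identity
\begin{align*}
\mathbb{V}[\hat{k}_{\rm R}] - \mathbb{V}[\hat{k}_{\rm CtR}]
&= \frac{1}{D}\Big[(\norm{x}^2\norm{y}^2 + 2(\dotprod{x}{y})^2)^p \\
&\quad - \tfrac12(\norm{x}^2\norm{y}^2 + (\dotprod{x}{y})^2)^p - \tfrac12(2(\dotprod{x}{y})^2)^p\Big],
\end{align*}
and the claim is that the bracket equals $\sum_{k=0}^{p-1}\binom{p}{k}(2^k-1)(\dotprod{x}{y})^{2k}(\norm{x}^2\norm{y}^2)^{p-k}$.

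First I would establish (or cite from Section~\ref{sec:gaus-rad-variances}, which the proof pointer references) the three closed-form variance/pseudo-variance expressions above; if they are not yet available I would derive them directly from Eq.~\ref{eqn:polynomial-estimator} by writing $\hat{k}(\mat{x},\mat{y}) = \tfrac1D\sum_\ell \prod_{i=1}^p (\mat{w}_{i,\ell}^\top\mat{x})(\overline{\mat{w}_{i,\ell}^\top\mat{y}})$, using independence across $\ell$ to get $\mathbb{V} = \tfrac1D(\mathbb{E}[|Z|^2] - (\dotprod{x}{y})^{2p})$ with $Z = \prod_i (\mat{w}_{i}^\top\mat{x})(\overline{\mat{w}_{i}^\top\mat{y}})$, then factoring across $i$ and invoking the standard second-moment formulas for a single (real/complex) Gaussian bilinear form: for a real standard Gaussian $\mat{g}\in\mathbb{R}^d$, $\mathbb{E}[(\mat{g}^\top\mat{x})^2(\mat{g}^\top\mat{y})^2] = \norm{x}^2\norm{y}^2 + 2(\dotprod{x}{y})^2$, while for a complex standard Gaussian $\mathbb{E}[|\mat{w}^\top\mat{x}|^2|\mat{w}^\top\mat{y}|^2] = \norm{x}^2\norm{y}^2 + (\dotprod{x}{y})^2$ and $\mathbb{E}[(\mat{w}^\top\mat{x})^2(\overline{\mat{w}^\top\mat{y}})^2]$ has modulus/value giving the pseudo-variance term $2(\dotprod{x}{y})^2$ factor. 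Combining via Eq.~\ref{eqn:ctr-var} yields the displayed identity.

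Then the remaining step is pure binomial algebra. Write $u = \norm{x}^2\norm{y}^2$ and $v = (\dotprod{x}{y})^2$, so I must show $(u+2v)^p - \tfrac12(u+v)^p - \tfrac12(2v)^p = \sum_{k=0}^{p-1}\binom{p}{k}(2^k-1)v^k u^{p-k}$. Expanding the left side by the binomial theorem gives $\sum_{k=0}^p \binom{p}{k}(2^k v^k u^{p-k}) - \tfrac12\sum_{k=0}^p\binom{p}{k}v^k u^{p-k} - \tfrac12\cdot 2^p v^p$; collecting the coefficient of $\binom{p}{k}v^k u^{p-k}$ gives $2^k - \tfrac12$ for $0\le k\le p-1$ and $2^p - \tfrac12 - \tfrac12\cdot 2^p = \tfrac12 2^p - \tfrac12 = \tfrac12(2^p-1)$ for $k=p$ — which does not obviously match. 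This signals the one subtlety I expect to be the real obstacle: getting the pseudo-variance bookkeeping exactly right (the factor-of-$\tfrac12$ conventions, whether the $k=p$ term genuinely cancels, and the precise modulus in $\mathbb{E}[(\mat{w}^\top\mat{x})^2(\overline{\mat{w}^\top\mat{y})^2}]$, which may carry a phase that the real part kills). I would resolve it by being careful that $\hat{k}_{\rm CtR}$ is a sum over $D$ terms each with variance $\tfrac12(\sigma^2_{\rm C} + p_{\rm C})$ where $p_{\rm C}$ is the per-term pseudo-variance, and that the per-term pseudo-variance expands so the $k=p$ contribution to $\mathbb{V}[\hat{k}_{\rm R}] - \mathbb{V}[\hat{k}_{\rm CtR}]$ vanishes exactly (equivalently, $2^p - \tfrac12(1) - \tfrac12(2^p) = \tfrac12(2^p - 1)$, which must be cancelled by an additional $\tfrac12(2^p-1)v^p$ I am dropping from the pseudo-variance term — i.e. $\mathbb{PV}[\hat k_{\rm C}]$ contributes $(2v)^p$ not $(2v)^p - v^p$ to $\tfrac12$ of it in a way that makes $k=p$ drop). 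Once the $k=p$ term is verified to cancel, each remaining coefficient is $\binom{p}{k}(2^k-1)$, matching the claim, and nonnegativity is immediate since $2^k - 1 \ge 0$, $v = (\dotprod{x}{y})^2 \ge 0$, and $u = \norm{x}^2\norm{y}^2 \ge 0$ — note that, unlike the Rademacher case, there is no sign condition on the data, which is the point of the theorem.
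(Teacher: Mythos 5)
Your overall strategy (closed-form Gaussian variances plus Eq.~\ref{eqn:ctr-var} and a binomial expansion) is exactly the paper's, but your starting identity is wrong, and the mismatch you notice ($2^k-\tfrac12$ coefficients and a leftover $k=p$ term) is the symptom. The missing ingredient is the dimension-matching convention stated in Section~\ref{sec:real-complex-sketches} and used explicitly in the paper's proof: $\Phi_{\rm CtR}$ has output dimension $D$ because it is built from only $D/2$ complex feature rows. Hence, when you plug the Table~\ref{tbl:variances} formulas into Eq.~\ref{eqn:ctr-var}, you must evaluate $\mathbb{V}[\hat{k}_{\rm C}]$ and $\mathbb{PV}[\hat{k}_{\rm C}]$ with $D/2$ in place of $D$, i.e.\ doubled; the factor $\tfrac12$ in Eq.~\ref{eqn:ctr-var} then cancels and, writing $u=\norm{x}^2\norm{y}^2$, $v=(\dotprod{x}{y})^2$,
\begin{align*}
\mathbb{V}[\hat{k}_{\rm CtR}] = \frac{1}{D}\left[(u+v)^p + (2v)^p - 2v^p\right],
\qquad
\mathbb{V}[\hat{k}_{\rm R}] - \mathbb{V}[\hat{k}_{\rm CtR}]
= \frac{1}{D}\left[\bigl((u+2v)^p - (2v)^p\bigr) - \bigl((u+v)^p - v^p\bigr)\right],
\end{align*}
which by the binomial theorem is precisely $\frac{1}{D}\sum_{k=0}^{p-1}\binom{p}{k}(2^k-1)v^k u^{p-k}$: the $k=p$ terms are removed by the subtracted $(2v)^p$ and $v^p$, and every remaining coefficient is $2^k-1$, not $2^k-\tfrac12$.

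Your proposed repair is not correct either: keeping your normalization but letting $\mathbb{PV}[\hat{k}_{\rm C}]$ contribute $(2v)^p$ instead of $(2v)^p-v^p$ leaves the coefficient of $v^p$ equal to $2^{p-1}-1\neq 0$ and the lower coefficients still equal to $2^k-\tfrac12$, so the claimed formula is never recovered. The pseudo-variance formula in the Table is fine; the only fix needed is the factor of two from halving the number of complex rows. (Your subsidiary moment computations, e.g.\ $\mathbb{E}[(\mat{g}^\top\mat{x})^2(\mat{g}^\top\mat{y})^2]=u+2v$ for real Gaussians and $\mathbb{E}[|\mat{w}^\top\mat{x}|^2|\mat{w}^\top\mat{y}|^2]=u+v$ for complex ones, are consistent with the paper's derivations in Appendix~\ref{sec:gaus-rad-variances}, and the nonnegativity conclusion without any sign condition on the data is the right takeaway.)
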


Thus, regardless of the input data, $\Phi_{\rm CtR}$ should be preferred over $\Phi_{\rm R}$ when using Gaussian sketches. The advantage again increases with $p$.



\section{ProductSRHT}

\label{sec:app-structured-sketches}

In this section, we propose a novel structured Rademacher sketch. Our sketch is called \textit{ProductSRHT} and is closely related to TensorSRHT \citep[Def. 15]{Ahle2020}. A major difference is that we are able to obtain the variance of ProductSRHT in closed form showing its statistical advantages over unstructured sketches. The variance derivation is contained in Appendix~\ref{sec:structured-variance-derivation}. We also embed ProductSRHT into our CtR-framework and compare its variance against CtR-Rademacher sketches in Section~\ref{sec:app-product-srht-var-comparison}.

Both ProductSRHT and TensorSRHT achieve a $\bigO(p(D + d \log d))$ runtime through structured Hadamard matrices that we introduce in the following. Let $n := 2^m$ with $m \in \mathbb{N}$, and $\mat{H}_n \in \{ 1, - 1\}^{n \times n}$ be the unnormalized Hadamard matrix, which is recursively defined as 
\begin{align*}
    \mat{H}_{2n} :=
    \begin{bmatrix}
        \mat{H}_n & \mat{H}_n \\
        \mat{H}_n & -\mat{H}_n
    \end{bmatrix},
    \quad
    \text{with}
    \quad
    \mat{H}_2 :=
    \begin{bmatrix}
        1 & 1\\
        1 & -1
    \end{bmatrix}.
\end{align*}
From now onward, we always use $\mat{H}_d \in \{ 1, -1 \}^{d \times d}$ with $d$ being the dimension of the input vectors, assuming $d = 2^m$ for some $m \in \mathbb{N}$. If $d \not= 2^m$ for any $m$, we pad the input vectors with $0$ until their dimension becomes $2^m$ for some $m$. We have $\mat{H}_d \mat{H}_d^{\top} = \mat{H}_d^{\top} \mat{H}_d = d \mat{I}_d$ and the recursive definition of $\mat{H}_d$ gives rise to the Fast Walsh-Hadamard transform \citep{Fino1976} that multiplies $\mat{H}_d$ with a vector $\mat{a} \in \mathbb{R}^d$ in $\bigO(d \log d)$ instead of $\bigO(d^2)$ time, while the matrix $\mat{H}_d$ does not need to be stored in memory. We drop the subscript $d$ from now for ease of presentation.


We describe (CtR-)ProductSRHT in Alg.~\ref{alg:product-srht-algorithm}. It uses structured matrices $\{\mat{W}_i\}_{i=1}^p$ in Eq.~\ref{eqn:polynomial-estimator}, which are formed through an element-wise multiplication of the rows of $\mat{H}$ with a Rademacher vector, imposing an orthogonality condition on these rows. This ultimately leads to a variance reduction that we analyze next. Finally, the rows of $\{\mat{W}_i\}_{i=1}^p$ are randomly up/downsampled to cover the case $D \neq d$.

\begin{algorithm*}[ht]
\caption{(CtR-) ProductSRHT}
\label{alg:product-srht-algorithm}
\SetAlgoLined
\textbf{Input:} Data point $\mat{x} \in \mathbb{R}^d$, projection dimension $D \in \mathbb{N}$

Pad $\mat{x}$ with zeros so that $d$ becomes a power of $2$,
let $B = \left\lceil \frac{D}{d} \right\rceil$ be the number of stacked projection blocks

\ForAll{$i \in \{1, \dots, p\}$}{
    Generate a diagonal matrix $\mat{D}_i \in \mathbb{C}^{d \times d}$ with diagonal elements:
    
    $\quad (\mat{D}_i)_{1,1}, \dots, (\mat{D}_i)_{d,d} \stackrel{i.i.d.}{\sim} {\rm Unif}(\{1, -1\})$ \hfill (real case)
    
    $\quad (\mat{D}_i)_{1,1}, \dots, (\mat{D}_i)_{d,d} \stackrel{i.i.d.}{\sim} {\rm Unif}(\{1, -1, \iu, -\iu\})$ \hfill (complex case)

    Generate a random sampling matrix $\mat{P}_i \in \{1, 0\}^{D \times d}$ as follows:
    
    $\quad$Let $\mat{p}_i = (p_{i,1}, \dots, p_{i, Bd})^\top \in \mathbb{R}^{Bd}$ be the $B$-times concatenation of $(1, \dots, d)^\top$
    
    $\quad$Randomly permute the indices $1, \dots, Bd$ to $\pi(1), \dots, \pi(Bd)$
    
    $\quad$Set $\mat{P}_i = (\mat{e}_{p_{i,\pi(1)}}, \dots, \mat{e}_{p_{i,\pi(D)}})^{\top}$, where $\mat{e}_{p_{i,\pi(\ell)}} \in \{1, 0\}^d$ is equal to 1 at position $p_{i,\pi(\ell)}$ and 0 elsewhere
    
    Set $\mat{W}_i = \mat{P}_i \mat{H} \mat{D}_i$
}
\textbf{Return:}
$\Phi_{\rm R}(\mat{x})$ using Eq.~\ref{eqn:polynomial-estimator} for ProductSRHT
$\quad \text{or} \quad \Phi_{\rm CtR}(\mat{x})$ using Alg.~\ref{alg:ctr-algorithm} for CtR-ProductSRHT

\end{algorithm*}

\subsection{Variance of CtR-ProductSRHT}
\label{sec:app-product-srht-var-comparison}

A major contribution of this work is to derive the variance of our proposed CtR-ProductSRHT sketch in closed form, which requires the derivation of the variance and pseudo-variance of complex ProductSRHT as shown in Eq.~\ref{eqn:ctr-var}. They are derived in Section \ref{sec:structured-variance-derivation} and we summarize them in Table~\ref{tbl:variances}. We also derive the variance of real ProductSRHT in Section \ref{sec:tensor-srht-var-derivation} and add it to Table~\ref{tbl:variances} for comparison.

ProductSRHT can yield lower variances than Rademacher sketches as it removes the i.i.d. constraint between the $\{\mat{w}_{i, \ell}\}_{\ell=1}^D$ in Eq.~\ref{eqn:sketch-element}. In fact, these vectors are mutually orthogonal for two $\ell \neq \ell'$ when the $p_{i, \pi(\ell)}$-th and the $p_{i, \pi(\ell')}$-th column of $\mat{H}$ are distinct, since $\mat{H}$ has orthogonal rows and columns. This dependence introduces the term
$$
\textstyle
    {\rm R}_{\rm Var. / PVar.} := (1-1/D) [(\dotprod{x}{y})^{2p} - (\textrm{C}_{\rm Var. / PVar.})^p]
$$
that is subtracted from the original Rademacher variance $\mathbb{V}^{(p)}_{\rm Rad.}$ and pseudo-variance $\mathbb{PV}^{(p)}_{\rm Rad}$, respectively, as shown in Table~\ref{tbl:variances}, where we also define $\textrm{C}_{\rm Var.}$ and $\textrm{C}_{\rm PVar.}$.

If $p$ is odd, $(\textrm{C}_{\rm Var.})^p \leq (\dotprod{x}{y})^{2p}$ holds because $\mathbb{V}_{\rm Rad.}^{(1)} \geq 0$ and therefore ${\rm R}_{\rm Var.} \geq 0$ holds. In this case, the variance of complex/real ProductSRHT is upper-bounded by the complex/real Rademacher variance $\mathbb{V}^{(p)}_{\rm Rad.} \geq 0$.

If we further have $\mathbb{PV}_{\rm Rad.}^{(1)} = \sum_{i=1}^d \sum_{j \neq i}^d x_i x_j y_i y_j \geq 0$, the pseudo-variance of complex ProductSRHT is also upper-bounded by the Rademacher pseudo-variance. This is because $0 \leq ({\rm C}_{\rm PVar.})^p \leq (\dotprod{x}{y})^{2p}$ and ${\rm R}_{\rm PVar.} \geq 0$ hold. Note that this is exactly the same condition as $a \geq 0$ in Thm.~\ref{thrm:rad-ctr-advantage}. CtR-ProductSRHT thus has a lower pseudo-variance than CtR-Rademacher sketches exactly when CtR-Rademacher sketches are better than real ones.

As both the variance and the pseudo-variance of complex ProductSRHT are upper-bounded by the ones of complex Rademacher sketches under the above conditions, CtR-ProductSRHT is guaranteed to have a lower variance than CtR-Rademacher sketches through Eq.~\ref{eqn:ctr-var} in this case. Moreover, CtR-ProductSRHT inherits the variance reduction of CtR-Rademacher sketches over their real analogs because the Rademacher variance and pseudo-variance both enter the ones of complex ProductSRHT (see Table~\ref{tbl:variances}).

\section{RELATED WORK}

\label{sec:related}

In this work, we study the sketches for tensor products presented in Section~\ref{sec:real-complex-sketches} building on previous works by \citet{Kar2012, Hamid2014, Meister2019, Ahle2020, wacker2022}. However, there exist alternatives that we have not mentioned so far.

\citet{Pham2013} have proposed \textbf{TensorSketch}, which is a convolution of CountSketches \citep{Charikar2002}.
TensorSketch  requires $D=\bigO(3^p s_{\lambda}(\mat{K})^2 / (\delta \epsilon^{2}))$  to satisfy Eq.~\ref{eqn:spectral-order}  \citep{Avron2014} and thus has weaker guarantees w.r.t. $\delta$ and $p$ than CtR-Gaussian/Rademacher sketches.
There is also no closed form variance formula available for this sketch\footnote{\citet{Pham2013} contains a variance formula, but makes the simplifying assumption that TensorSketch has the same variance as CountSketch applied to tensorized inputs. \citet{Avron2014} conduct a more careful analysis to obtain an upper bound.}.
Yet, it achieves state-of-the-art performance in practice as we show in Section~\ref{sec:experiments}. It is also faster than Gaussian/Rademacher sketches taking only $\bigO(p(D \log D + d))$ instead of $\bigO(pdD)$ via the {\em Fast Fourier Transform}.

Structured Rademacher sketches based on the {\em Subsampled Randomized Hadamard Transform (SRHT)} \citep{Tropp2011} have been proposed by \citet{Hamid2014}, and a similar sketch called \textbf{TensorSRHT} by \citet{Ahle2020}, referring to the fact that SRHT is implicitly applied to a tensorized version of the input. Both sketches use the Fast Walsh-Hadamard Transform \citep{Fino1976} for faster projections. Our (CtR-) ProductSRHT sketch is closely related. Notably, both TensorSRHT and our sketch have a runtime of $\bigO(p(d \log d + D))$ and are thus faster than TensorSketch when $D>d$. Unlike previous works, we derive the variance for our ProductSRHT sketch in closed form, showing statistical advantages over Rademacher sketches.


Recent research has focused on \textit{meta-algorithms} that aim to improve the approximation error of existing sketches \citep{Hamid2014, Ahle2020, Song21c}. 
In particular, \citet{Ahle2020} managed to reduce the exponential dependence of $D$ on $p$ to polynomial by using a hierarchical construction.
The sketches proposed in this work are compatible with these methods and can serve as their base sketches. In fact, we combine the \textbf{hierarchical} construction by \citet{Ahle2020} and \textbf{CRAFT maps} by \citep{Hamid2014} with CtR-sketches in Section~\ref{sec:experiments}.

A fundamentally different approach are \textbf{Spherical Random Features (SRF)} \citep{Pennington2015} that require a preprocessing step and yield biased polynomial kernel approximations for data on the unit-sphere. SRF can only be applied to inhomogeneous polynomial kernels and work well for large $p$. We adapt our experiments in Section~\ref{sec:experiments} accordingly to accommodate a comparison against SRF.

\section{EXPERIMENTS}

\label{sec:experiments}

In this section, we carry out a systematic comparison of the CtR-sketches presented in this work against their real-valued analogs as well as TensorSketch and SRF. We also combine CtR-ProductSRHT and TensorSketch with \citet[Alg. 1]{Ahle2020} denoted as Hierarchical TensorSketch/CtR-ProductSRHT. Moreover, we add CRAFT maps \citep{Hamid2014} denoted as CRAFT TensorSketch/CtR-ProductSRHT to this comparison.

\subsection{Experimental Setup}

\label{sec:experimental-setup}

\paragraph{Data sets} We use MNIST \citep{lecun1998}, and convolutional features\footnote{For CIFAR-10 (CUB-200), we use convolutional outputs of a ResNet34 \citep{He2016} (VGG-M \citep{ChatfieldSVZ14}) pretrained on ImageNet \citep{Russakovsky2015}.} for CIFAR-10 \citep{krizhevsky2009learning} and CUB-200 \citep{welinder2010} as our data sets for the evaluation in this section. All three data sets contain only non-negative inputs to ensure that the condition of Thm.~\ref{thrm:rad-ctr-advantage} is met. Additional experiments with zero-centered data and more data sets are contained in Appendix~\ref{sec:app-further-experiments}.

\paragraph{Target kernel and its approximation}

Except for Section~\ref{sec:online-learning-experiment}, we follow \citet{Pennington2015} and restrict our experiments to the polynomial kernel
$$
    k(\mat{x}, \mat{y}) = \left(\left(1-2/a^2\right) + 2/a^2 \, \dotprod{x}{y}\right)^p, \quad \text{with} \quad a \geq 2,
$$
with for $\mat{x}, \mat{y} \in \mathbb{R}^d$ having unit-norm, as this allows a comparison against SRF. In particular, we set $a=2$ to assign the largest weight possible to high polynomial degrees in the binomial expansion of the kernel, thereby making its approximation more challenging. Further results for non unit-norm data are contained in Section \ref{sec:online-learning-experiment}.
We denote by $D$ the feature map dimension that we ensure to be equal between CtR- and non-CtR sketches. For CRAFT maps, the intermediate up-projection dimension is fixed to $E=2^{15}$.
We measure the kernel approximation quality through the relative Frobenius norm error, which is defined as $\|\hat{\mat{K}} - \mat{K}\|_F / \|\mat{K}\|_F$, where $\hat{\mat{K}}$ is the random feature approximation of the exact kernel matrix $\mat{K}$ evaluated on a subset of the test data of size $1000$ that is resampled for each seed used in these experiments.

All time benchmarks are run on an NVIDIA P100 GPU and PyTorch 1.10 \citep{paszke2019pytorch} with CUDA 10.2.

\subsection{Kernel Approximation and GP Classification}

\label{sec:gp-classification}

We start by comparing the sketches discussed in this work for the downstream task of Gaussian Process (GP) classification. We model GP classification as a multi-class GP regression problem with transformed labels \citep{Milios2018}, for which we obtain closed-form solutions to measure the effects of the random feature approximations in isolation without the need for convergence verification.

Fig.~\ref{fig:final-comparison} shows the result of this comparison. CtR-sketches generally result in lower kernel approximation errors than their real-valued analogs, with an increased effect for a larger degree $p=7$. Overall, we see that ProductSRHT performs better than Gaussian and Rademacher sketches, and comparable to TensorSketch. The hierarchical extension of CtR-ProductSRHT/TensorSketch only improves results for $p=7$, and performs worse for $p=3$. CRAFT maps on the other hand always improve results.

Although similar trends can be observed for test errors, differences between the methods become only strongly noticeable for large $p=7$. This makes sense, since all sketches become less optimal for larger $p$, amplifying their difference in statistical efficiency. For $D=2^9$ and $p=7$, CtR-sketches yield around 5\% / 2.5\% / 1\% improvement for Gaussian / Rademacher / ProductSRHT, respectively. The absolute error difference among all methods decreases for larger $D$, but their relative improvement remains.

Kernel approximations for SRF are generally biased with a decreasing bias for larger $p$ \citep[Section 4]{Pennington2015}. We thus see that the relative Frobenius norm error for SRF stagnates for $p=3$ when $D$ is large, while the one for the other sketches continues decreasing. Test errors for SRF are also worse in this case. For $p=7$, SRF kernel approximation errors tend to be lower than for CtR-ProductSRHT/TensorSketch and are comparable to their CRAFT extensions (slightly worse for MNIST, slightly better for CIFAR-10). SRF test errors are slightly worse than for the CRAFT sketches. In summary, CRAFT CtR-ProductSRHT and CRAFT TensorSketch yield the lowest test errors and kernel approximation errors (except for CIFAR-10 and $p=7$, where SRF has lower errors).


\begin{figure*}[ht]
\begin{center}
\centerline{\includegraphics[width=0.98\linewidth]{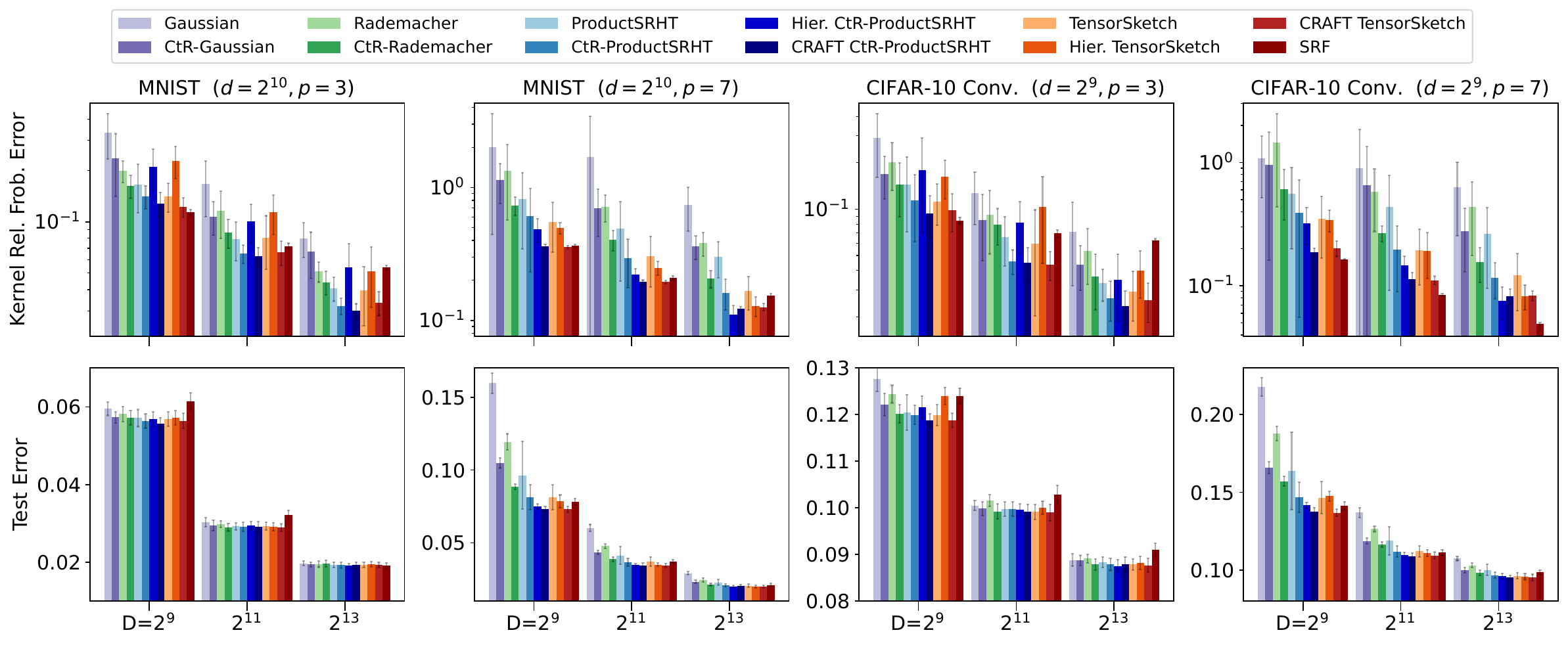}}
\caption{
MNIST and CIFAR-10 comparison for $p=3$ and $p=7$ averaged over 20 seeds. Due to space limitations, we only show results for $D \in \{ 2^9, 2^{11}, 2^{13} \}$. Results for $D \in \{ 2^i \}_{i=8}^{13}$ and more data sets are contained in Appendix~\ref{sec:app-further-experiments}.
}
\label{fig:final-comparison}
\end{center}
\vspace{-2em}
\end{figure*}

\subsection{Feature Construction Time Comparison}

In the following, we carry out a feature construction time comparison of the methods presented in this work against TensorSketch that has a time complexity of $\bigO(p(D \log D + D))$ and SRF.
Recall that our proposed ProductSRHT approach in Section \ref{sec:app-structured-sketches} has a time complexity of $\bigO(p(d \log d + D))$ and is thus faster in theory when $D > d$. The left plot in Fig.~\ref{fig:feature-construction-time} shows that this is also the case in practice.

The construction times of real ProductSRHT and CtR-ProductSRHT have a smaller slope with respect to $D$ than the other sketches leading to the lowest feature construction times together with SRF, in particular when $D \gg d$. There is a small computational overhead for CtR-ProductSRHT compared to ProductSRHT because CtR-ProductSRHT initially requires two Hadamard-projections (real and imaginary parts), but uses the same upsampling matrix leading to the same scaling property with respect to $D$. The right plot of Fig.~\ref{fig:feature-construction-time} shows that SRF kernel approximations are strongly biased, making CtR-ProductSRHT the most accurate sketch for $p=3$.

Faster feature construction times matter in practice. Although CRAFT maps enjoy a strong performance in Section~\ref{sec:gp-classification}, they can be expensive to compute due to the up-projection to $E=2^{15}$ before down-projecting to $D$. Table~\ref{tbl:feature-downstream-ratio} shows the ratio of feature construction time against solving the downstream GP model for MNIST. The ratio decays with larger $D$ since solving the downstream model scales as $\bigO(n D^2 + D^3)$. For small $D$, the feature construction may dominate however. We also see that (CtR-) ProductSRHT is generally faster than TensorSketch. Moreover, feature construction times can heavily influence online learning scenarios in which the optimization algorithm requires a forward and backward pass through the feature map for every iteration.

\begin{figure}[ht]
\begin{center}
\centerline{\includegraphics[width=0.95\linewidth]{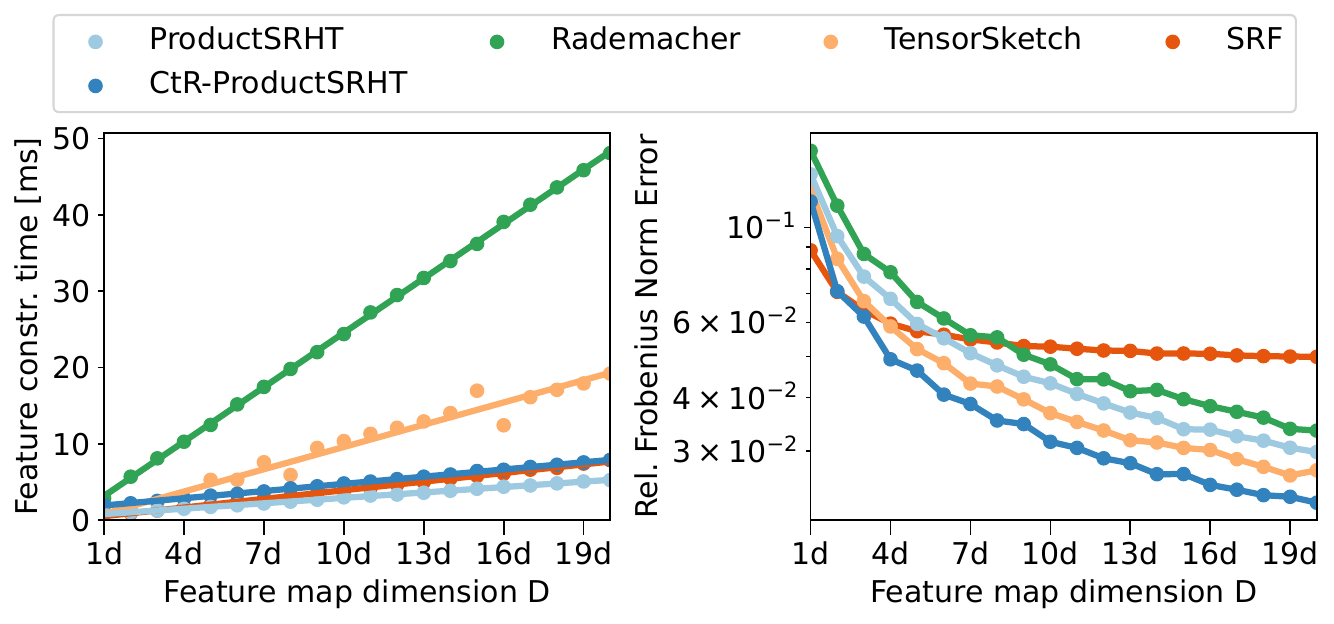}}
\caption{
(Left) Feature construction time, (right) kernel approximation error, against feature map dimension $D$ for $p=3$ on 1000 random MNIST samples.
}
\label{fig:feature-construction-time}
\end{center}
\vspace{-2em}
\end{figure}

\begin{table}[t]
\caption{Projection time / downstream time ratio ($p=3$).}
\label{tbl:feature-downstream-ratio}
\begin{center}
\small
\resizebox{1 \linewidth}{!}{
\begin{tabular}{c | c | c c c | c c}
\toprule
\multirow{2}{*}{\textbf{D}} &
\multirow{2}{*}{\textbf{SRF}} &
\textbf{Prod.} &
\multirow{2}{*}{\textbf{+ CtR}} &
\textbf{+ CtR} &
\textbf{Tensor-} &
\multirow{2}{*}{\textbf{+ CRAFT}}
\\
& & \textbf{SRHT} & & \textbf{+ CRAFT} & \textbf{Sketch} &
\\
\midrule

$2^9$ & 3.51 & 2.96 & 3.32 & 6.09 & 5.13 & 9.98 \\

$2^{11}$ & 0.40 & 0.33 & 0.38 & 0.67 & 0.58 & 1.08 \\

$2^{13}$ & 0.04 & 0.03 & 0.04 & 0.06 & 0.06 & 0.10 \\

\bottomrule

\end{tabular}
}
\end{center}
\vspace{-2em}
\end{table}

\subsection{Online Learning for Fine-Grained Recognition}

\label{sec:online-learning-experiment}

We follow \citet{Gao2016} and carry out an online learning experiment using convolutional features from the CUB-200 data set (see Section~\ref{sec:experimental-setup}). The task of fine-grained visual recognition is about the classification of pictures \textit{within} their subordinate categories (200 bird species in this case). Here feature maps of low-degree polynomial kernels have proven very effective, but lead to classification layers with too many parameters due to high-dimensional inputs.

\citet{Gao2016} therefore use TensorSketch to reduce the dimension of explicit polynomial feature maps. We compare our methods against theirs and against SRF in Appendix~\ref{sec:app-fine-grained}. (CtR-) ProductSRHT achieves the same test errors as TensorSketch, while being faster, especially when using CRAFT maps (almost 2x speedup). SRF is fast, but achieves only 75\% test error compared to 30\% achieved by the other methods. This is because SRF requires the unit-normalization of the convolutional features, hence loses important information. Polynomial kernels have thus important applications \textit{beyond unit-normalized} data, which is neglected in \citet{Pennington2015}.

\subsection{Error Bound Comparison}

Lastly, we compare the empirical error probability of (CtR-) Rademacher/ProductSRHT against TensorSketch\footnote{We did not add SRF to this comparison because $\|\Phi(\mat{x})\|_2^2 = \Phi(\mat{x})^\top \Phi(\mat{x}) = \frac{1}{D} \sum_{\ell=1}^D \cos(\mat{\omega}_{\ell}^\top (\mat{x} - \mat{x})) = 1$ has zero variance.} for two fixed vectors $\mat{x} \in \mathbb{R}^d$ with a different maximum-to-norm-ratio $r := \norm{x}_{\infty} / \norm{x}_2$. TensorSketch can be seen as a CountSketch \citep{Charikar2002} in a tensorized vector space \citep{Pham2013}. \citet[Thm. 3]{Weinberger2009} show that the error probability of CountSketches is heavily influenced by $r$ due to hashing collisions.

This is also the case for TensorSketch as shown in Fig.~\ref{fig:empirical-error-prob}, i.e., it converges much slower for $r=0.58$ with an empirical error probability that is two orders of magnitude larger than for our methods for large $D$. For an extended discussion, see \citet[Section 4.1]{Meister2019}.

\begin{figure}[t]
\centerline{\includegraphics[width=0.95\linewidth]{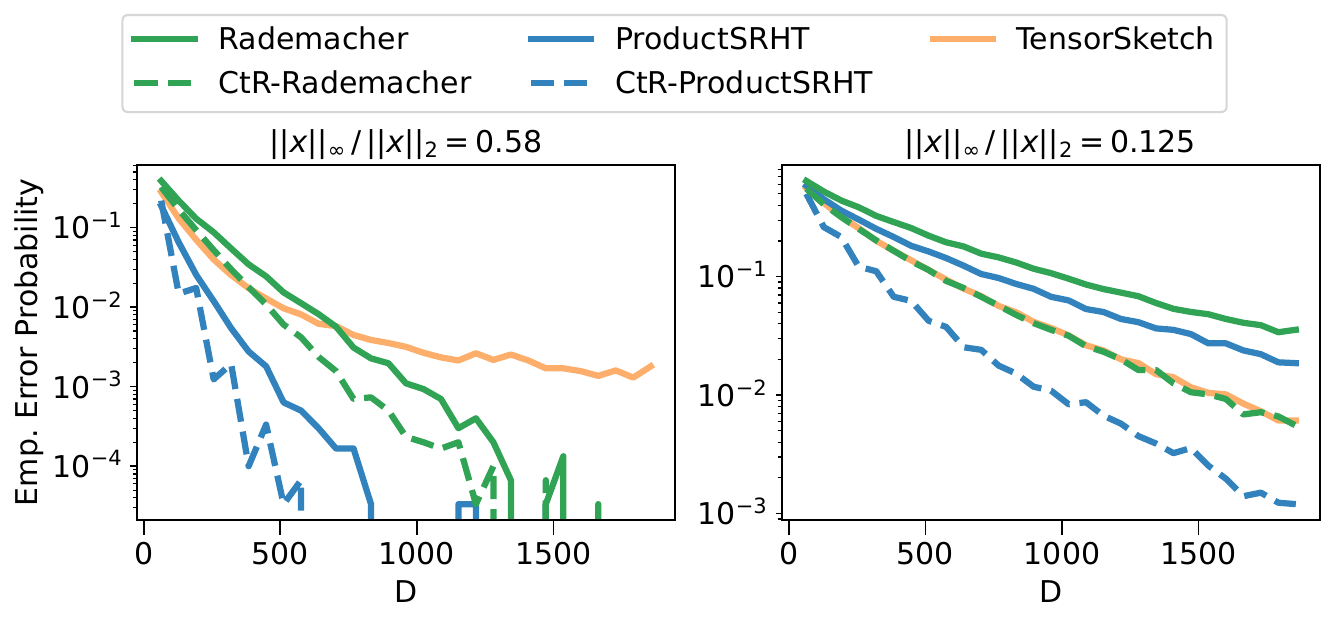}}
\caption{Emp. ${\rm Pr} \{ | \| \mat{S} \mat{x}^{\otimes p} \|_2^2 - \| \mat{x}^{\otimes p} \|_2^2 | \geq \epsilon \| \mat{x}^{\otimes p} \|_2^2 \}$ for $\epsilon = 0.25$, $d=64$, $p=2$. (Left) $\mat{x} = (\sqrt{d},\sqrt{d},1, \dots, 1)^\top$, $r=0.58$; (Right) $\mat{x} = (1, \dots, 1)^\top$, $r=0.125$.}
\label{fig:empirical-error-prob}
\vspace{-0.5em}
\end{figure}





\section{CONCLUSION}

The goal of research on random projections for tensor products is to achieve the optimal Johnson-Lindenstrauss embedding dimension of $D = \Theta(\epsilon^{-2} \log(1/\delta))$, i.e., without dependence on $p$, and without high computational costs. A recent work by \citet{Ahle2020} has improved the unwanted exponential dependence of $D=\bigO(3^p)$ to polynomial by using a hierarchical construction of well-known base sketches.
However, we showed empirically in Section~\ref{sec:experiments} that their method only yields improvements for large $p$ and yields worse performance for small $p$.

In this work, we took a different approach by modifying the base sketch sampling distribution directly, thus achieving $D=\bigO(2^p)$ instead of $D=\bigO(3^p)$. Although still not being optimal, our method already leads to improvements from $p \geq 2$ and can be combined with other meta-algorithms. Moreover, we achieved state-of-the-art results in terms of accuracy \textit{and} speed in our experiments. We thus uncovered an exciting angle of improvement that can be further leveraged in future research.

\subsubsection*{Acknowledgements}
We thank Motonobu Kanagawa for helpful discussions. MF gratefully acknowledges support from the AXA Research Fund and the Agence Nationale de la Recherche (grant ANR-18-CE46-0002 and ANR-19-P3IA-0002). RO started this work while interning at the Criteo AI lab in Paris.


\bibliography{bibliography}
\bibliographystyle{icml_template/icml2021}

\appendix

\onecolumn






\section*{STRUCTURE OF THE APPENDIX}

\begin{itemize}
    \item Appendix~\ref{sec:app-concentration} contains proofs for the concentration results of Sections and \ref{sec:complex-bounds} and \ref{sec:ctr-concentration} of the main paper.
    \item Appendix~\ref{sec:app-ctr-variances} contains variance derivations for non-structured CtR-sketches and proofs for the theorems in Section~\ref{sec:ctr-variance}.
    \item Appendix~\ref{sec:app-structured-variances} extends these variance derivations to (CtR-)ProductSRHT.
    \item Appendix~\ref{sec:app-further-experiments} contains additional numerical experiments to complement Section~\ref{sec:experiments} of the main paper.
\end{itemize}

\section{CONCENTRATION RESULTS}

\label{sec:app-concentration}

This section contains the proofs of Sections and \ref{sec:complex-bounds} and \ref{sec:ctr-concentration} of the main paper. Many results in this section build on top of the work by \citet{Ahle2020}. More precisely, we extend \citet[Lem. 9, 11, 19, Thm. 42]{Ahle2020} to the case of CtR-sketches and show the improvements that our methods bring about. In particular, we derive absolute moments for CtR-sketches and show that these lead to sharper results.

\subsection{Derivation of Moment Bounds and Proof of Lemma~\ref{lemma:abs-moment-bound}}

\label{sec:app-complex-moments}

We restate Lem.~\ref{lemma:abs-moment-bound} here for ease of presentation. It is a complex extension of \citet[Lem. 19]{Ahle2020}.

\begin{lemma}[Absolute Moment Bound]
\label{lemma:app-absolute-moment-bound}
Let $t \geq 2, p \in \mathbb{N}$, $C_t > 0$, $\mat{x} \in \mathbb{R}^{d_1 \cdots d_p}$ and $\mat{w}_i \in \mathbb{C}^{d_i}$ for $i=1,\dots,p$.
If $\| \mat{w}_i^\top \mat{a} \|_{L^t} \leq C_t \| \mat{a} \|_2$ for all $\mat{a} \in \mathbb{R}^{d_i}$ and $\{ \mat{w}_{i} \}_{i=1}^p$, then the following holds:
$\| (\otimes_{i=1}^p \mat{w}_i)^\top \mat{x} \|_{L^t} \leq C_t^p \| \mat{x} \|_2$.
\end{lemma}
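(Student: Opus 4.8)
The plan is to induct on $p$. The base case $p=1$ is precisely the hypothesis $\|\mat{w}_1^\top \mat{a}\|_{L^t} \le C_t \|\mat{a}\|_2$ applied with $\mat{a} = \mat{x} \in \mathbb{R}^{d_1}$. For the inductive step, suppose the claim holds for $p-1$; I want to bound $\|(\otimes_{i=1}^p \mat{w}_i)^\top \mat{x}\|_{L^t}$ where $\mat{x} \in \mathbb{R}^{d_1 \cdots d_p}$. The key is to reshape $\mat{x}$ into a $d_1 \times (d_2\cdots d_p)$ matrix $\mat{X}$ so that, using the mixed-product property (Eq.~\ref{eqn:sketch-element}) and the identity $(\mat{a}\otimes \mat{b})^\top \mathrm{vec}(\mat{M}) = \mat{a}^\top \mat{M} \mat{b}$, we can write
\begin{align*}
    (\otimes_{i=1}^p \mat{w}_i)^\top \mat{x}
    = \mat{w}_1^\top \mat{X} (\otimes_{i=2}^p \mat{w}_i)
    = \sum_{j=1}^{d_1} (\mat{w}_1)_j \, \big(\mat{X}_{j,:}\big)^\top \!(\otimes_{i=2}^p \mat{w}_i),
\end{align*}
where $\mat{X}_{j,:}$ is the $j$-th row of $\mat{X}$, a vector in $\mathbb{R}^{d_2\cdots d_p}$. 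Conditioning on $\{\mat{w}_i\}_{i=2}^p$, the inner vector $\mat{b} := \mat{X}^\top(\otimes_{i=2}^p \mat{w}_i) \in \mathbb{C}^{d_1}$ is fixed, and the outer sum is $\mat{w}_1^\top \mat{b}$. Because the hypothesis on $\mat{w}_1$ is stated only for \emph{real} $\mat{a}$, I would split $\mat{b}$ into real and imaginary parts, $\mat{b} = \mathrm{Re}\{\mat{b}\} + \iu\,\mathrm{Im}\{\mat{b}\}$, apply the moment bound to each real vector, and recombine via the triangle inequality in $L^t$; a cleaner route is to note that $\|\mat{w}_1^\top \mat{b}\|_{L^t} \le C_t \|\mat{b}\|_2$ still holds for complex $\mat{b}$ by this argument (or is available directly from the $t=2k$ computations in the appendix), where here the conditional $L^t$-norm is taken over $\mat{w}_1$ only.

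With that in hand, I take the conditional expectation over $\mat{w}_1$ first:
\begin{align*}
    \mathbb{E}_{\mat{w}_1}\!\left[\,\big|(\otimes_{i=1}^p \mat{w}_i)^\top \mat{x}\big|^t \,\Big|\, \mat{w}_2,\dots,\mat{w}_p\right]
    \le C_t^t \, \big\| \mat{X}^\top (\otimes_{i=2}^p \mat{w}_i) \big\|_2^t .
\end{align*}
Now I need to control $\|\mat{X}^\top (\otimes_{i=2}^p \mat{w}_i)\|_2$ in $L^t$ over the remaining weights. Writing $\mat{X} = (\mat{x}^{(1)} \mid \cdots \mid \mat{x}^{(d_1)})$ by columns would be the wrong slicing; instead I use that $\|\mat{X}^\top \mat{v}\|_2^2 = \sum_{r} \big((\mat{X}^{(r)})^\top \mat{v}\big)^2$ where $\mat{X}^{(r)}$ ranges over the columns of $\mat{X}$ (equivalently rows of $\mat{X}^\top$), each $\mat{X}^{(r)} \in \mathbb{R}^{d_2\cdots d_p}$, and $\mat{v} = \otimes_{i=2}^p\mat{w}_i$. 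Hence
\begin{align*}
    \big\| \|\mat{X}^\top \mat{v}\|_2 \big\|_{L^t}
    = \big\| \|\mat{X}^\top \mat{v}\|_2^2 \big\|_{L^{t/2}}^{1/2}
    \le \Big( \sum_r \big\| \big((\mat{X}^{(r)})^\top \mat{v}\big)^2 \big\|_{L^{t/2}} \Big)^{1/2}
    = \Big( \sum_r \big\| (\mat{X}^{(r)})^\top \mat{v} \big\|_{L^{t}}^2 \Big)^{1/2},
\end{align*}
using Minkowski's inequality in $L^{t/2}$ (valid since $t \ge 2$). By the induction hypothesis applied to each $\mat{X}^{(r)}$ with the $p-1$ weights $\{\mat{w}_i\}_{i=2}^p$, $\|(\mat{X}^{(r)})^\top \mat{v}\|_{L^t} \le C_t^{\,p-1}\|\mat{X}^{(r)}\|_2$, so the sum is bounded by $C_t^{\,2(p-1)} \sum_r \|\mat{X}^{(r)}\|_2^2 = C_t^{\,2(p-1)} \|\mat{x}\|_2^2$. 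Taking square roots and combining with the conditional bound gives $\|(\otimes_{i=1}^p \mat{w}_i)^\top \mat{x}\|_{L^t} \le C_t \cdot C_t^{\,p-1}\|\mat{x}\|_2 = C_t^p \|\mat{x}\|_2$, completing the induction.

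The main obstacle I anticipate is the handling of complex $\mat{b}$ in the conditioning step: the moment hypothesis is stated for real test vectors only, so one must either verify that $\|\mat{w}_1^\top \mat{b}\|_{L^t}\le C_t\|\mat{b}\|_2$ persists for complex $\mat{b}$ (which it does for the Gaussian/Rademacher cases by rotational/phase symmetry, and I would argue this explicitly from the $t=2k$ moment formulas derived in the appendix), or reorganize the induction to keep all intermediate vectors real. A secondary technical point is being careful that the Minkowski step requires $t/2 \ge 1$, i.e.\ $t \ge 2$, which is exactly the standing assumption. The tightness of the constants $C_t$ for $t = 2k$ is then a separate, direct computation of $\mathbb{E}[|Z|^{2k}]$ for $Z$ a real/complex standard normal (equivalently Rademacher, by matching moments up to order $2k$), which I would relegate to the moment-generating-function / Gamma-integral calculation in Appendix~\ref{sec:app-complex-moments}.
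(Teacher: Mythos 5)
Your proof is correct and is essentially the paper's own argument in mirror image: the paper peels off the last factor $\mat{w}_p$, conditions on it, applies the inductive hypothesis to the resulting (complex) contracted vector and then Minkowski's inequality in $L^{t/2}$ together with the single-vector bound on the real slices of $\mat{x}$, whereas you peel off $\mat{w}_1$, condition on the remaining weights, apply the single-vector bound to the complex contraction $\mat{b}$ and then the same Minkowski-in-$L^{t/2}$ step with the inductive hypothesis on the real slices. The complex-test-vector subtlety you flag is genuine but equally present (and silently passed over) in the paper, whose induction step applies the hypothesis to the complex vector $B$; your preferred resolution---verifying $\| \mat{w}_1^\top \mat{b} \|_{L^t} \leq C_t \|\mat{b}\|_2$ for complex $\mat{b}$ directly for the Gaussian/Rademacher distributions---is the right one, since your fallback of splitting $\mat{b}$ into real and imaginary parts and recombining with the triangle inequality would cost a factor $\sqrt{2}$ at that step and thus lose the tight constant $C_t^p$ (the reshaping typos, $\mat{X}$ versus $\mat{X}^\top$, are immaterial).
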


Before proving Lem.~\ref{lemma:app-absolute-moment-bound}, we start by deriving the moment bounds $C_t$ for (complex) Gaussian and Rademacher sketches.

\subsubsection{Moment Bounds for Gaussian and Rademacher Sketches}

W.l.o.g., we assume $\norm{a}_2=1$, since both sides of $\| \mat{w}_i^\top \mat{a} \|_{L^t} = C_t \norm{a}_2$ can be divided by $\norm{a}_2$.

\paragraph{Gaussian distribution}

For the simpler Gaussian case, we obtain $C_t$ that is not only a tight upper bound, but an exact value for the $t$-th moment. That is, we have $\| \mat{w}_i^\top \mat{a} \|_{L^t} = C_t \norm{a}_2$. Moreover, we obtain values for $t > -1 \in \mathbb{R}$, and not only for even integers $t$.

We start with the real case, i.e., $\mat{w}_i \sim \mathcal{N}(\mat{0}, \mat{I}_{d_i})$. Then $\mat{w}_i^\top \mat{a} \sim \mathcal{N}(0, 1)$, since Gaussians are closed under linear transformations. The $t$-th absolute moment of a Gaussian random variable is well-known. For $t > -1$, it is
\begin{align*}
    \text{(Real case)} \quad \quad
    \mathbb{E} [|(\mat{w}_i)^\top \mat{a} |^t]
    = 2^{t/2} \Gamma \left(\frac{t+1}{2} \right)/\sqrt{\pi}
    \quad \iff \quad
    \| (\mat{w}_i)^\top \mat{a} \|_{L^t}
    = C_t = \sqrt{2} \pi^{-1/(2t)} \Gamma \left(\frac{t+1}{2} \right)^{1/t},
\end{align*}
where $\Gamma(\cdot)$ is the Gamma function.

For the complex case, we have $\mat{w}_i \sim \mathcal{CN}(\mat{0}, \mat{I}_{d_i})$, which is equivalent to $\mat{w}_i = 1/\sqrt{2} (\mat{u}_i + \iu \, \mat{v}_i)$ with $\mat{u}_i, \mat{v}_i \sim \mathcal{N}(\mat{0}, \mat{I}_{d_i})$ being independent. Then we have
\begin{align}
    \label{eqn:comp-gauss-expansion}
    \mathbb{E} [|(\mat{w}_i)^\top \mat{a} |^t]
    = \mathbb{E} [| \sqrt{1/2} (\dotprodi{u}{a}{i} + \iu \, \dotprodi{v}{a}{i}) |^t]
    = (1/2)^{t/2} \, \mathbb{E} [(| \dotprodi{u}{a}{i} |^2 + |\dotprodi{v}{a}{i}|^2)^{t/2}].
\end{align}
Now we observe that $\dotprodi{u}{a}{i}, \dotprodi{v}{a}{i} \sim \mathcal{N}(0, 1)$. So $| \dotprodi{u}{a}{i} |^2 + |\dotprodi{v}{a}{i}|^2$ is chi-square distributed with two degrees of freedom. The $t'$-th moment of a chi-square distributed variable $X$ with $k \in \mathbb{N}$ degrees of freedom is \citep[Thm. 3.3.2.]{hogg2005}:
\begin{align*}
    \mathbb{E}[X^{t'}]
    = 2^{t'} \Gamma(t' + k/2) / \Gamma(k/2),
    \quad \text{if} \quad t' > -k/2,
\end{align*}
By setting $k=2, t'=t/2$ and noting $\Gamma(1)=1$, we obtain
\begin{align*}
    \text{(Complex case)} \quad \quad
    \| (\mat{w}_i)^\top \mat{a} \|_{L^t}
    = C_t = \Gamma(t/2 + 1)^{1/t},
\end{align*}
where $t > -2$ covers $t \geq 2$ required by the lemma.

\paragraph{Rademacher distribution}

For the real case, we can directly apply Khintchine's inequality stating $\| (\mat{w}_i)^\top \mat{a} \|_{L^t} \leq C_t \norm{a}_2$ with $0 < t < \infty$. \citet{Haagerup1981} derived tight values for $C_t$ yielding
\begin{align*}
    \text{(Real case)} \quad
    \| (\mat{w}_i)^\top \mat{a} \|_{L^t}
    \leq C_t
    = \left\{
        \begin{array}{ll}
        1 & \text{for} \quad 0 < t \leq 2 \\
        \sqrt{2} \pi^{-1/(2t)} \Gamma \left(\frac{t+1}{2} \right)^{1/t} & \text{for} \quad t > 2.
        \end{array}
    \right.
\end{align*}
For the complex Rademacher case, we note that $|\exp(\iu \pi/4) \mat{w}_i^\top \mat{a}| = |\mat{w}_i^\top \mat{a}|$ since $|\exp(\iu \pi/4)| = 1$. The elements of $\mat{w}'_i := \exp(\iu \pi/4) \mat{w}_i$ are then sampled from
$$
{\rm Unif} (\{\exp(\iu \pi/4), -\exp(\iu \pi/4), \iu \exp(\iu \pi/4), -\iu \exp(\iu \pi/4)\})
= \frac{1}{\sqrt{2}}
{\rm Unif}(\{1 + \iu, -1 - \iu, -1 + \iu, 1 - \iu\}).
$$
We can thus rewrite $\mat{w}'_i = 1/\sqrt{2} (\mat{u}_i + \iu \mat{v}_i)$ with $\mat{u}_i, \mat{v}_i$ having elements sampled i.i.d. from ${\rm Unif}(\{1, -1\})$. For $t=2k, k \in \mathbb{N}$, we can further expand
\begin{align}
    \label{eqn:complex-rad-expansion}
    \mathbb{E} [|\mat{w}_i^\top \mat{a}|^t]
    = (1/2)^{t/2} \mathbb{E} \left[(| \dotprodi{u}{a}{i} |^2 + |\dotprodi{v}{a}{i}|^2)^{t/2} \right]
    = (1/2)^{t/2} \sum_{n=0}^{t/2} \mathbb{E} \left[ | \dotprodi{u}{a}{i} |^{2n} \right] \mathbb{E} \left[ |\dotprodi{v}{a}{i}|^{2(t/2-n)} \right]
\end{align}
using the binomial theorem. Eq.~\ref{eqn:complex-rad-expansion} must be upper bounded by Eq.~\ref{eqn:comp-gauss-expansion}, since both have the same structure and, by Khintchine's inequality, the moments on the r.h.s. of Eq.~\ref{eqn:complex-rad-expansion} are upper bounded by the ones of the Gaussian distribution. Hence, we obtain
\begin{align*}
    \text{(Complex case)} \quad \quad
    \| (\mat{w}_i)^\top \mat{a} \|_{L^t}
    \leq C_t = \Gamma(t/2 + 1)^{1/t}
    \quad \text{for} \quad t=2k, k \in \mathbb{N}.
\end{align*}
Since we have only derived $C_t$ for $t = 2k, k\in\mathbb{N}$ for complex Rademacher sketches, we derive an interpolation strategy when $t \neq 2k$ in the following.

\paragraph{Interpolation of $L^t$-norms for complex Rademacher sketches}

\begin{figure}[t]
\begin{center}
\centerline{\includegraphics[width=0.8\linewidth]{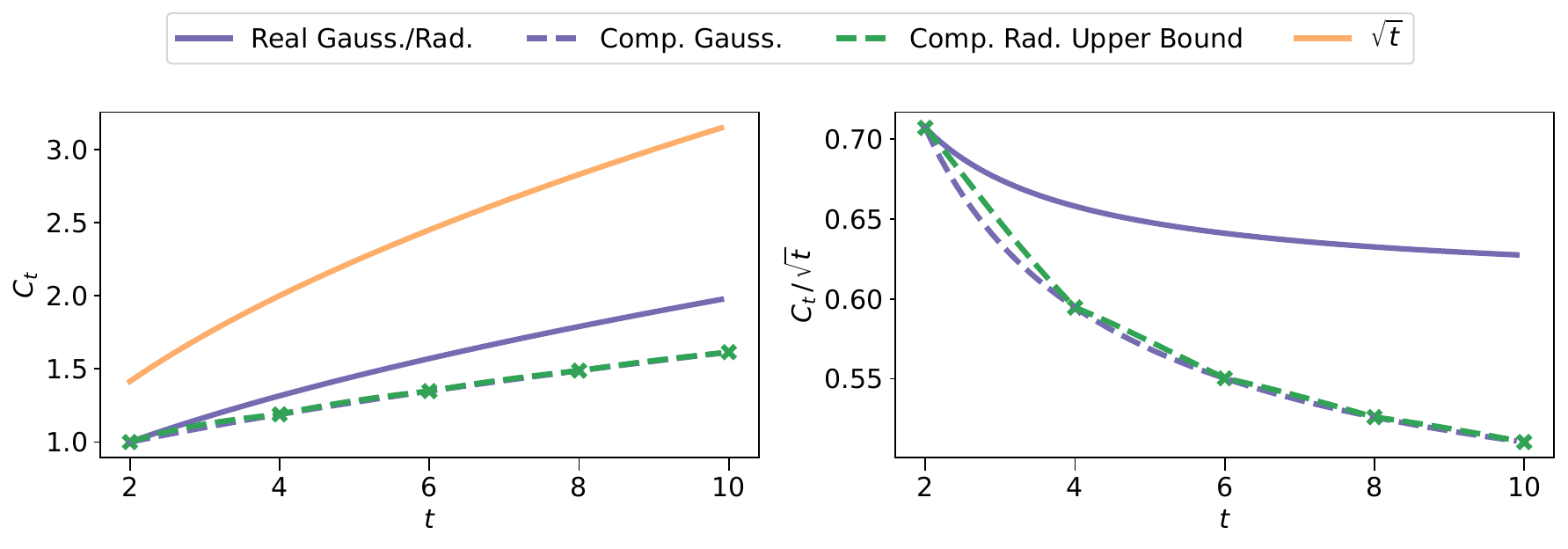}}
\caption{
(Left) $C_t$ values over $t \geq 2$. Values for the complex Rademacher case are interpolations between $t=2k, k \in \mathbb{N}$. (Right) $C_t$ values after division by $\sqrt{t}$.
}
\label{fig:moment-bound-extended}
\end{center}
\end{figure}

For two random variables $X,Y \in \mathbb{C}$, H\"older's inequality gives
\begin{align*}
    \| XY \|_{L^1} \leq \|X\|_{L^a} \|Y\|_{L^b}
    \quad \text{for} \quad
    a, b > 1
    \quad \text{with} \quad
    1/a + 1/b = 1.
\end{align*}
We now define $a' := (b-a)/(b-t)$ and $b' := (b-a)/(t-a)$ for $a < t < b$, such that $a', b' > 1$ and $1/a' + 1/b' = 1$ are satisfied. We further have $a/a' + b/b' = t$. So we define a random variable $Z \in \mathbb{C}$ and obtain
\begin{align*}
    \| Z^t \|_{L^1}
    = \| Z^{\frac{a}{a'}} Z^{\frac{b}{b'}} \|_{L^1}
    \leq \| Z^{\frac{a}{a'}} \|_{L^{a'}} \| Z^{\frac{b}{b'}} \|_{L^{b'}}
    = \| Z \|_{L^a}^{a/a'} \| Z \|_{L^b}^{b/b'}
    \iff
    \| Z \|_{L^t}
    \leq \| Z \|_{L^a}^{a/(ta')} \| Z \|_{L^b}^{b/(tb')}
\end{align*}
via H\"older's inequality. We can thus set $a$ and $b$ equal to the closest even integer values below/above $t$ and therefore obtain an upper bound for $C_t$. That is, we bound $\| Z \|_{L^a}^{a/(ta')} \leq C_a^{a/(ta')}$ and $\| Z \|_{L^b}^{b/(tb')} \leq C_b^{b/(tb')}$. Since $\|Z\|_{L^t} \leq C_t$ is assumed to be tight, we must have $\|Z\|_{L^t} \leq C_t \leq C_a^{a/(ta')} C_b^{b/(tb')}$.

The left plot of Fig.~\ref{fig:moment-bound-extended} shows $C_t$ over $t$ including our proposed interpolation for $t \neq 2k$ for the complex Rademacher case. We see that the upper bound matches the values for the complex Gaussian distribution almost exactly from $t \geq 4$. This is a strong indication for the fact that the actual $C_t$ values are the same for both distributions, as we already showed for the real case. Furthermore, the upper bound for the complex Rademacher $C_t$ values remains smaller than the $C_t$ values for the real case. All functions grow more slowly than $\sqrt{t}$ as shown in the right plot of Fig.~\ref{fig:moment-bound-extended}.

\subsubsection{Proof of Lemma~\ref{lemma:app-absolute-moment-bound} (Lemma~\ref{lemma:abs-moment-bound} in the Main Paper)}

Having derived the $C_t$ values for (complex) Gaussian and Rademacher distributions, we are ready to prove Lem.~\ref{lemma:app-absolute-moment-bound}. The proof closely follows \citet[Lem. 19]{Ahle2020}, but extends it to the case of complex $\{\mat{w}_i\}_{i=1}^p$. We therefore provide the whole proof for completeness.

The proof is by induction. The initial case $p=1$ is trivially fulfilled by the previous derivations. For the induction step, we assume that the claim is true for $p-1$. So we assume $\| (\otimes_{i=1}^{p-1} \mat{w}_i)^\top \mat{x} \|_{L^t} \leq C_t^{p-1} \| \mat{x} \|_2$. We now index the vector $\mat{x} \in \mathbb{R}^{d_1 \cdots d_p}$ in a tensorized fashion. So a single element of $\mat{x}$ is indexed as $x_{I_1, \dots, I_p}$ for indices $I_i \in \{1, \dots, d_i\}$. Let further $B_{I_1, \dots, I_{p-1}} = \sum_{I_p \in [d_p]} (\mat{w}_p)_{I_p} x_{I_1, \dots, I_p} \in \mathbb{C}$. Then $\mu_t := \mathbb{E}[|(\otimes_{i=1}^{p} \mat{w}_i)^\top \mat{x}|^t]$ yields
\begin{align*}
    \mu_t = \mathbb{E} \left[ \left| \sum_{I_1 \in [d_1], \dots, I_p \in [d_p]} \left( \prod_{i \in [p]} (\mat{w}_i)_{I_i} \right) x_{I_1, \dots, I_p} \right|^t \right]
    = \mathbb{E} \left[ \left| \sum_{I_1 \in [d_1], \dots, I_{p-1} \in [d_{p-1}]} \left( \prod_{i \in [p-1]} (\mat{w}_i)_{I_i} \right) B_{I_1, \dots, I_{p-1}} \right|^t \right].
\end{align*}
By the law of total expectation, this gives
\begin{align*}
    \mu_t = \mathbb{E} \left[ \mathbb{E} \left[ \left| \sum_{I_1 \in [d_1], \dots, I_{p-1} \in [d_{p-1}]} \left( \prod_{i \in [p-1]} (\mat{w}_i)_{I_i} \right) B_{I_1, \dots, I_{p-1}} \right|^t \Bigg| \mat{w}_p \right] \right].
\end{align*}
By the induction assumption, we get
\begin{align*}
    \mu_t \leq C_t^{t(p-1)} \mathbb{E} \left[ \left| \sum_{I_1 \in [d_1], \dots, I_{p-1} \in [d_{p-1}]} |B_{I_1, \dots, I_{p-1}}|^2 \right|^{1/2 \cdot t} \right]
    = C_t^{t(p-1)} \left\| \sum_{I_1 \in [d_1], \dots, I_{p-1} \in [d_{p-1}]} |B_{I_1, \dots, I_{p-1}}|^2 \right\|_{L^{t/2}}^{t/2}.
\end{align*}
Since $t/2 \geq 1$, we use Minkowski's inequality (triangle inequality for the $L_t$-norm) to move the norm inside the sum:
\begin{align*}
    \mu_t \leq C_t^{t(p-1)} \left(\sum_{I_1 \in [d_1], \dots, I_{p-1} \in [d_{p-1}]} \| |B_{I_1, \dots, I_{p-1}}|^2 \|_{L^{t/2}}\right)^{t/2}
    = C_t^{t(p-1)} \left(\sum_{I_1 \in [d_1], \dots, I_{p-1} \in [d_{p-1}]} \| B_{I_1, \dots, I_{p-1}} \|_{L^t}^2 \right)^{t/2}.
\end{align*}
Now, recall that $B_{I_1, \dots, I_{p-1}}$ is a weighted sum with weights $\{(\mat{w}_p)_{I_p}\}_{I_p=1}^{d_p}$. So we can use the initial assumption $\| \dotprodi{w}{a}{p} \|_{L^t} \leq C_t \norm{a}_2$ for all $\mat{a} \in \mathbb{R}^{d_p}$. Therefore, we have $\|B_{I_1, \dots, I_{p-1}}\|_{L^t} \leq C_t (\sum_{I_p \in [d_p]} (x_{I_1, \dots, I_p})^2)^{1/2}$, and finally
\begin{align*}
    \mu_t \leq C_t^{t(p-1)} \left( \sum_{I_1 \in [d_1], \dots, I_{p-1} \in [d_{p-1}]} C_t^2 \sum_{I_p \in [d_p]} (x_{I_1, \dots, I_p})^2 \right)^{t/2}
    = C_t^{tp} \norm{x}_2^{2 \cdot t/2}
    = C_t^{tp} \norm{x}_2^t,
\end{align*}
which proves the claim $\| (\otimes_{i=1}^{p} \mat{w}_i)^\top \mat{x}\|_{L^t} \leq C_t^p \norm{x}_2$. \hfill\BlackBox

When $\mat{x} = \otimes_{i=1}^p \mat{x}_i$ is a tensor product for some $\mat{x}_i \in \mathbb{R}^{d_i}$, $i=1, \dots, p$. Then we have
$$
\mathbb{E} [|(\otimes_{i=1}^{p} \mat{w}_i)^\top (\otimes_{i=1}^{p} \mat{x}_i)|^t]
= \mathbb{E} \left[ \left|\prod_{i=1}^p\mat{w}_i^\top \mat{x}_i \right|^t \right]
= \prod_{i=1}^p \mathbb{E} \left[ \left|\mat{w}_i^\top \mat{x}_i \right|^t \right]
\leq \prod_{i=1}^p C_t^t \| \mat{x}_i \|_2^t
= C_t^{tp} \| \otimes_{i=1}^p \mat{x}_i \|_2^t.
$$
Since $\mathbb{E} [ |\mat{w}_i^\top \mat{x}_i |^t ] \leq C_t^t \| \mat{x}_i \|_2^t$ is tight by the assumption that $C_t$ are tight constants, the bound in Lem.~\ref{lemma:app-absolute-moment-bound} becomes tight too in this case.

\subsection{Proof of Theorem~\ref{thrm:complex-norm-pres}}

\label{sec:app-complex-norm-pres}

We prove Thm.~\ref{thrm:complex-norm-pres} for the more general case of $\delta \in (0, \exp(-2 p \gamma))$, for which we introduce an additional variable $\gamma > 0$:
\begin{theorem}
Let $\epsilon, \gamma > 0, p \in \mathbb{N}, \delta \in (0, \exp(-2 p \gamma)), \mat{x} \in \mathbb{R}^{d_1 \cdots d_p}, \mat{S} = (\mat{s}_1, \dots, \mat{s}_D)^\top \in \mathbb{C}^{D \times d_1 \cdots d_p}$ with $\mat{s}_\ell = \otimes_{i=1}^p \mat{w}_{i, \ell} / \sqrt{D}$ and $\mat{w}_{i,\ell} \in \mathbb{C}^{d_i}$ be i.i.d. Gaussian/Rademacher samples as in Lem.~\ref{lemma:abs-moment-bound}.
In order to guarantee
\begin{gather*}
    {\rm Pr} \left\{ | \norm{Sx}_2^2 - \norm{x}_2^2 | \leq \epsilon \norm{x}_2^2 \right\}
    \geq 1 - \delta, \quad \text{we need} \\
    D = \bigO \left(\max\left\{(C_4 e^{\gamma/2})^{4p} \left(\frac{\log(1/\delta)}{p \gamma}\right) \epsilon^{-2} ,
    \quad
    (C_4^2 e/2 e^\gamma)^{p} \left(\frac{\log(1/\delta)}{p \gamma}\right)^p \epsilon^{-1}\right\}\right),
\end{gather*}
where $C_4$ equals $3^{1/4} (2^{1/4})$ for the real (complex) Gaussian/Rademacher distribution, as derived in Lem.~\ref{lemma:abs-moment-bound}.
\end{theorem}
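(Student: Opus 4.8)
The plan is a standard moment-plus-Markov argument, in which the entire benefit of working over $\mathbb{C}$ enters only through the constant $C_4$ furnished by Lemma~\ref{lemma:abs-moment-bound}. First I would reduce to $\norm{x}_2 = 1$ (the event $\{|\norm{Sx}_2^2 - \norm{x}_2^2|\le\epsilon\norm{x}_2^2\}$ is invariant under rescaling $\mat x$, and $\mat x=\mat 0$ is trivial) and write $\norm{Sx}_2^2 = \frac1D\sum_{\ell=1}^D Z_\ell$ with $Z_\ell := |(\otimes_{i=1}^p\mat w_{i,\ell})^\top\mat x|^2$ i.i.d.\ nonnegative. Since $\mat x$ is real and $\mathbb{E}[\mat w_{i,\ell}\overline{\mat w_{i,\ell}}^\top]=\mat I_{d_i}$, one gets $\mathbb{E}[Z_\ell]=\mat x^\top(\otimes_i\mat I_{d_i})\mat x = 1$, exactly as in the unbiasedness computation of the preliminaries, so $Q := \norm{Sx}_2^2-1 = \frac1D\sum_\ell(Z_\ell-1)$ is a centered average of i.i.d.\ \emph{real} summands and no complex subtleties survive in the concentration step.

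Next I would read off the moment information we need from Lemma~\ref{lemma:abs-moment-bound}: for every integer $s\ge1$, $\|Z_\ell\|_{L^s} = \|(\otimes_i\mat w_{i,\ell})^\top\mat x\|_{L^{2s}}^2 \le C_{2s}^{2p}$, where the orders $2s$ we invoke are all even so the explicit tight values of $C_{2k}$ from the lemma apply directly. In particular $\mathbb{V}[Z_\ell]\le\mathbb{E}[Z_\ell^2]\le C_4^{4p}$, and for the even moment order $t$ to be chosen, $\|Z_\ell-1\|_{L^t}\le 1+C_{2t}^{2p}\le 2C_{2t}^{2p}$ (using $C_2=1$ and monotonicity of $C_{2t}$). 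Applying a Rosenthal/Latała-type moment inequality for sums of i.i.d.\ centered variables (the version used in \citet[Lem.~9]{Ahle2020}) gives, for an absolute constant $c$,
\begin{align*}
\|Q\|_{L^t} \;=\; \tfrac1D\Big\|{\textstyle\sum_{\ell}}(Z_\ell-1)\Big\|_{L^t} \;\le\; c\,\sqrt{t/D}\;C_4^{2p} \;+\; 2c\,D^{1/t-1}\,C_{2t}^{2p}.
\end{align*}
Markov's inequality at order $t$ then yields ${\rm Pr}\{|Q|>\epsilon\}\le(\|Q\|_{L^t}/\epsilon)^t$, so it suffices to force $\|Q\|_{L^t}\le\epsilon\,\delta^{1/t}$.

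The last step is the choice of $t$: take $t$ to be the smallest even integer with $t\ge\log(1/\delta)/(p\gamma)$, which exists and satisfies $t=\Theta(\log(1/\delta)/(p\gamma))$ because the hypothesis $\delta<\exp(-2p\gamma)$ forces $\log(1/\delta)/(p\gamma)>2$; this makes $\delta^{-1/t}\le e^{p\gamma}$ and $\delta^{-2/t}\le e^{2p\gamma}$. Requiring each of the two terms above to be at most $\tfrac12\epsilon\delta^{1/t}$: the first gives $D \ge 4c^2\,t\,C_4^{4p}\epsilon^{-2}\delta^{-2/t} = \bigO\!\big((C_4 e^{\gamma/2})^{4p}\,\tfrac{\log(1/\delta)}{p\gamma}\,\epsilon^{-2}\big)$; the second gives $D^{1-1/t}\ge 8c\,C_{2t}^{2p}\epsilon^{-1}\delta^{-1/t}$, and after bounding $C_{2t}^2\le\tfrac{C_4^2}{2}\,t$ for $t\ge2$ — a Stirling/monotonicity estimate ($C_{2t}^2/t$ is decreasing in $t$ with value $C_4^2/2$ at $t=2$, in both the real Gaussian/Rademacher and the complex Gaussian/Rademacher cases, consistent with $C_t$ growing slower than $\sqrt t$) — and folding the exponent $t/(t-1)\le2$ (which $\to1$ as $\delta\to0$) into the constants, this gives $D = \bigO\!\big((C_4^2 e/2\cdot e^{\gamma})^{p}\,(\tfrac{\log(1/\delta)}{p\gamma})^{p}\,\epsilon^{-1}\big)$. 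Taking $D$ to be the maximum of the two requirements proves the theorem; specializing $\gamma=1/p$ (so the hypothesis reads $\delta<\exp(-2)$) recovers Thm.~\ref{thrm:complex-norm-pres} of the main text, with $C_4^{4p}=3^p$ and $C_4^{2p}=\sqrt3^{\,p}$ in the real case versus $2^p$ and $\sqrt2^{\,p}$ in the complex case. I expect the main obstacle to be exactly this last piece of bookkeeping: controlling $C_{2t}$ in terms of $C_4$ and $t$ sharply enough (so the $p$-dependence comes out as $2^p$ versus $3^p$ rather than something looser), and choosing $t$ of order $\log(1/\delta)/(p\gamma)$ so that the unavoidable loss $\delta^{-1/t}$ is only $e^{p\gamma}$ rather than polynomial in $1/\delta$; the Rosenthal-constant and $t/(t-1)$-exponent chasing is routine but must be done carefully so that both displayed terms in the $\bigO$ are honest.
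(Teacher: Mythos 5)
Your overall strategy is the same as the paper's (reduce to $\norm{x}_2=1$, write $\norm{Sx}_2^2-1$ as a centered i.i.d.\ average, control $\|Z_\ell-1\|_{L^t}$ via Lemma~\ref{lemma:abs-moment-bound} together with $C_{2t}^2\le \tfrac{C_4^2}{2}\,t$, apply a Lata\l{}a/Rosenthal bound, pick $t\asymp \log(1/\delta)/(p\gamma)$ so that $\delta^{-1/t}\le e^{p\gamma}$, and finish with Markov), and the first ($\epsilon^{-2}$) term comes out correctly in your sketch. The genuine gap is in how you dispose of the second term. From $D^{1-1/t}\ge 4c\,C_{2t}^{2p}\,\epsilon^{-1}\delta^{-1/t}$ you cannot ``fold the exponent $t/(t-1)\le 2$ into the constants'': raising the right-hand side to the power $t/(t-1)$ multiplies it by the extra factor $\bigl(4c\,(C_4^2t/2)^{p}\epsilon^{-1}e^{p\gamma}\bigr)^{1/(t-1)}$, which contains $\epsilon^{-1/(t-1)}$ and $(C_4^2t/2)^{p/(t-1)}$ and is not an absolute constant — for $\delta$ near the endpoint $\exp(-2p\gamma)$ you have $t$ near $2$ and this factor is essentially the whole right-hand side again. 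Note also that your target expression $(C_4^2e/2\,e^{\gamma})^{p}\bigl(\tfrac{\log(1/\delta)}{p\gamma}\bigr)^{p}\epsilon^{-1}$ already carries an $e^{p}$ that a ``constant-folding'' argument would never produce; that $e^{p}$ is exactly the price of the step you skipped, so your bookkeeping is internally inconsistent at this point.

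The paper resolves this with a dedicated case analysis: it only needs to control the second term when it dominates the first, and under that domination it shows $D^{1/t}\le (t/2)^{(2p-1)/(t-2)}\le e^{p-1/2}$ for $t>2$, which turns the troublesome $D^{1/t-1}$ into $e^{p-1/2}/D$ and makes the requirement linear in $D$, namely $D\ge (C_4^2/2\,t\,e\,e^{\gamma})^{p}\epsilon^{-1}e^{-1/2}$ — this is where the $e^{p}$ in the statement comes from. Your route can be repaired, but not by a constant: you must either reproduce this ``case $(2)\ge(1)$'' argument, or explicitly verify that whenever the raised-power requirement exceeds $C\,(C_4^2e/2\,e^{\gamma})^{p}t^{p}\epsilon^{-1}$ (which happens when $\epsilon$ is very small relative to $e^{-p(t-1)}$), it is already dominated by the $\epsilon^{-2}$ term $C\,(C_4e^{\gamma/2})^{4p}t\,\epsilon^{-2}$; the latter check amounts to the inequality $t-2\ge 2\ln(t/2)$ after the $C_4$-powers cancel, so it does go through, but it is precisely the nontrivial piece of the proof and cannot be waved away. (Two smaller points: the moment inequality you want is the paper's Lata\l{}a lemma combined with the endpoint reduction of \citet[Cor.~38]{Ahle2020}, which is legitimate here only because $\|Z_\ell-1\|_{L^s}$ grows like $(Ks)^{p}$ — that growth condition should be invoked, and the reference to \citet[Lem.~9]{Ahle2020} is to the wrong statement; on the plus side, your choice of an even integer $t$ neatly avoids the H\"older interpolation of $C_{t'}$ for non-even orders that the paper needs.)
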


Setting $\gamma = 1/p$ yields the formulation of Thm.~\ref{thrm:complex-norm-pres}.
Alternatively, we may allow $\delta \in (0, 1)$ by letting $\gamma \propto 1/p$ go towards zero. However, this leads to a worse dependence on $p$, since $(1/\gamma)^p$ becomes arbitrarily large.

\paragraph{Proof}

The proof is an extension of \citet[Appendix A.2]{Ahle2020} to the case of complex $\mat{S}$. It thus makes use of Lem.~\ref{lemma:abs-moment-bound} in order to prove the theorem. Moreover, we make use of interpolated values for $C_t$ when $t \neq 2k$ for any $k \in \mathbb{N}$ by using an upper bound on $C_t$ in this case, as shown in Section~\ref{sec:app-complex-moments}. Crucially however, this upper-bound interpolation does not harm the sharpness of our results.
Moreover, the original proof by \citet[Appendix A.2]{Ahle2020} requires $t = \frac{\log(1/\delta)}{p \gamma} \geq 4$, which we relax to $t = \frac{\log(1/\delta)}{p \gamma} > 2$ to allow for a larger range of error probabilities, i.e., we require $\delta \in (0, \exp(-2p\gamma))$ instead of $\delta \in (0, \exp(-4p\gamma))$ in the theorem.
We provide the entire modified proof here for completeness.

Our goal is to show that
$
    \| \norm{Sx}_2^2 - \norm{x}_2^2 \|_{L^t} \leq \delta^{1/t} \epsilon \norm{x}_2^2
$
holds. Then we can apply Markov's inequality: ${\rm Pr} \{ X \geq a \} \leq \mathbb{E}[X]/a$ for $a > 0$, where we set $X = |\norm{Sx}_2^2 - \norm{x}_2^2 |^t$ and $a=\epsilon^{t} \norm{x}_2^{2t}$ to obtain
\begin{align}
    \label{eqn:app-norm-inequality}
    {\rm Pr} \{ |\norm{Sx}_2^2 - \norm{x}_2^2 |^t \geq \epsilon^t \norm{x}_2^{2t} \} \leq \delta
    \quad \iff \quad
    {\rm Pr} \{ | \norm{Sx}_2^2 - \norm{x}_2^2 | \leq \epsilon \norm{x}_2^2 \} \geq 1 - \delta.
\end{align}
Without loss of generality, we can assume $\norm{x}_2 = 1$ from now onward, since
\begin{align*}
    \left\| \norm{Sx}_2^2 - \norm{x}_2^2 \right\|_{L^t} \leq \epsilon \delta^{1/t} \norm{x}_2^2
    \iff
    \left\| \left\| \mat{S} \left(\frac{\mat{x}}{\norm{x}_2}\right) \right\|_2^2 - 1 \right\|_{L^t} \leq \epsilon \delta^{1/t}.
\end{align*}
In order to prove $\| \norm{Sx}_2^2 - 1 \|_{L^t} \leq \epsilon \delta^{1/t}$, we write $\mat{S} = (\mat{s}_1, \dots, \mat{s}_D)^\top$ with $\mat{s}_\ell = (\otimes_{i=1}^p \mat{w}_{i, \ell}) / \sqrt{D}$ and $\mat{w}_{i, \ell} \in \mathbb{C}^{d_i}$ i.i.d. as in Section~\ref{sec:complex-bounds}. So we can reformulate
\begin{align}
    \label{eqn:sum_z_ell}
    \left\| \norm{Sx}_2^2 - 1 \right\|_{L^t}
    = \left\| \left( \frac{1}{D} \sum_{\ell=1}^D \left|(\otimes_{i=1}^p \mat{w}_{i, \ell})^\top \mat{x}\right|^2 \right) - 1 \right\|_{L^t}
    = \left\| \frac{1}{D} \sum_{\ell=1}^D Z_\ell \right\|_{L^t}
\end{align}
with $Z_\ell := |(\otimes_{i=1}^p \mat{w}_{i, \ell})^\top \mat{x}|^2 - 1$ being i.i.d. random variables with zero mean, since $\mathbb{E} [|(\otimes_{i=1}^p \mat{w}_{i, \ell})^\top \mat{x}|^2] = \norm{x}_2^2 = 1$. Next, we bound $\| Z_\ell \|_{L^t}$ using Minkowski’s inequality:
\begin{align}
    \label{eqn:z-ell}
    \| Z_\ell \|_{L^t}
    =  \| |(\otimes_{i=1}^p \mat{w}_{i, \ell})^\top \mat{x}|^2 - 1 \|_{L^t}
    \leq \| |(\otimes_{i=1}^p \mat{w}_{i, \ell})^\top \mat{x}|^2 \|_{L^t} + \| -1 \|_{L^t}
    = \| (\otimes_{i=1}^p \mat{w}_{i, \ell})^\top \mat{x} \|_{L^{2t}}^2 + 1.
\end{align}
We can further bound $\| (\otimes_{i=1}^p \mat{w}_{i, \ell})^\top \mat{x} \|_{L^{2t}} \leq C_{2t}^p$ for any $t \geq 1$ by Lem.~\ref{lemma:abs-moment-bound}. Precise values $C_{t'}$ are derived in the Lemma, except for the complex Rademacher case for which we provide values for $t'=2k, k \in \mathbb{N}$. For $t' \neq 2k$, we can use the upper-bound interpolation
\begin{align*}
    C_{t'}
    \leq C_{a}^{\frac{a(b-t')}{t'(b-a)}} C_b^{\frac{b(t'-a)}{t'(b-a)}}
    \quad \text{with} \quad
    a < t' < b,
\end{align*}
where we choose $a$ and $b$ to be the closest even integer values below and above $t$, respectively.

As shown in Fig.~\ref{fig:moment-bound-extended}, $C_{t'}$ grows more slowly than $\sqrt{t}$. $f(t') := C_{t'} / \sqrt{t'}$ is thus a monotonically decreasing function. Now we set $t'=2t > 4$ by our initial assumption $t > 2$, and we obtain $f(t') \leq C_4 / \sqrt{4}$. We can thus bound $C_{2t} \leq C_4 \sqrt{2t} / \sqrt{4}$. This eventually allows us to bound $\| Z_\ell \|_{L^t} \leq (C_4^2 t / 2)^p + 1$ for all $\ell \in \{1, \dots, D\}$ in Eq.~\ref{eqn:z-ell}. Notably, the fact of having interpolated $C_{t'}$ by using an upper bound for complex Rademacher sketches has not influenced this result, since $f(t') \leq C_4 \sqrt{4}$ would remain valid even if the $C_{t'}$ values were smaller for $t' \neq 2k$.

In order to bound $\| \frac{1}{D} \sum_{\ell=1}^D Z_\ell \|_{L^t}$ (\ref{eqn:sum_z_ell}), we need Latala's inequality:
\begin{lemma}[\cite{latala1997}, Corollary 2]
\label{lemma:latala}

If $p \geq 2$ and $X, X_1, \dots, X_n$ are i.i.d. symmetric random variables, then we have
\begin{align*}
    \| X_1 + \dots + X_n \|_{L^t}
    \sim \sup \left\{ \frac{t}{s} \left(\frac{n}{t}\right)^{1/s} \|X\|_{L^s} \Bigg| \max\{2, t/n\} \leq s \leq t \right\}.
\end{align*}
\end{lemma}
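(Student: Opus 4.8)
The displayed statement is Latała's two‑sided moment comparison for sums of i.i.d.\ symmetric random variables (here the hypothesis should read $t\ge 2$, since $t$ is the moment index in the display). This is a deep external result and in the paper one would simply cite \citet{latala1997}; the plan for a self‑contained sketch is to prove the two halves of ``$\sim$'' separately. Throughout, write $q(u)=\mathrm{Pr}\{|X|>u\}$, $\sigma^2=\mathbb E X^2$ (possibly infinite), $S_n=X_1+\dots+X_n$, and denote by $B(n,t)$ the supremum on the right‑hand side.

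For the lower bound $\|S_n\|_{L^t}\gtrsim B(n,t)$ it suffices to bound $\|S_n\|_{L^t}$ below by $\tfrac{t}{s}(n/t)^{1/s}\|X\|_{L^s}$ for each admissible $s$ and then take the supremum. Fixing such an $s$, I would choose a scale $u$ at which the $s$‑th moment of $X$ lives, i.e.\ $u^s q(u)\gtrsim\|X\|_{L^s}^s$, and an integer $k\asymp t/s$ clipped to the admissible range. On the event that at least $k$ of the $X_i$ exceed $u$ in absolute value, Khintchine/Paley--Zygmund applied to the (symmetric) large summands, together with $\mathbb E[|a+R|^t]\ge|a|^t$ for the mean‑zero remainder $R$, gives $|S_n|\gtrsim u\sqrt k$ (and $\gtrsim uk$ on the further sub‑event that those $k$ summands share a sign, which dominates when $s$ is large), while the probability of this event is at least $(nq(u)/k)^k$ by the elementary binomial tail bound. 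Raising to the power $1/t$ and substituting the choices of $u$ and $k$ recovers $\tfrac{t}{s}(n/t)^{1/s}\|X\|_{L^s}$ up to universal constants; the endpoint $s=2$ is just the Gaussian/Khintchine regime $\|S_n\|_{L^t}\gtrsim\sqrt{nt}\,\sigma$.

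For the upper bound $\|S_n\|_{L^t}\lesssim B(n,t)$ I would fix a truncation level $u$ and split $X_i=Y_i+Z_i$ with $Y_i=X_i\mathbf 1_{\{|X_i|\le u\}}$, $Z_i=X_i\mathbf 1_{\{|X_i|>u\}}$, both symmetric, so that $\|S_n\|_{L^t}\le\|\sum_i Y_i\|_{L^t}+\|\sum_i Z_i\|_{L^t}$ by Minkowski. The bounded part is mean‑zero with $|Y_i|\le u$ and variance $\le\sigma^2$, so a Bernstein/Bennett‑type moment inequality gives $\|\sum_i Y_i\|_{L^t}\lesssim\sqrt{n\sigma^2 t}+ut$. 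For the tail part, $|\sum_i Z_i|\le\sum_i|Z_i|$; conditioning on the number $N\sim\mathrm{Bin}(n,q(u))$ of nonzero summands and using the power‑mean inequality bounds $\|\sum_i Z_i\|_{L^t}^t$ by $\mathbb E[N^t]\cdot\mathbb E[\,|X|^t\mathbf 1_{\{|X|>u\}}\,]/q(u)$, and $\|N\|_{L^t}\lesssim nq(u)+t/\log\!\big(e+t/(nq(u))\big)$ is the classical Poisson/binomial moment bound. Adding these estimates and minimizing over $u>0$, then translating the optimal $u$ into the exponent $s$ for which $u$ sits at the $\|X\|_{L^s}$‑scale, produces a bound of the form $B(n,t)$.

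The genuinely hard part is this upper‑bound optimization: one must verify that the optimal truncation level always corresponds to an $s$ inside the admissible window $[\max\{2,t/n\},t]$ — this is precisely where the two regimes $t\le n$ and $t>n$ and the appearance of $\max\{2,t/n\}$ come in — and that, after the optimization, the Bernstein term and the truncated‑tail term are \emph{each} $\lesssim B(n,t)$, not merely their sum. Matching the lower‑bound construction (a few sign‑aligned large summands plus the Gaussian bulk) to the upper bound term by term is what pins down the exact shape of $B(n,t)$, and making all the constants universal is exactly the technical content of \citet{latala1997}; it is not something one would redo inside this paper.
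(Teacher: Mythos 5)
The paper offers no proof of this lemma at all: it is imported verbatim from Lata\l{}a (1997), Corollary~2 (with Remark~2 invoked separately for the zero-mean case), so your decision to treat it as an external citation is exactly the paper's approach, and your accompanying sketch --- lower bound via binomial tail events on sign-aligned large summands, upper bound via truncation plus a Bernstein-type bound and binomial moment estimates --- is a faithful outline of Lata\l{}a's actual argument rather than anything the paper attempts. You are also right that the hypothesis should read $t \geq 2$: the paper renamed the moment index to $t$ in the display but left Lata\l{}a's ``$p \geq 2$'' in the hypothesis, where $p$ collides with the tensor degree used everywhere else.
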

Here, $f(x) \sim g(x)$ means $c_1 g(x) \leq f(x) \leq c_2 g(x)$ for all $x$ and some universal constants $c_1, c_2$. By \citet[Remark 2]{latala1997}, the lemma is also valid for zero-mean random variables with slightly worse constants $1/2 \, c_1$ and $2 \, c_2$.


Recall that $\| Z_\ell \|_{L^t} \leq (C_4^2 t / 2)^p + 1 \leq c_3 (C_4^2 t / 2)^p$ for some $c_3 > 0$.
W.l.o.g., we can set $c_3=1$ when substituting $\|X\|_{L^s}$ by $c_3 (C_4^2 t / 2)^p$ inside Lem.~\ref{lemma:latala}. The functional form $h(t) := (K t)^p$ with $K := C_4^2 / 2 > 0$ allows us to greatly simplify Lem.~\ref{lemma:latala} using the following corollary.
\begin{corollary}[\cite{Ahle2020}, Corollary 38]
\label{cor:latala}

If $\|X\|_{L^s} \sim (K s)^p$ for some $p \geq 1, K > 0$, then the supremum in Lem.~\ref{lemma:latala} is attained for the minimal and maximal case of $s$, i.e., $s=\max\{2, t/n\}$ and $s=t$. Lem.~\ref{lemma:latala} then becomes
\begin{align*}
    \| X_1 + \dots + X_n \|_{L^t}
    \sim K^p \max \left\{ \sqrt{tn} 2^p, \left(\frac{n}{t}\right)^{1/t} t^p \right\}.
\end{align*}
\end{corollary}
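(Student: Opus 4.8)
The plan is to plug the hypothesis $\|X\|_{L^s} \sim (Ks)^p$ into the supremum supplied by Latala's inequality (Lem.~\ref{lemma:latala}), then reduce the resulting one-variable maximization over $s \in [\max\{2,t/n\},t]$ to an evaluation at the two endpoints by a convexity argument, and finally simplify the two endpoint values into the stated closed form. The $\sim$ notation makes this acceptable: we only need to match the supremum up to universal constants.

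First I would set $g(s) := \frac{t}{s}\bigl(\tfrac{n}{t}\bigr)^{1/s}(Ks)^p = K^p\, t\, s^{p-1}\bigl(\tfrac{n}{t}\bigr)^{1/s}$, so that pulling the universal constants of the hypothesis out of the supremum in Lem.~\ref{lemma:latala} gives $\|X_1+\dots+X_n\|_{L^t} \sim \sup_{s\in[s_0,t]} g(s)$ with $s_0 := \max\{2,t/n\}$ (here $t\ge 2$ and $n\ge 1$ make the interval well defined). Taking logarithms and substituting $u = 1/s$ yields $\log g(1/u) = \mathrm{const} - (p-1)\log u + u\log(n/t)$, whose second derivative in $u$ is $(p-1)/u^2 \ge 0$ since $p \ge 1$. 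Hence $\log g$ is convex as a function of $u$, so it attains its maximum over a closed interval at an endpoint; since $u\mapsto 1/u$ is a decreasing bijection exchanging the two endpoints of the interval, this gives $\sup_{s\in[s_0,t]} g(s) = \max\{g(s_0),\,g(t)\}$.

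It then remains to identify these two values with the two terms of the claimed maximum. At $s=t$ one gets $g(t) = K^p t^p (n/t)^{1/t}$ exactly. At $s=s_0$: when $t\le 2n$ we have $s_0=2$ and $g(2) = K^p 2^{p-1}\sqrt{tn} \sim K^p 2^p\sqrt{tn}$, i.e.\ the first term up to a factor of two. When $t>2n$ we have $s_0 = t/n$, and here I would use that $\log(x)/x \le 1/e$ for $x\ge 1$, so that with $x = t/n \ge 2$ and $n\ge 1$ both $(n/t)^{n/t} = \Theta(1)$ and $(n/t)^{1/t} = \Theta(1)$; then $g(t/n) = K^p t\,(t/n)^{p-1}(n/t)^{n/t} \sim K^p t^p n^{1-p}$, and the two estimates $K^p t^p n^{1-p} \ge \tfrac{1}{\sqrt2}K^p 2^p\sqrt{tn}$ (from $t\ge 2n$) and $K^p t^p n^{1-p} \le K^p t^p \lesssim K^p t^p(n/t)^{1/t}$ (from $n\ge 1$, $p\ge 1$) squeeze $g(t/n)$ between the two terms of the claimed maximum up to universal constants. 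Combining the three subcases yields $\max\{g(s_0),g(t)\} \sim K^p\max\{\sqrt{tn}\,2^p,\,(n/t)^{1/t}t^p\}$, which is the assertion.

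I expect the main obstacle to be precisely the $s_0 = t/n$ regime: at first sight the stated two-term bound looks as though it could miss the actual value $g(t/n)$, and confirming that it does not requires the slightly fussy observations that $x^x$ and $x^{1/t}$ are $\Theta(1)$ for $x = n/t \in (0,\tfrac12]$, together with the hypotheses $n \ge 1$ and $p \ge 1$ (without which $n^{1-p}$ would not be bounded by $1$). Everything else — the substitution, the convexity in $1/s$, and the endpoint bookkeeping — is routine.
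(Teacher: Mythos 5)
Your argument is correct. Note, however, that the paper does not prove this statement at all: Corollary~\ref{cor:latala} is imported verbatim from \citet[Cor.~38]{Ahle2020}, and the paper simply invokes it when bounding Eq.~\ref{eqn:z-ell-max-bound}. So there is no in-paper proof to compare against; what you have produced is a self-contained derivation of the cited result. Your route is sound: substituting $\|X\|_{L^s}\sim (Ks)^p$ into Lem.~\ref{lemma:latala} (the universal constants in the hypothesis factor out of the supremum, so nothing gets raised to the power $p$), observing that $\log g(1/u)=\mathrm{const}-(p-1)\log u+u\log(n/t)$ is convex in $u$ for $p\ge 1$, hence the supremum over $s\in[\max\{2,t/n\},t]$ is attained at an endpoint, and then checking the two endpoint values. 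The endpoint bookkeeping is also right: $g(t)=K^pt^p(n/t)^{1/t}$ exactly; $g(2)=K^p2^{p-1}\sqrt{tn}$ matches the first term up to a factor $2$; and in the regime $t>2n$ your sandwich $\tfrac{1}{\sqrt2}K^p2^p\sqrt{tn}\lesssim g(t/n)\lesssim K^pt^p(n/t)^{1/t}$, using $(n/t)^{n/t},(n/t)^{1/t}\in[e^{-1/e},1]$ (valid since $n\ge 1$, so $\log(t/n)/t\le 1/e$) and $(t/n)^{p-1/2}\ge 2^{p-1/2}$, correctly shows that the stated two-term maximum captures the supremum up to universal constants. You implicitly use $t\ge 2$ and $n\ge 1$ so that the interval is nonempty (the ``$p\ge 2$'' in the paper's statement of Lem.~\ref{lemma:latala} is evidently a typo for the moment order), which is consistent with how the corollary is applied in Appendix~\ref{sec:app-complex-norm-pres}.
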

Using Cor.~\ref{cor:latala} and setting $K=C_4^2/2$, we obtain the following bound on $\| \frac{1}{D} \sum_{\ell=1}^D Z_\ell \|_{L^t}$ (\ref{eqn:sum_z_ell}): 
\begin{align}
    \label{eqn:z-ell-max-bound}
    \left\| \frac{1}{D} \sum_{\ell=1}^D Z_\ell \right\|_{L^t}
    \sim \frac{1}{D} (C_4^2/2)^p \max \left\{ \sqrt{tD} 2^p, \left( \frac{D}{t} \right)^{1/t} t^p \right\}
    = \max \{ \underbrace{C_4^{2p} \sqrt{t/D}}_{(1)}, \underbrace{(C_4^2/2 \, t)^p (D/t)^{1/t} / D}_{(2)} \}.
\end{align}
Recall that our goal is to provide a condition on $D$ for which $\| \frac{1}{D} \sum_{\ell=1}^D Z_\ell \|_{L^t} \leq \epsilon \delta^{1/t}$ holds. Since we can freely choose $t > 2$, we set it to $t = \frac{\log(1/\delta)}{p \gamma} > 2$ for some $\gamma > 0$ from now onward. Then, it is only left to show that terms (1) and (2) in Eq.~\ref{eqn:z-ell-max-bound} are upper-bounded by $\epsilon \delta^{1/t}$. We start with the simpler case (1).

\paragraph{Analysis of case (1)}
Setting $D \geq (C_4 e^{\gamma/2})^{4p} \frac{\log(1/\delta)}{p \gamma} \epsilon^{-2}$ directly gives
\begin{align*}
    (1)
    = (C_4)^{2p} \sqrt{\frac{\log (1 / \delta)}{p \gamma D}}
    \leq (C_4)^{2p} \sqrt{\frac{\log (1 / \delta) \epsilon^2}{p \gamma (C_4 e^{\gamma/2})^{4p} \frac{\log (1/ \delta)}{p \gamma}}}
    = \epsilon e^{-\gamma p}
    = \epsilon e^{-\gamma p t/t}
    = \epsilon \delta^{1/t}.
\end{align*}
%

\paragraph{Analysis of case (2)}
For a simpler analysis, we start by upper bounding $D^{1/t}$ in case (2). For this purpose, we study the condition in which $(2) \geq (1)$ s.t. our error is upper bounded by $(2)$. We have
\begin{align*}
    &\underbrace{(C_4^2)^p \sqrt{t/D}}_{(1)} \leq \underbrace{(C_4^2/2 \, t)^p (D/t)^{1/t} / D}_{(2)} \leq (C_4^2/2 \, t)^p D^{1/t} / D \\
    &\iff 2^p t^{1/2-p} \leq D^{1/t-1/2}
    \iff D^{1/t} \leq \left(\frac{t}{2}\right)^{\frac{2p-1}{t-2}} \left( \frac{1}{2} \right)^{\frac{1}{t-2}}
    \, \stackrel{(t>2)}{\leq} \, \left(\frac{t}{2}\right)^{\frac{2p-1}{t-2}}
    \, \stackrel{(t \rightarrow 2)}{\leq} \, \exp(p-1/2).
\end{align*}
%
%

Thus, if $(2) \geq (1)$, we have $(2) \leq e^{-1/2} (C_4^2/2 \, e t)^p / D$. Setting $D \geq (C_4^2/2 \, t e e^\gamma)^{p} \epsilon^{-1} e^{-1/2}$ finally yields
\begin{align*}
    (2) \leq e^{-1/2} (C_4^2/2 \, e t)^p / D
    \leq \frac{e^{-1/2} (C_4^2/2 \, e t)^p}{(C_4^2/2 \, t e e^\gamma)^{p} \epsilon^{-1} e^{-1/2}}
    = \epsilon e^{-\gamma p}
    = \epsilon \delta^{1/t}.
\end{align*}
%

Setting $D$ to the maximum value of the conditions of case (1) and (2) ensures that $\| \frac{1}{D} \sum_{\ell=1}^D Z_\ell \|_{L^t} \leq \epsilon \delta^{1/t}$ in Eq.~\ref{eqn:z-ell-max-bound}. \hfill \BlackBox

\subsection{A Comparison with \cite{wacker2022}, Theorem 3.4}
\label{sec:app-wacker-comparison}

\citet[Theorem 3.4]{wacker2022} provide an error bound relative to the L1-norm of the form
\begin{align*}
{\rm Pr} \left\{ | \|\mat{S} \mat{a}^{\otimes p} \|_2^2 - \|\mat{a}^{\otimes p}\|_2^2 | \leq \epsilon \|\mat{a}\|_1^{2p} \right\}
\geq 1 - \delta,
\end{align*}
where $\mat{S} \in \mathbb{C}^{D \times d^p}$ is a complex Rademacher sketch and $\mat{a} \in \mathbb{R}^d$.

Bounding the error $\epsilon > 0$ relative to the L1-norm of $\mat{a}$ instead of the L2-norm is problematic. To see this, consider the vector $\mat{a} = (1, \dots, 1)^\top / \sqrt{d} \in \mathbb{R}^d$. It has $\norm{a}_2=1$ and $\norm{a}_1 = \sqrt{d}$. In this case, we have $\norm{a}_1^{2p} = d^p \norm{a}_2^{2p}$. Since the bound by \citet{wacker2022} requires $D=\bigO(\epsilon^{-2})$, this would translate into a guarantee of $D=\bigO(d^{2p})$ to bound the error relative to $\norm{a}_2^{2p}$. Hence, $D$ is already larger than the dimension $d^p$ of $\mat{a}^{\otimes p}$, which defeats the purpose of dimensionality reduction.

\subsection{Proof of Corollary~\ref{cor:approx-matrix} (Approximate Matrix Product)}

\label{sec:app-approx-matrix}

We want to show that
\begin{align*}
    {\rm Pr} \left\{ | (\mat{S}_{\rm CtR} \mat{x})^\top (\mat{S}_{\rm CtR} \mat{y}) - \dotprod{x}{y} | \leq \epsilon \norm{x}_2 \norm{y}_2 \right\}
    \geq 1 - \delta
\end{align*}
holds for any $\mat{x}, \mat{y} \in \mathbb{R}^{d_1 \cdots d_p}$ and
$
\mat{S}_{\rm CtR}
:= ({\rm Re} \{\mat{s}_1\}, \dots, {\rm Re} \{\mat{s}_{D}\}, {\rm Im} \{\mat{s}_1\}, \dots, {\rm Im} \{\mat{s}_{D}\})^\top
\in \mathbb{R}^{2D \times d_1 \cdots d_p}
$
with $\mat{S} = (\mat{s}_1, \dots, \mat{s}_D)^\top$ being the same as in Thm.~\ref{thrm:complex-norm-pres}.

\begin{proof}
Our first goal is to show that the following holds:
\begin{align*}
    \| (\mat{S}_{\rm CtR} \mat{x})^\top (\mat{S}_{\rm CtR} \mat{y}) - \dotprod{x}{y} \|_{L^t} \leq \epsilon \delta^{1/t} \norm{x}_2 \norm{y}_2.
\end{align*}
W.l.o.g., we can assume $\norm{x}_2 = \norm{y}_2 = 1$ from now onward, since both sides of the inequality can be divided by $\norm{x}_2 \norm{y}_2$.

In Section~\ref{sec:app-complex-norm-pres}, we have shown that
$
    \| \norm{Sx}_2^2 - \norm{x}_2^2 \|_{L^t} \leq \delta^{1/t} \epsilon \norm{x}_2^2
$
holds for any $\mat{x} \in \mathbb{R}^{d_1 \cdots d_p}$ and $t>2$.
Recall that
$\| \mat{S}_{\rm CtR} \mat{x} \|_2^2
= \sum_{\ell=1}^D {\rm Re} \{ \mat{s}_\ell^\top \mat{x} \}^2 + {\rm Im} \{ \mat{s}_\ell^\top \mat{x} \}^2 = \norm{Sx}_2^2$, which already implies
$
    \| \| \mat{S}_{\rm CtR} \mat{x}\|_2^2 - \norm{x}_2^2 \|_{L^t} \leq \delta^{1/t} \epsilon \norm{x}_2^2.
$

The rest of the proof follows \citet[Lem. 9]{Ahle2020}.
For two vectors $\mat{a}, \mat{b}$, we have $\|\mat{a}-\mat{b}\|_2^2 = \norm{a}_2^2 + \norm{b}_2^2 - 2 (\dotprod{a}{b})$ and $\| \mat{a} + \mat{b} \|_2^2 = \norm{a}_2^2 + \norm{b}_2^2 + 2 (\dotprod{a}{b})$. Being combined, this gives $\dotprod{a}{b} = \left( \|\mat{a}+\mat{b}\|_2^2 - \|\mat{a}-\mat{b}\|_2^2 \right) / 4$. Hence
\begin{align*}
    \left\| (\mat{S}_{\rm CtR} \mat{x})^\top (\mat{S}_{\rm CtR} \mat{y}) - \dotprod{x}{y} \right\|_{L^t}
    &= \left\| \left\|\mat{S}_{\rm CtR} (\mat{x} + \mat{y}) \right\|_2^2 - \left\|\mat{S}_{\rm CtR} (\mat{x} - \mat{y}) \right\|_2^2
    - \left\|\mat{x} + \mat{y} \right\|_2^2 + \left\|\mat{x} - \mat{y} \right\|_2^2
    \right\|_{L^t} / 4 \\
    &\leq \left(\left\| \left\|\mat{S}_{\rm CtR} (\mat{x} + \mat{y}) \right\|_2^2 - \left\|\mat{x} + \mat{y} \right\|_2^2 \right\|_{L^t}
    + \left\| \left\|\mat{S}_{\rm CtR} (\mat{x} - \mat{y}) \right\|_2^2
     - \left\|\mat{x} - \mat{y} \right\|_2^2
    \right\|_{L^t}\right) / 4 \\
    &\leq \epsilon \delta^{1/t} \left(\left\|\mat{x} + \mat{y} \right\|_2^2 + \left\|\mat{x} - \mat{y} \right\|_2^2 \right) / 4
    = \epsilon \delta^{1/t} \left( \norm{x}_2^2 + \norm{y}_2^2 \right) / 2
    = \epsilon \delta^{1/t}.
\end{align*}
To conclude the proof, we apply Markov's inequality ${\rm Pr}(X \geq a) \leq \mathbb{E}[X] / a$ with $a > 0$, as we did in Section~\ref{sec:app-complex-norm-pres}.

In the cases of matrices $\mat{X} = (\mat{x}_1, \dots, \mat{x}_n) \in \mathbb{R}^{d_1 \cdots d_p \times n}, \mat{Y} = (\mat{y}_1, \dots, \mat{y}_m) \in \mathbb{R}^{d_1 \cdots d_p \times m}$, we set $a = \epsilon^2 \| \mat{X} \|_F^2 \| \mat{Y} \|_F^2$ and $X = \| (\mat{S}_{\rm CtR} \mat{X})^\top (\mat{S}_{\rm CtR} \mat{Y}) - \mat{X}^\top \mat{Y} \|_F^2$ inside the inequality. Then we get
\begin{align}
    \label{eqn:app-frobenius-bound}
    {\rm Pr} \left\{ (\mat{S}_{\rm CtR} \mat{X})^\top (\mat{S}_{\rm CtR} \mat{Y}) - \dotprod{X}{Y} \|_F^2 \geq \epsilon^2 \norm{X}_F^2 \norm{Y}_F^2 \right\}
    \leq \frac{\epsilon^2 \delta \sum_{i=1}^n \sum_{j=1}^m \|\mat{x}_i\|_2^2 \|\mat{y}_j\|_2^2}{\epsilon^2 \| \mat{X} \|_F^2 \| \mat{Y} \|_F^2} = \delta
\end{align}
when $\mat{S}_{\rm CtR}$ has $2D$ rows with $D$ being the same as in Thm.~\ref{thrm:complex-norm-pres}.
\end{proof}

\subsection{From Approximate Matrix Products to Subspace Embeddings}

\label{sec:app-ose}

We now use the inequality (\ref{eqn:app-frobenius-bound}) derived in Section~\ref{sec:app-approx-matrix} to bound the spectral approximation error of the polynomial kernel matrix. We define the target gram matrix $\mat{K} := (\mat{X}^{\otimes p})^\top \mat{X}^{\otimes p} + \lambda \mat{I}_n$, where $\mat{X}^{\otimes p} = (\mat{x}_1^{\otimes p}, \dots, \mat{x}_n^{\otimes p}) \in \mathbb{R}^{d^p \times n}$ is a matrix containing the polynomial feature maps of the data points $\{\mat{x}_i\}_{i=1}^n$, and $\lambda \geq 0$ is a regularization parameter. Our task is to determine $D$ for which we can guarantee
\begin{align}
    \label{eqn:app-spectral-order}
    (1-\epsilon) (\mat{K} + \lambda \mat{I}_n)
    \preceq (\mat{S}_{\rm CtR} \mat{X}^{\otimes p})^\top (\mat{S}_{\rm CtR} \mat{X}^{\otimes p}) + \lambda \mat{I}_n
    \preceq (1+\epsilon) (\mat{K} + \lambda \mat{I}_n)
\end{align}
with probability at least $1-\delta$.

\begin{proof}
We rephrase \citet[Lem. 11]{Ahle2020} here for the case $\lambda > 0$. This ensures that $\mat{K} + \lambda \mat{I}_n$ is positive definite and $(\mat{K} + \lambda \mat{I}_n)^{-1/2}$ exists. The same result for $\lambda=0$ can then be obtained using Fatou's as shown in the original lemma.

By \citet[Prop. 2.1.1.]{Tropp2012}, left and right multiplying the spectral inequality \eqref{eqn:app-spectral-order} by $(\mat{K} + \lambda \mat{I}_n)^{-1/2}$ does not change the positive semi-definite order. So \eqref{eqn:app-spectral-order} becomes
\begin{align*}
    (1-\epsilon) \mat{I}_n \preceq
    (\mat{S}_{\rm CtR} \mat{X}^{\otimes p} (\mat{K} + \lambda \mat{I}_n)^{-1/2})^\top
    (\mat{S}_{\rm CtR} \mat{X}^{\otimes p} (\mat{K} + \lambda \mat{I}_n)^{-1/2})^\top + \lambda (\mat{K} + \lambda \mat{I}_n)^{-1}
    \preceq (1 + \epsilon) \mat{I}_n,
\end{align*}
which is equivalent to
\begin{align}
    \label{eqn:operator-inequality}
    \| (\mat{S}_{\rm CtR} \mat{X}^{\otimes p} (\mat{K} + \lambda \mat{I}_n)^{-1/2})^\top
    (\mat{S}_{\rm CtR} \mat{X}^{\otimes p} (\mat{K} + \lambda \mat{I}_n)^{-1/2})^\top + \lambda (\mat{K} + \lambda \mat{I}_n)^{-1} - \mat{I}_n \|_2 \leq \epsilon.
\end{align}
Now we define $\mat{Z} := \mat{X}^{\otimes p} ((\mat{X}^{\otimes p})^\top \mat{X}^{\otimes p} + \lambda \mat{I}_n)^{-1/2}$ so that
\begin{align*}
    \mat{Z}^\top \mat{Z}
    &= (\mat{K} + \lambda \mat{I}_n)^{-1/2} \mat{K} (\mat{K} + \lambda \mat{I}_n)^{-1/2} \\
    &= (\mat{K} + \lambda \mat{I}_n)^{-1/2} (\mat{K} + \lambda \mat{I}_n - \lambda \mat{I}_n) (\mat{K} + \lambda \mat{I}_n)^{-1/2} \\
    &= \mat{I}_n - \lambda (\mat{K} + \lambda \mat{I}_n)^{-1}.
\end{align*}
Then \eqref{eqn:operator-inequality} becomes $\| (\mat{S}_{\rm CtR} \mat{Z})^\top (\mat{S}_{\rm CtR} \mat{Z}) - \mat{Z}^\top \mat{Z} \|_2 \leq \epsilon$.
and we can apply our bound on the Frobenius norm error \eqref{eqn:app-frobenius-bound}, since it holds for \textit{any} $\mat{X}, \mat{Y}$. So we can set $\mat{X}=\mat{Y}=\mat{Z} \in \mathbb{R}^{d^p \times n}$ and get:
\begin{align}
    \nonumber
    &{\rm Pr} \{ \| (\mat{S}_{\rm CtR} \mat{Z})^\top (\mat{S}_{\rm CtR} \mat{Z}) - \mat{Z}^\top \mat{Z} \|_2 \geq \epsilon \norm{Z}_F \norm{Z}_F \} \\
    &\leq {\rm Pr} \{ \| (\mat{S}_{\rm CtR} \mat{Z})^\top (\mat{S}_{\rm CtR} \mat{Z}) - \mat{Z}^\top \mat{Z} \|_F \geq \epsilon \norm{Z}_F \norm{Z}_F \}
    \leq \delta.
    \label{eqn:app-spectral-to-frob}
\end{align}
Now we have that
\begin{align*}
    \norm{Z}_F^2
    = {\rm tr} ( \mat{Z}^\top \mat{Z} )
    = {\rm tr} ( \mat{I}_n - \lambda (\mat{K} + \lambda \mat{I}_n)^{-1} )
\end{align*}
and ${\rm tr} ( \mat{Z}^\top \mat{Z} ) = \sum_{i=1}^n \lambda_i (\mat{Z}^\top \mat{Z})$ is the sum over eigenvalues $\lambda_i(\mat{Z}^\top \mat{Z})$. This gives
\begin{align*}
    {\rm tr} ( \mat{Z}^\top \mat{Z} )
    = \sum_{i=1}^n 1 - \frac{\lambda}{\lambda + \lambda_i(\mat{K})}
    = \sum_{i=1}^n \frac{\lambda_i(\mat{K})}{\lambda_i(\mat{K}) + \lambda}
    = {\rm tr} (\mat{K} (\mat{K} + \lambda \mat{I})^{-1})
    =: s_{\lambda}(\mat{K}),
\end{align*}
where $0 \leq s_\lambda(\mat{K}) \leq n$ is the $\lambda$-statistical dimension of $\mat{K}$.

Substituting $\epsilon = \epsilon' / \norm{Z}_F^2 = \epsilon' s_\lambda(\mat{K})^{-1}$ for some $\epsilon' > 0$ in \eqref{eqn:app-spectral-to-frob} ensures that $\| (\mat{S}_{\rm CtR} \mat{Z})^\top (\mat{S}_{\rm CtR} \mat{Z}) - \mat{Z}^\top \mat{Z} \|_2 \leq \epsilon'$ is satisfied with probability at least $1-\delta$, when $\mat{S}_{\rm CtR}$ has $2D s_\lambda(\mat{K})^2$ rows, where $D$ is the same as in Thm.~\ref{thrm:complex-norm-pres}.
\end{proof}

\newpage

\section{VARIANCE OF COMPLEX-TO-REAL SKETCHES}

\label{sec:app-ctr-variances}

In this section, we derive the variances of non-structured CtR-sketches.

\subsection{The structure of CtR variances}
\label{sec:ctr-structure}

We start by deriving the general variance structure of CtR-sketches that we will frequently refer to later on. For a complex random variable $z = a + \iu \, b$ with $a,b \in \mathbb{R}$, we have $|z|^2 = a^2 + b^2$ and ${\rm Re} \{z^2\} = a^2-b^2$. Combining both equations gives $a^2 = \frac{1}{2}(|z|^2 + {\rm Re} \{z^2\})$. The scalar $a$ is real-valued and its variance $\mathbb{V}[a] = \mathbb{E}[a^2] - \mathbb{E}[a]^2$ is thus
\begin{align}
    \label{eqn:var-expansion}
    \mathbb{V}[a]
    = \frac{1}{2}{\rm Re} \{\mathbb{E}[|z|^2] + \mathbb{E}[z^2] - 2 \mathbb{E}[a]^2\}.
\end{align}
Let $\Phi_{\rm C} : \mathbb{R}^d \rightarrow \mathbb{C}^D$ be a complex polynomial sketch as defined in Eq.~\ref{eqn:polynomial-estimator} and be $\hat{k}_{\rm C}(\mat{x}, \mat{y}) = \Phi_{\rm C}(\mat{x})^{\top} \overline{\Phi_{\rm C}(\mat{y})} \in \mathbb{C}$ the associated approximate kernel for some $\mat{x}, \mat{y} \in \mathbb{R}^d$. As the kernel estimate is an unbiased estimate of the real-valued target kernel $k(\mat{x}, \mat{y})$, we have
\begin{align*}
    \mathbb{E}[\hat{k}_{\rm C}(\mat{x}, \mat{y})] = \mathbb{E}[{\rm Re} \{\hat{k}_{\rm C}(\mat{x}, \mat{y})\}] + \iu \cdot \mathbb{E}[{\rm Im} \{\hat{k}_{\rm C}(\mat{x}, \mat{y})\}] = k(\mat{x}, \mat{y}).
\end{align*}
From this it follows that $\mathbb{E}[{\rm Im} \{\hat{k}_{\rm C}(\mat{x}, \mat{y})\}] = 0$ and therefore $\mathbb{E}[\hat{k}_{\rm C}(\mat{x}, \mat{y})] = \mathbb{E}[{\rm Re} \{\hat{k}_{\rm C}(\mat{x}, \mat{y})\}] = k(\mat{x}, \mat{y})$. Setting $z=\hat{k}_{\rm C}(\mat{x}, \mat{y})$ and $a = {\rm Re} \{\hat{k}_{\rm C}(\mat{x}, \mat{y})\} =: \hat{k}_{\rm CtR}(\mat{x}, \mat{y})$ in Eq.~\ref{eqn:var-expansion} yields
\begin{align*}
    \mathbb{V}[\hat{k}_{\rm CtR}(\mat{x}, \mat{y})]
    &= \frac{1}{2}{\rm Re} \{\mathbb{E}[|\hat{k}_{\rm C}(\mat{x}, \mat{y})|^2] + \mathbb{E}[\hat{k}_{\rm C}(\mat{x}, \mat{y})^2] - 2 \mathbb{E}[{\rm Re} \{\hat{k}_{\rm C}(\mat{x}, \mat{y})\}]^2\} \\
    &= \frac{1}{2}{\rm Re} \{\mathbb{E}[|\hat{k}_{\rm C}(\mat{x}, \mat{y})|^2] + \mathbb{E}[\hat{k}_{\rm C}(\mat{x}, \mat{y})^2] - 2 \mathbb{E}[\hat{k}_{\rm C}(\mat{x}, \mat{y})]^2\} \\
    &= \frac{1}{2} {\rm Re} \{ \mathbb{V} [\hat{k}_{\rm C}(\mat{x}, \mat{y})] + \mathbb{PV} [\hat{k}_{\rm C}(\mat{x}, \mat{y})] \},
\end{align*}
where $\mathbb{PV} [\hat{k}_{\rm C}(\mat{x}, \mat{y})] := \mathbb{E} [\hat{k}_{\rm C}(\mat{x}, \mat{y})^2] - \mathbb{E}[\hat{k}_{\rm C}(\mat{x}, \mat{y})] \in \mathbb{C}$ is called the {\em pseudo-variance} of $\hat{k}_{\rm C}(\mat{x}, \mat{y})$ \citep[][Chapter 5]{FundProb2018}. In fact, we show next that ${\rm Im } \{\mathbb{PV} [\hat{k}_{\rm C}(\mat{x}, \mat{y})]\} = 0$ for all the sketches discussed in this work. Hence, we can also write $\mathbb{V}[\hat{k}_{\rm CtR}(\mat{x}, \mat{y})] = \frac{1}{2} (\mathbb{V} [\hat{k}_{\rm C}(\mat{x}, \mat{y})] + \mathbb{PV} [\hat{k}_{\rm C}(\mat{x}, \mat{y})])$ for them since $\mathbb{V}[z] \in \mathbb{R}$ for any $z \in \mathbb{C}$.
In order to determine $\mathbb{V}[\hat{k}_{\rm CtR}(\mat{x}, \mat{y})]$, we thus work out $\mathbb{V} [\hat{k}_{\rm C}(\mat{x}, \mat{y})]$ and $\mathbb{PV} [\hat{k}_{\rm C}(\mat{x}, \mat{y})]$ for Gaussian, Rademacher and ProductSRHT sketches in the following.

\subsection{Gaussian and Rademacher sketches}

\label{sec:gaus-rad-variances}

In this section, we work out the variance of Gaussian and Rademacher CtR-sketches. For a set of $D$ i.i.d. random feature samples, we have
\begin{align}
    \label{eqn:iid-expansion}
    \mathbb{V}[\hat{k}_{\rm CtR}(\mat{x}, \mat{y})]
    = \mathbb{V}[{\rm Re}\{\hat{k}_{\rm C}(\mat{x}, \mat{y})\}]
    = \mathbb{V}\left[ {\rm Re}\{\Phi_{\rm C}(\mat{x})^{\top} \overline{\Phi_{\rm C}(\mat{y})}\} \right]
    = \frac{1}{D^2} \sum_{\ell=1}^D \mathbb{V} \left[{\rm Re} \left\{ \prod_{i=1}^p (\mat{w}_{i, \ell}^{\top} \mat{x}) \overline{(\mat{w}_{i, \ell}^{\top} \mat{y}}) \right\} \right].
\end{align}
As $\{\mat{w}_{i, \ell}\}_{\ell=1}^D$ are i.i.d., the variance terms are equal for each $\ell$ in Eq.~\ref{eqn:iid-expansion} and $\mathbb{V}[\hat{k}_{\rm CtR}(\mat{x}, \mat{y})] \propto 1/D$. We can therefore assume $D=1$ and drop the index $\ell$ for simplicity in the following. We then rescale the variances by $1/D$ later.

As our estimator is unbiased, we have $\mathbb{E}[\hat{k}_{\rm C}(\mat{x}, \mat{y})] = k(\mat{x}, \mat{y}) = (\dotprod{x}{y})^p$. Thus, we only need to work out $\mathbb{E}[|\hat{k}_{\rm C}(\mat{x}, \mat{y})|^2]$ and $\mathbb{E}[\hat{k}_{\rm C}(\mat{x}, \mat{y})^2]$ for the variance and pseudo-variance, respectively.

\paragraph{Pseudo-Variance}
We start with $\mathbb{E}[\hat{k}_{\rm C}(\mat{x}, \mat{y})^2]$ to derive the pseudo-variance $\mathbb{PV}[\hat{k}_{\rm C}(\mat{x}, \mat{y})]$ after.
\begin{align}
    \mathbb{E}[\hat{k}_{\rm C}(\mat{x}, \mat{y})^2]
    = \mathbb{E} \left[\left( \prod_{i=1}^p \dotprodi{w}{x}{i} \overline{\dotprodi{w}{y}{i}} \right)^2\right]
    &= \prod_{i=1}^p \mathbb{E} \left[ (\dotprodi{w}{x}{i})^2
    (\overline{\dotprodi{w}{y}{i}})^2 \right]
    = \mathbb{E} \left[ (\dotprod{w}{x})^2
    (\overline{\dotprod{w}{y}})^2 \right]^p \\
    &= \left( \sum_{i=1}^d \sum_{j=1}^d \sum_{k=1}^d \sum_{l=1}^d \mathbb{E} [w_i w_j \overline{w_k} \overline{w_l}] x_i x_j y_k y_l \right)^p
\end{align}
\newpage
$\mathbb{E}_{ij\overline{kl}} := \mathbb{E} [w_i w_j \overline{w_k} \overline{w_l}] \neq 0$, only if:
\begin{enumerate}
    \item $i=j=k=l$: there are $d$ terms $(\mathbb{E}_{ij\overline{kl}}) x_i x_j y_k y_l = \mathbb{E} [|w_i|^4] x_i^2 y_i^2$.
    \item $i=k \neq j=l$: there are $d(d-1)$ terms $(\mathbb{E}_{ij\overline{kl}}) x_i x_j y_k y_l = \mathbb{E}[|w_i|^2] x_i y_i \mathbb{E}[|w_j|^2] x_j y_j = x_i y_i x_j y_j$.
    \item $i=l \neq j=k$: there are $d(d-1)$ terms $(\mathbb{E}_{ij\overline{kl}}) x_i x_j y_k y_l = \mathbb{E}[|w_i|^2] x_i y_i \mathbb{E}[|w_j|^2] x_j y_j = x_i y_i x_j y_j$.
\end{enumerate}
As for both the Gaussian and the Rademacher sketch, we have $\mathbb{E}[|w_i|^2] = 1$ for all $\{w_i\}_{i=1}^d$, we obtain:
\begin{align}
    \label{eqn:second-pseudo-moment}
    \mathbb{E} \left[ \hat{k}_{\rm C}(\mat{x}, \mat{y})^2 \right]
    = \left( \sum_{i=1}^d \mathbb{E} [|w_i|^4] x_i^2 y_i^2
    + 2 \sum_{i=1}^d \sum_{j \neq i}^d x_i y_i x_j y_j \right)^p
\end{align}
We have $\mathbb{E}[|w_i|^4] = 2$ and $\mathbb{E}[|w_i|^4] = 1$ for the Gaussian and Rademacher case, respectively. So the pseudo-variances $\mathbb{V}[\hat{k}_{\rm C}(\mat{x}, \mat{y})] = \mathbb{E}[\hat{k}_{\rm C}(\mat{x}, \mat{y})^2] - \mathbb{E}[\hat{k}_{\rm C}(\mat{x}, \mat{y})]^2$ are given by the following real-valued expressions:
\begin{align}
    \label{eqn:gaussian-pseudo-var}
    &\mathbb{PV} [\hat{k}_{\rm C}(\mat{x}, \mat{y})]
    = \frac{1}{D} \left( \left(2 (\dotprod{x}{y})^2\right)^p - (\dotprod{x}{y})^{2p} \right) & \text{(Gaussian)} \\
    \label{eqn:rademacher-pseudo-var}
    &\mathbb{PV} [\hat{k}_{\rm C}(\mat{x}, \mat{y})]
    = \frac{1}{D} \left( \left( 2 (\dotprod{x}{y})^2 - \sum_{i=1}^d x_i^2 y_i^2 \right)^p - (\dotprod{x}{y})^{2p}\right) & \text{(Rademacher)}
\end{align}
where we added the $1/D$ scaling that we left out before. Note that $\mathbb{E}[ |w_i|^4 ] \geq ( \mathbb{E}[ |w_i|^2 ] )^2 = 1$ by Jensen's inequality, which is why the Rademacher sketch yields the lowest possible pseudo-variance for the estimator studied in Section \ref{sec:real-complex-sketches}.

\paragraph{Variance} We work out $\mathbb{E}[|\hat{k}_{\rm C}(\mat{x}, \mat{y})|^2]$ to derive the variance $\mathbb{V}[\hat{k}_{\rm C}(\mat{x}, \mat{y})]$.
\begin{align}
    \label{eq:complex-weights-expansion}
    \mathbb{E}[|\hat{k}_{\rm C}(\mat{x}, \mat{y})|^2]
    &= \mathbb{E} \left[\left| \prod_{i=1}^p \dotprodi{w}{x}{i} \overline{\dotprodi{w}{y}{i}} \right|^2\right]
    = \mathbb{E} \left[\prod_{i=1}^p | \dotprodi{w}{x}{i}|^2 |\overline{\dotprodi{w}{y}{i}}|^2 \right] \\
    \nonumber
    &= \mathbb{E}\left[  (\sum_{i=1}^d w_i x_i)  (\sum_{j=1}^d \overline{w_j} y_j) ( \sum_{k=1}^d\overline{w_k} x_k ) ( \sum_{l = 1}^d w_l y_l ) \right]^p
    = \left(\sum_{i=1}^d \sum_{j=1}^d \sum_{k=1}^d \sum_{l=1}^d \mathbb{E} [w_i\overline{w_j} \overline{w_k} w_l] x_i y_j x_k y_l\right)^p.
\end{align}
Now we check when $\mathbb{E}_{i\overline{jk}l} := \mathbb{E} \big[ w_i\overline{w_j}\overline{w_k} w_l \big] \neq 0$ holds. The analysis is the same as before with differently placed conjugates leading to different expressions.
\begin{enumerate}
    \item $i=j=k=l$: there are $d$ terms $(\mathbb{E}_{i\overline{jk}l}) x_i y_j x_k y_l = \mathbb{E}[| w_i |^4]  x_i^2 y_i^2$.

    \item $i=j \neq k=l$: there are $d(d-1)$ terms $(\mathbb{E}_{i\overline{jk}l}) x_i y_j x_k y_l = \mathbb{E}[ |w_i|^2] \mathbb{E}[|w_k|^2 ]  x_i x_k y_i y_k$.

    \item $i=k \neq j=l$, there are $d(d-1)$ terms $(\mathbb{E}_{i\overline{jk}l}) x_i y_j x_k y_l = \mathbb{E}[ |w_i|^2] \mathbb{E}[ |w_j|^2 ] x_i^2 y_j^2$.

    \item $i=l \neq j=k$, there are $d(d-1)$ terms $(\mathbb{E}_{i\overline{jk}l}) x_i y_j x_k y_l  = \mathbb{E}[ w_i^2  ] \mathbb{E}[\overline{w_j}^2  ] x_i x_j y_i y_j$.
\end{enumerate}
Therefore,
\begin{align*}
    \mathbb{E}[|\hat{k}_{\rm C}(\mat{x}, \mat{y})|^2]^{1/p}
    &= \sum_{i=1}^d \mathbb{E}[|w_i|^4] x_i^2 y_i^2 + \sum_{i=1}^d \sum_{\substack{j=1 \\ j \neq i}}^d x_i^2 y_j^2 +  \sum_{i=1}^d \sum_{\substack{j=1 \\ j \neq i}}^d x_i x_j y_i y_j + \sum_{i=1}^d \sum_{\substack{j=1 \\ j \neq i}}^d \mathbb{E} [w_i^2] \mathbb{E}[ \overline{w_j}^2] x_i x_j y_i y_j \\
    &= \sum_{i=1}^d \mathbb{E}[|w_i|^4] x_i^2 y_i^2 + \left[ \| \mat{x} \|^2 \| \mat{y} \|^2 - \sum_{i=1}^d x_i^2 y_i^2 \right] +  \left[ (\dotprod{x}{y})^2 - \sum_{i=1}^d x_i^2 y_i^2 \right]
    + \sum_{i=1}^d \sum_{\substack{j=1 \\ j \neq i}}^d \mathbb{E} [w_i^2] \mathbb{E}[ \overline{w_j}^2] x_i x_j y_i y_j 
\end{align*}
Once again, we have $\mathbb{E}[|w_i|^4]=2$ and $\mathbb{E}[|w_i|^4]=1$ for the Gaussian and Rademacher case, respectively. We further have $\mathbb{E} [w_i^2] = \mathbb{E} [\overline{w_i}^2] = \mathbb{E} [{\rm Re} \{w_i\}^2] - \mathbb{E} [{\rm Im} \{w_i\}^2]$ with $-1 \leq \mathbb{E} [w_i^2] \leq 1$ because $\mathbb{E} [|w_i|^2] = \mathbb{E} [{\rm Re} \{w_i\}^2] + \mathbb{E} [{\rm Im} \{w_i\}^2] = 1$. Thus, $\mathbb{E} [w_i^2] \mathbb{E}[ \overline{w_j}^2] = \mathbb{E} [w_i^2]^2 \in [0,1]$, where the extreme cases $0$ and $1$ are achieved by sampling $w_i$ from $\{1, 1, \iu, -\iu\}$ (complex Rademacher) and $\{1, -1\}$ (real Rademacher), respectively.
Therefore, we define the variable $q := (1+\mathbb{E} [w_i^2]^2)$ that equals 1 for the complex case and 2 for the real one.
We finally obtain the following variances $\mathbb{V}[\hat{k}_{\rm C}(\mat{x}, \mat{y})] = \mathbb{E}[|\hat{k}_{\rm C}(\mat{x}, \mat{y})|^2] - |\mathbb{E}[\hat{k}_{\rm C}(\mat{x}, \mat{y})]|^2$:
\begin{align}
    \label{eqn:gaussian-var}
    &\mathbb{V} [\hat{k}_{\rm C}(\mat{x}, \mat{y})]
    = \frac{1}{D} \left( \left( \norm{x}^2 \norm{y}^2 + q (\dotprod{x}{y})^2 \right)^p - (\dotprod{x}{y})^{2p} \right) & \text{(Gaussian)} \\
    \label{eqn:rademacher-var}
    &\mathbb{V} [\hat{k}_{\rm C}(\mat{x}, \mat{y})]
    = \frac{1}{D} \left( \left( \norm{x}^2 \norm{y}^2 + q \sum_{i=1}^d \sum_{j \neq i}^d x_i x_j y_i y_j \right)^p - (\dotprod{x}{y})^{2p} \right) & \text{(Rademacher)}
\end{align}
where we added the $1/D$ scaling that we left out before. Note also that $\mathbb{E}[ |w_i|^4 ] \geq ( \mathbb{E}[ |w_i|^2 ] )^2 = 1$ by Jensen's inequality, which is why the (real/complex) Rademacher sketch yields the lowest possible variance for the estimator studied in Section \ref{sec:real-complex-sketches}.

Thus, when $\sum_{i=1}^d \sum_{j \neq i}^d x_i x_j y_i y_j \geq 0$, sampling $w_i$ uniformly from $\{1, -1, \iu, -\iu \}$ yields the lowest possible CtR-variances
as both the variance as well as the pseudo-variance lower bound are attained. In the opposite case, real Rademacher sketches (sampling $w_i$ from $\{1, -1\}$) yield the lowest variances. This is because $\mathbb{E}[|\hat{k}_{\rm C}(\mat{x}, \mat{y})|^2] \geq 0$ is minimized in this case.

\subsection{Gaussian and Rademacher CtR Variance Advantage over their Real-Valued Analogs}

\label{sec:app-comparison-real-complex}

In the following, we compare Gaussian and Rademacher CtR-sketches against their real-valued analogs assuming that the corresponding feature maps have equal dimensions. Thus, we assign $D$ random features to the real feature map $\Phi_{\rm R}: \mathbb{R}^d \rightarrow \mathbb{R}^{D}$ and only $D/2$ random feature samples to the CtR feature map $\Phi_{\rm CtR}: \mathbb{R}^d \rightarrow \mathbb{R}^{D}$ (Alg.~\ref{alg:ctr-algorithm}) leading to the same output dimension $D$.

We call the corresponding kernel estimates $\hat{k}_{\rm R}(\mat{x}, \mat{y}) = \Phi_{\rm R}(\mat{x})^{\top} \Phi_{\rm R}(\mat{y})$ and $\hat{k}_{\rm CtR}(\mat{x}, \mat{y}) = \Phi_{\rm CtR}(\mat{x})^{\top} \Phi_{\rm CtR}(\mat{y})$.
$\mathbb{V}[\hat{k}_{\rm R}(\mat{x}, \mat{y})]$ is given in Eq.~\ref{eqn:gaussian-var} and \ref{eqn:rademacher-var}, where we set $q=2$.

We further have $\mathbb{V}[\hat{k}_{\rm CtR}(\mat{x}, \mat{y})] = \frac{1}{2} (\mathbb{V}[\hat{k}_{\rm C}(\mat{x}, \mat{y})] + \mathbb{PV}[\hat{k}_{\rm C}(\mat{x}, \mat{y})])$ as shown in Section \ref{sec:ctr-structure}. $\mathbb{V}[\hat{k}_{\rm C}(\mat{x}, \mat{y})]$ is given in Eq.~\ref{eqn:gaussian-var} and \ref{eqn:rademacher-var}, where we set $q=1$. $\mathbb{PV}[\hat{k}_{\rm C}(\mat{x}, \mat{y})]$ is given in Eq.~\ref{eqn:gaussian-pseudo-var} and \ref{eqn:rademacher-pseudo-var}, respectively.

We start with the simpler Gaussian case and study the Rademacher case after.

\subsubsection{Gaussian Case: Proof of Theorem \ref{thrm:gauss-ctr-advantage}.}

\label{sec:proof-gaus-ctr-advantage}

\begin{proof}
Taking into account that $\mat{S}_{\rm CtR}$ has only $D/2$ rows for $\Phi_{\rm R}$ and $\Phi_{\rm CtR}$ to have equal dimensions $D$, the variance difference of their kernel estimates yields:
\begin{align*}
    &\mathbb{V} [ \hat{k}_{\rm R} (\mat{x}, \mat{y}) ] - \mathbb{V} [ \hat{k}_{\rm CtR} (\mat{x}, \mat{y}) ] \\
    &= \frac{1}{D} \left( \left( \norm{x}^2 \norm{y}^2 + 2 (\dotprod{x}{y})^{2} \right)^p - (\dotprod{x}{y})^{2p} \right) - \frac{1}{D} \left( \left( \norm{x}^2 \norm{y}^2 + (\dotprod{x}{y})^2 \right)^p + \left(2 (\dotprod{x}{y})^2 \right)^p - 2 (\dotprod{x}{y})^{2p} \right) \\
    &= \frac{1}{D} \left( \left( \norm{x}^2 \norm{y}^2 + 2 (\dotprod{x}{y})^{2} \right)^p - \left(2 (\dotprod{x}{y})^2 \right)^p \right) - \frac{1}{D} \left( \left( \norm{x}^2 \norm{y}^2 + (\dotprod{x}{y})^2 \right)^p - (\dotprod{x}{y})^{2p} \right) \\
    &= \frac{1}{D} \sum_{k=0}^{p-1} \binom{p}{k} \left(2 (\dotprod{x}{y})^2 \right)^k \left( \norm{x}^2 \norm{y}^2 \right)^{p-k}
    - \frac{1}{D} \sum_{k=0}^{p-1} \binom{p}{k} (\dotprod{x}{y})^{2k} \left( \norm{x}^2 \norm{y}^2 \right)^{p-k} \\
    &= \frac{1}{D} \sum_{k=0}^{p-1} \binom{p}{k} (2^k - 1) (\dotprod{x}{y})^{2k} \left( \norm{x}^2 \norm{y}^2 \right)^{p-k}
    \geq 0
\end{align*}
\end{proof}
Thus, the Gaussian CtR-estimator is always better regardless of the choice of $\mat{x}, \mat{y}$ and $p$ and despite using only half the random feature samples. Note that the variance difference is zero if $p=1$ and increases as $p$ increases. Moreover, the difference is maximized for parallel $\mat{x}$ and $\mat{y}$. In this case, we have $(\dotprod{x}{y}) = \norm{x}\norm{y}$ and the difference becomes
\begin{align*}
    \mathbb{V} [ \hat{k}_{\rm R} (\mat{x}, \mat{y}) ] - \mathbb{V} [ \hat{k}_{\rm CtR} (\mat{x}, \mat{y}) ]
    = \frac{1}{D} \sum_{k=0}^{p-1} \binom{p}{k} (2^k - 1) \left( \norm{x}^2 \norm{y}^2 \right)^{p}
    = \frac{1}{D} \norm{x}^{2p} \norm{y}^{2p} (3^p - 2^{p+1} + 1)
\end{align*}
We analyze the more difficult Rademacher case next.

\subsubsection{Rademacher Case: Proof of Theorem \ref{thrm:rad-ctr-advantage}.}

\label{sec:proof-rad-ctr-advantage}

\begin{proof}
Taking into account that $\mat{S}_{\rm CtR}$ has only $D/2$ rows for $\Phi_{\rm R}$ and $\Phi_{\rm CtR}$ to have equal dimensions $D$, the variance difference of their kernel estimates yields:
\begin{align*}
    &\mathbb{V} [ \hat{k}_{\rm R} (\mat{x}, \mat{y}) ] - \mathbb{V} [ \hat{k}_{\rm CtR} (\mat{x}, \mat{y}) ] \\
    &= \frac{1}{D} \left( \left( \norm{x}^2 \norm{y}^2 + 2 \sum_{i=1}^d \sum_{j \neq i} x_i x_j y_i y_j \right)^p - (\dotprod{x}{y})^{2p} \right) \\
    &\quad - \frac{1}{D} \left\{ \left(\norm{x}^2 \norm{y}^2 + \sum_{i=1}^d \sum_{j \neq i} x_i x_j y_i y_j \right)^p - (\dotprod{x}{y})^{2p}
    + \left( 2 (\dotprod{x}{y})^2 - \sum_{i=1}^d x_i^2 y_i^2 \right)^p - (\dotprod{x}{y})^{2p} \right\}
\end{align*}
Next, we write $(\dotprod{x}{y})^{2p} = ((\dotprod{x}{y})^{2} - \sum_{i=1}^d x_i^2 y_i^2 + \sum_{i=1}^d x_i^2 y_i^2)^p$. In this way, we can factor out the term $a := (\dotprod{x}{y})^{2} - \sum_{i=1}^d x_i^2 y_i^2 = \sum_{i=1}^d \sum_{j \neq i} x_i x_j y_i y_j$ and apply the binomial theorem to {\em all} addends. This gives:
\begin{align*}
    &\mathbb{V} [ \hat{k}_{\rm R} (\mat{x}, \mat{y}) ] - \mathbb{V} [ \hat{k}_{\rm CtR} (\mat{x}, \mat{y}) ]
    = \frac{1}{2D} \sum_{k=0}^p \binom{p}{k} a^{p-k} \\
    &\quad \left(\left( \norm{x}^2 \norm{y}^2 + (\dotprod{x}{y})^{2} - \sum_{i=1}^d x_i^2 y_i^2 \right)^k - \left((\norm{x}^2 \norm{y}^2)^k + (\dotprod{x}{y})^{2k} - (\sum_{i=1}^d x_i^2 y_i^2)^k \right) \right)
\end{align*}
We now show that the following term is always non-negative:
\begin{align}
    B &:= \left(\left( \norm{x}^2 \norm{y}^2 + (\dotprod{x}{y})^{2} - \sum_{i=1}^d x_i^2 y_i^2 \right)^k - \left((\norm{x}^2 \norm{y}^2)^k + (\dotprod{x}{y})^{2k} - (\sum_{i=1}^d x_i^2 y_i^2)^k \right) \right)
\end{align}
For $k=0$ and $k=1$, $B=0$. For $k \geq 2$, we have:
\begin{align*}
    \left( \norm{x}^2 \norm{y}^2 + (\dotprod{x}{y})^{2} - \sum_{i=1}^d x_i^2 y_i^2 \right)^k
    = \sum_{j=0}^k \binom{k}{j} \norm{x}^{2j} \norm{y}^{2j} \left((\dotprod{x}{y})^2 - \sum_{i=1}^d x_i^2 y_i^2\right)^{k-j}
\end{align*}
Plugging this expression into $B$ and cancelling out the addend for $j=k$ yields:
\begin{align*}
    B = \sum_{j=0}^{k-1} \binom{k}{j} \norm{x}^{2j} \norm{y}^{2j} \left((\dotprod{x}{y})^2 - \sum_{i=1}^d x_i^2 y_i^2\right)^{k-j} - \left((\dotprod{x}{y})^{2k} - (\sum_{i=1}^d x_i^2 y_i^2)^k \right)
\end{align*}
Next, we refactor $(\dotprod{x}{y})^{2k} - (\sum_{i=1}^d x_i^2 y_i^2)^k$:
\begin{align*}
    (\dotprod{x}{y})^{2k} - (\sum_{i=1}^d x_i^2 y_i^2)^k
    &= ((\dotprod{x}{y})^2 - \sum_{i=1}^d x_i^2 y_i^2 + \sum_{i=1}^d x_i^2 y_i^2)^k - (\sum_{i=1}^d x_i^2 y_i^2)^k \\
    &= \sum_{j=0}^k \binom{k}{j} (\sum_{i=1}^d x_i^2 y_i^2)^j ((\dotprod{x}{y})^2 - \sum_{i=1}^d x_i^2 y_i^2)^{k-j} - (\sum_{i=1}^d x_i^2 y_i^2)^k \\
    &= \sum_{j=0}^{k-1} \binom{k}{j} (\sum_{i=1}^d x_i^2 y_i^2)^j ((\dotprod{x}{y})^2 - \sum_{i=1}^d x_i^2 y_i^2)^{k-j}
\end{align*}
Plugging this expression into $B$ yields:
\begin{align*}
    B = \sum_{j=0}^{k-1} \binom{k}{j} \left(\norm{x}^{2j} \norm{y}^{2j} - (\sum_{i=1}^d x_i^2 y_i^2)^j\right)
    ((\dotprod{x}{y})^2 - \sum_{i=1}^d x_i^2 y_i^2)^{k-j}
\end{align*}
Finally, we insert $B$ back into the original variance difference $\mathbb{V} [ \hat{k}_{\rm R} (\mat{x}, \mat{y}) ] - \mathbb{V} [ \hat{k}_{\rm CtR} (\mat{x}, \mat{y}) ]$ (remember that $B=0$ if $k<2$):
\begin{align*}
    \mathbb{V} [ \hat{k}_{\rm R} (\mat{x}, \mat{y}) ] - \mathbb{V} [ \hat{k}_{\rm CtR} (\mat{x}, \mat{y}) ]
    = \frac{1}{D} \sum_{k=2}^p \binom{p}{k} a^{p-k} B
    = \frac{1}{D} \sum_{k=2}^p \sum_{j=0}^{k-1} \binom{p}{k} \binom{k}{j} a^{p-j} \left(\norm{x}^{2j} \norm{y}^{2j} - (\sum_{i=1}^d x_i^2 y_i^2)^j\right)
\end{align*}
Finally, we note that $b_j := \norm{x}^{2j} \norm{y}^{2j} - (\sum_{i=1}^d x_i^2 y_i^2)^j = (\sum_{i=1}^d \sum_{\ell=1}^d x_i^2 y_\ell^2)^j - (\sum_{i=1}^d x_i^2 y_i^2)^j \geq 0$ and $\mathbb{V} [ \hat{k}_{\rm R} (\mat{x}, \mat{y}) ] - \mathbb{V} [ \hat{k}_{\rm CtR} (\mat{x}, \mat{y}) ] \geq 0$ if $a = \sum_{i=1}^d \sum_{j' \neq i}^d x_i x_{j'} y_i y_{j'} \geq 0$. \end{proof}

\section{VARIANCE OF OUR PROPOSED (CtR-)ProductSRHT SKETCH}

\label{sec:app-structured-variances}

In Section~\ref{sec:app-structured-sketches}, we proposed a novel (CtR-) ProductSRHT sketch that is a slightly modified version of the TensorSRHT sketch proposed by \citet{Ahle2020}. Unlike previous work, we derive the variance of (CtR-)ProductSRHT and show its statistical advantage over unstructured sketches. The statistical advantage stems from the orthogonality of $\mat{H}$ as well as from the sampling matrices $\{ \mat{P}_i \}_{i=1}^p$ that sample \textit{without replacement}. The statistical advantage is lost when sampling \textit{with replacement} as is done in \citet{Ahle2020}. In this case, the variance falls back to the Rademacher variance.

\subsection{Variances of ProductSRHT as well as CtR-ProductSRHT}

\label{sec:structured-variance-derivation}

As shown in Section \ref{sec:ctr-structure}, the variance of the CtR sketches discussed in this work is of the form:
\begin{align*}
    \mathbb{V}[\hat{k}_{\rm CtR}(\mat{x}, \mat{y})] = \frac{1}{2} (\mathbb{V} [\hat{k}_{\rm C}(\mat{x}, \mat{y})] + \mathbb{PV} [\hat{k}_{\rm C}(\mat{x}, \mat{y})]),
\end{align*}
where $\hat{k}_{\rm C}(\mat{x}, \mat{y})$ is the complex-valued kernel estimate of the polynomial kernel obtained through our sketch. In order to derive the variance of CtR-ProductSRHT, we need to derive the variance $\mathbb{V} [\hat{k}_{\rm C}(\mat{x}, \mat{y})]$ and the pseudo-variance $\mathbb{PV} [\hat{k}_{\rm C}(\mat{x}, \mat{y})]$. We will also derive the variance of real-valued ProductSRHT as a corollary of the variance of complex ProductSRHT.

\subsubsection{Pseudo-variance}

As before, we start with the pseudo-variance and derive the variance after. For the pseudo-variance $\mathbb{PV}[\hat{k}_{\rm C}(\mat{x}, \mat{y})] = \mathbb{E} [\hat{k}_{\rm C}(\mat{x}, \mat{y})^2] - \mathbb{E}[\hat{k}_{\rm C}(\mat{x}, \mat{y})]^2$, we need to work out $\mathbb{E} [\hat{k}_{\rm C}(\mat{x}, \mat{y})^2]$:
\begin{align*}
    \mathbb{E} [\hat{k}_{\rm C}(\mat{x}, \mat{y})^2]
    = \frac{1}{D^2} \sum_{\ell=1}^D \sum_{\ell'=1}^D \prod_{i=1}^p \mathbb{E} \left[ (\dotprodi{w}{x}{i, \ell}) (\overline{\dotprodi{w}{y}{i, \ell}}) (\dotprodi{w}{x}{i, \ell'}) (\overline{\dotprodi{w}{y}{i, \ell'}}) \right]
    = \frac{1}{D^2} \sum_{\ell=1}^D \sum_{\ell'=1}^D \underbrace{\mathbb{E} \left[ (\dotprodi{w}{x}{\ell}) (\overline{\dotprodi{w}{y}{\ell}}) (\dotprodi{w}{x}{\ell'}) (\overline{\dotprodi{w}{y}{\ell'}}) \right]^p}_{e(\ell, \ell')^p}
\end{align*}
We dropped the index $i$ in the last equality for ease of notation, as all $\{\mat{w}_{i, \ell}\}_{i=1}^p$ are i.i.d. samples and the expectation is thus the same for any $i$.
To work out the expectation $e(\ell, \ell')$, we need to distinguish different cases for $\ell$ and $\ell'$.
\begin{enumerate}
    \item $\ell=\ell'$ ($D$ terms): $e(\ell, \ell')^p = \mathbb{E} \left[ (\dotprodi{w}{x}{\ell})^2 (\overline{\dotprodi{w}{y}{\ell}})^2 \right]^p = \left( 2 (\dotprod{x}{y}^2) - \sum_{i=1}^d x_i^2 y_i^2 \right)^p$ \\
    (taken from Eq.~\ref{eqn:second-pseudo-moment} for the Rademacher case)
    \item $\ell \neq \ell'$ ($D(D-1)$ terms):\\
    \begin{align*}
        e(\ell, \ell')^p
        &= \left( \sum_{q=1}^d \sum_{r=1}^d \sum_{s=1}^d \sum_{t=1}^d \mathbb{E} [w_{\ell, q} \overline{w_{\ell, r}} w_{\ell', s} \overline{w_{\ell', t}}] x_q y_r x_s y_t \right)^p \\
        &= \left( \sum_{q=1}^d \sum_{r=1}^d \sum_{s=1}^d \sum_{t=1}^d \mathbb{E} [d_{q} \overline{d_{r}} d_{s} \overline{d_{t}}] \mathbb{E} [h_{p_\ell, q} h_{p_\ell, r} h_{p_{\ell'}, s} h_{p_{\ell'}, t}] x_q y_r x_s y_t \right)^p
    \end{align*}
\end{enumerate}
$d_q, d_r, d_s, d_t$ are uniform samples from $\{1, -1, \iu, -\iu\}$, i.e., complex Rademacher samples, that are independent from the index samples $p_{\ell, q}, p_{\ell, r}, p_{\ell', s}, p_{\ell', t}$, which is why we can factor out the two expectations. We will simplify the above sum by studying when $\mathbb{E}[d_q \overline{d_r} d_s \overline{d_t}] \neq 0$.

We have to distinguish three non-zero cases for $\mathbb{E}[d_q \overline{d_r} d_s \overline{d_t}]$:
\begin{enumerate}
    \item $q=r=s=t$ ($d$ terms): $\mathbb{E}[d_q \overline{d_r} d_s \overline{d_t}] = \mathbb{E} [|d_q|^4] = 1$
    \item $q=r \neq s=t$ ($d(d-1)$ terms): $\mathbb{E}[d_q \overline{d_r} d_s \overline{d_t}] = \mathbb{E} [|d_{q}|^2] \mathbb{E} [|d_{s}|^2] = 1$
    \item $q=t \neq r=s$ ($d(d-1)$ terms):
    $\mathbb{E}[d_q \overline{d_r} d_s \overline{d_t}] = \mathbb{E} [|d_q|^2] \mathbb{E} [|d_r|^2] = 1$
\end{enumerate}
because $\mathbb{E} [|d_q|^4] = \mathbb{E} [|d_q|^2] = 1$.

In Section \ref{sec:shuffling-hadamard}, we show that for $\ell \neq \ell'$ and $q \neq r$, $\mathbb{E} [h_{p_\ell, q} h_{p_\ell, r} h_{p_{\ell'}, r} h_{p_{\ell'}, q}] = -\frac{1}{\lceil D/d \rceil d - 1}$ holds. Therefore, $e(\ell, \ell')^p$ for $\ell \neq \ell'$ yields:
\begin{align*}
    e(\ell, \ell')^p
    &= \left(\sum_{i=1}^d x_i^2 y_i^2 + \sum_{i=1}^d \sum_{j \neq i}^d x_i y_i x_j y_j - \frac{1}{\lceil D/d \rceil d - 1} \sum_{i=1}^d \sum_{j \neq i}^d x_i y_i x_j y_j\right)^p \\
    &= \left( (\dotprod{x}{y})^2 - \frac{1}{\lceil D/d \rceil d - 1} \left[ (\dotprod{x}{y})^2 - \sum_{i=1}^d x_i^2 y_i^2 \right] \right)^p
\end{align*}
In fact, $e(\ell, \ell')^p$ does not depend on $\ell$ and $\ell'$ anymore after working out the expectations involved. Plugging $e(\ell, \ell')^p$ back into $\mathbb{E}[\hat{k}_{\rm C}(\mat{x}, \mat{y})^2]$ yields the following pseudo-variance for ProductSRHT:
\begin{align}
    \nonumber
    \mathbb{PV}[\hat{k}_{\rm C}(\mat{x}, \mat{y})^2]
    &= \frac{1}{D} \left[ \left( 2 (\dotprod{x}{y}^2) - \sum_{i=1}^d x_i^2 y_i^2 \right)^p - (\dotprod{x}{y})^{2p} \right] \\
    \nonumber
    &\quad\quad\quad + \left(1 - \frac{1}{D} \right) \left[ \left( (\dotprod{x}{y})^2 - \frac{1}{\lceil D/d \rceil d - 1} \left[ (\dotprod{x}{y})^2 - \sum_{i=1}^d x_i^2 y_i^2 \right] \right)^p - (\dotprod{x}{y})^{2p} \right] \\
    \label{eqn:pvar-tensor-srht}
    &= \frac{1}{D} \mathbb{PV}_{\rm Rad.}^{(p)} - \left(1 - \frac{1}{D}\right) \left[ (\dotprod{x}{y})^{2p} - \left((\dotprod{x}{y})^2 - \frac{\mathbb{PV}_{\rm Rad.}^{(1)}}{\lceil D / d \rceil d - 1}\right)^p \right]
\end{align}
$\mathbb{PV}_{\rm Rad.}^{(p)}$ and $\mathbb{PV}_{\rm Rad.}^{(1)}$ are the Rademacher pseudo-variance (\ref{eqn:rademacher-pseudo-var}) for a given degree $p$ and $p=1$, respectively.

\subsubsection{Variance}

\label{sec:tensor-srht-var-derivation}

Next we work out the variance $\mathbb{V} [\hat{k}_{\rm C}(\mat{x}, \mat{y})]$:
\begin{align*}
    \mathbb{V} \left[ \frac{1}{D} \sum_{\ell=1}^D \prod_{i=1}^p (\mat{w}_{i,\ell}^{\top} \mat{x}) (\overline{\mat{w}_{i,\ell}^{\top} \mat{y}}) \right]
    &= \frac{1}{D^2} \sum_{\ell=1}^D \sum_{\ell'=1}^D {\rm Cov} \left( \prod_{i=1}^p (\dotprodi{w}{x}{i, \ell}) (\overline{\dotprodi{w}{y}{i, \ell}}), \prod_{i=1}^p (\dotprodi{w}{x}{i, \ell'}) (\overline{\dotprodi{w}{y}{i, \ell'}}) \right)
\end{align*}
Again, we distinguish the cases $\ell = \ell'$ and $\ell \neq \ell'$:
\begin{enumerate}
    \item $\ell = \ell'$ ($D$ terms):
    \begin{align*}
        &{\rm Cov} \left( \prod_{i=1}^p (\dotprodi{w}{x}{i, \ell}) (\overline{\dotprodi{w}{y}{i, \ell}}), \prod_{i=1}^p (\dotprodi{w}{x}{i, \ell}) (\overline{\dotprodi{w}{y}{i, \ell}}) \right)
        = \mathbb{V} \left[ \prod_{i=1}^p (\dotprodi{w}{x}{i, \ell}) (\overline{\dotprodi{w}{y}{i, \ell}}) \right] \\
        &= \left( \norm{x}^2 \norm{y}^2 + (\dotprod{x}{y})^{2} - \sum_{i=1}^d x_i^2 y_i^2\right)^p - (\dotprod{x}{y})^{2p}
        \quad \text{(Using the complex Rademacher variance (\ref{eqn:rademacher-var}))}
    \end{align*}
    \item $\ell \neq \ell'$ ($D(D-1)$ terms). We discuss this case in detail below.
\end{enumerate}
\begin{align}
    \label{eqn:cov_ell_ellp}
    &{\rm Cov} \left( \prod_{i=1}^p (\dotprodi{w}{x}{i, \ell}) (\overline{\dotprodi{w}{y}{i, \ell}}), \prod_{i=1}^p (\dotprodi{w}{x}{i, \ell'}) (\overline{\dotprodi{w}{y}{i, \ell'}}) \right)
    = \mathbb{E} \left[ \prod_{i=1}^p (\dotprodi{w}{x}{i, \ell}) (\overline{\dotprodi{w}{y}{i, \ell}}) (\overline{\dotprodi{w}{x}{i, \ell'}) (\overline{\dotprodi{w}{y}{i, \ell'}}}) \right] - (\dotprod{x}{y})^{2p} \\
    \nonumber
    &= \mathbb{E} \left[ (\dotprodi{w}{x}{\ell}) (\overline{\dotprodi{w}{y}{\ell}}) (\overline{\dotprodi{w}{x}{\ell'}) (\overline{\dotprodi{w}{y}{\ell'}})} \right]^p - (\dotprod{x}{y})^{2p}
    = \underbrace{\mathbb{E} \left[ (\dotprodi{w}{x}{\ell}) (\overline{\dotprodi{w}{y}{\ell}}) (\overline{\dotprodi{w}{x}{\ell'}}) (\dotprodi{w}{y}{\ell'}) \right]^p}_{e_2(\ell, \ell')^p} - (\dotprod{x}{y})^{2p}
\end{align}
Next, we turn to the expression $e_2(\ell, \ell')^p$ that is almost the same as $e(\ell, \ell')^p$ for the pseudo-variance, the only difference being the complex conjugates that are placed differently:
\begin{align*}
    e_2(\ell, \ell')^p
    &= \left( \sum_{q=1}^d \sum_{r=1}^d \sum_{s=1}^d \sum_{t=1}^d \mathbb{E} [w_{\ell, q} \overline{w_{\ell, r}} \overline{w_{\ell', s}} w_{\ell', t}] x_q y_r x_s y_t \right)^p \\
    &= \left( \sum_{q=1}^d \sum_{r=1}^d \sum_{s=1}^d \sum_{t=1}^d \mathbb{E} [d_{q} \overline{d_{r} d_{s}} d_{t}] \mathbb{E} [h_{p_\ell, q} h_{p_\ell, r} h_{p_{\ell'}, s} h_{p_{\ell'}, t}] x_q y_r x_s y_t \right)^p
\end{align*}
We distinguish 4 cases for $\mathbb{E} [d_q \overline{d_r} \overline{d_s} d_t]$:
\begin{enumerate}
    \item $q=r=s=t$ ($d$ terms): $\mathbb{E} [d_q \overline{d_r} \overline{d_s} d_t] = \mathbb{E} [|d_q|^4] = 1$
    \item $q=r \neq s=t$ ($d(d-1)$ terms): $\mathbb{E} [d_q \overline{d_r} \overline{d_s} d_t] = \mathbb{E} [|d_{q}|^2] \mathbb{E} [|d_{s}|^2] = \mathbb{E} [|d_{q}|^2]^2 = 1$
    \item $q=s \neq r = t$ ($d(d-1)$ terms): $\mathbb{E} [d_{q} \overline{d_{r}} \overline{d_{s}} d_{t}] = \mathbb{E} [|d_{q}|^2] \mathbb{E} [|d_{r}|^2] = \mathbb{E} [|d_{q}|^2]^2 = 1$
    \item $q=t \neq r = s$ ($d(d-1)$ terms): $\mathbb{E} [d_{q} \overline{d_{r}} \overline{d_{s}} d_{t}] = \mathbb{E} [d_{q}^2] \mathbb{E} [\overline{d_{r}}^2] = 0$
\end{enumerate}
We showed case (4) on purpose although it is zero for complex Rademacher samples $d_q, d_r \in \mathbb{C}$. For real Rademacher samples, we have $\mathbb{E} [d_{q}^2] = \mathbb{E} [\overline{d_{r}}^2] = 1$ instead. This observation will allow us to work out the variance of complex and real ProductSRHT at the same time. Furthermore, we have $\mathbb{E} [h_{p_\ell, q} h_{p_\ell, r} h_{p_{\ell'}, r} h_{p_{\ell'}, q}] = -\frac{1}{\lceil D / d \rceil d}$ for any $q \neq r$ and $\ell \neq \ell'$ as already noted for the pseudo-variance. The derivation of this quantity is shown in Section \ref{sec:shuffling-hadamard}.

So $e_2(\ell, \ell')$ reduces to:
\begin{align*}
    e_2(\ell, \ell')
    &= \underbrace{\sum_{i=1}^d x_i^2 y_i^2}_{\textrm{Case (1)}}
    + \underbrace{\sum_{i=1}^d \sum_{j \neq i}^d x_i x_j y_i y_j}_{\textrm{Case (2)}}
    - \underbrace{\frac{1}{\lceil D / d \rceil d} \sum_{i=1}^d \sum_{j \neq i}^d x_i^2 y_j^2}_{\textrm{Case (3)}}
    - \underbrace{\frac{1}{\lceil D / d \rceil d} \sum_{i=1}^d \sum_{j \neq i}^d \mathbb{E} [d_{i}^2] \mathbb{E} [\overline{d_{j}}^2] x_i x_j y_i y_j}_{\textrm{Case (4)}} \\
    &= (\dotprod{x}{y})^2 - \frac{1}{\lceil D / d \rceil d} \sum_{i=1}^d \sum_{j \neq i}^d x_i^2 y_j^2 + \mathbb{E} [d_{i}^2] x_i x_j y_i y_j,
\end{align*}
where $\mathbb{E}[d_i^2] = 0$ for the complex case and $\mathbb{E}[d_i^2] = 1$ for the real case. Plugging back $e_2(\ell, \ell')$ for the case $\ell \neq \ell'$ back into Eq.~\ref{eqn:cov_ell_ellp} and solving for $\mathbb{V}[\hat{k}_{\rm C}(\mat{x}, \mat{y})]$ yields:
\begin{align}
    \label{eqn:var-tensor-srht}
    \mathbb{V}[\hat{k}_{\rm C}(\mat{x}, \mat{y})]
    = \mathbb{V}_{\rm Rad.}^{(p)}
    - \left(1 - \frac{1}{D}\right) \left[(\dotprod{x}{y})^{2p} - \left( (\dotprod{x}{y})^2 - \frac{\mathbb{V}_{\rm Rad.}^{(1)}}{\lceil D/d \rceil d-1} \right)^p \right]
\end{align}
with $\mathbb{V}_{\rm Rad.}^{(p)}$ and $\mathbb{V}_{\rm Rad.}^{(1)}$ being the Rademacher variance (\ref{eqn:rademacher-var}) for a given degree $p$ and $p=1$, respectively. We set $q=2$ for the real case and $q=1$ for the complex case inside Eq.~\ref{eqn:rademacher-var}.

Inserting the expressions for the variance (\ref{eqn:var-tensor-srht}) and pseudo-variance (\ref{eqn:pvar-tensor-srht}) into Eq.~\ref{eqn:ctr-var}, gives the variance of CtR-ProductSRHT.

\subsubsection{Shuffling the Rows of Stacked Hadamard Matrices}

\label{sec:shuffling-hadamard}

In this section, we prove an important equality that was used in the derivation of the variance formulas of ProductSRHT in the previous sections. It can be seen as the key lemma that leads to a reduced variance compared to Rademacher sketches. It shows the statistics of randomly sampled rows (without replacement) inside stacked orthogonal Hadamard matrices that give close-to-orthogonal as opposed to i.i.d. samples in our proposed ProductSRHT sketch. We prove the equality
\begin{align*}
    \mathbb{E} [h_{p_\ell, q} h_{p_\ell, r} h_{p_{\ell'}, r} h_{p_{\ell'}, q}]
    = -\frac{1}{\lceil D/d \rceil d - 1}
\end{align*}
for $\ell \neq \ell'$ and $q \neq r$ being fixed indices. $\mat{h}_{p_\ell}^{\top}$ and $\mat{h}_{p_{\ell'}}^{\top}$ are the $p_\ell$-th and $p_{\ell'}$-th row of the Hadamard matrix $\mat{H}$, respectively (see Section~\ref{sec:app-structured-sketches}). The indices $q$ and $r$ refer to elements inside these row vectors. $p_\ell$ and $p_{\ell'}$ are themselves the $\pi(\ell)$-th and $\pi(\ell')$-th entries of the vector $\mat{p}_i \in \mathbb{R}^{\lceil D/d \rceil d}$ for a given $i \in \{1, \dots, p\}$. Here, we look at a given index $i$ and drop the index for ease of presentation. We do the same for the permutation function $\pi(\cdot)$. Recall that $\{\mat{p}_i\}_{i=1}^p$ is used to construct the sampling matrices $\{\mat{P}_i\}_{i=1}^p$ in Alg.~\ref{alg:product-srht-algorithm}.

The following proof is closely related to \citet[][Proof of Proposition 8.2]{Choromanski2017} and \citet[][Lemma B.1]{wacker2022}. The difference here is that we consider the sampling of rows (without replacement) inside {\em stacked} Hadamard matrices as we will see next, whereas the other works only consider the sampling of rows inside a {\em single} Hadamard matrix.

\begin{proof}

The sampling procedure for the rows $\mat{h}_{p_\ell}^\top$ and $\mat{h}_{p_{\ell'}}^\top$ can be described as follows. We stack the Hadamard matrix $\mat{H} \in \mathbb{R}^{d \times d}$ $\lceil D/d \rceil$ times on top of itself to yield a new matrix $\mat{H}^{\lceil D/d \rceil} \in \mathbb{R}^{\lceil D/d \rceil d \times d}$. We then shuffle its rows randomly to yield the shuffled matrix $\mat{H}_{\mat{p}}^{\lceil D/d \rceil \times d}$. $\mat{h}_{p_{\ell}}^{\top}$ and $\mat{h}_{p_{\ell'}}^{\top}$ are then the $\ell$-th and $\ell'$-th row of $\mat{H}_{\mat{p}}^{\lceil D/d \rceil}$. In fact, the shuffled matrix $\mat{H}_{\mat{p}}^{\lceil D/d \rceil}$ can be constructed from the index vector $\mat{p}$ that contains the order of the rows of $\mat{H}$ to be used.

Since the columns of $\mat{H}$ are orthogonal, the same is true for $\mat{H}^{\lceil D/d \rceil}$ and $\mat{H}_{\mat{p}}^{\lceil D/d \rceil}$. So the inner product of two distinct columns $q$ and $r$ of $\mat{H}_{\mat{p}}^{\lceil D/d \rceil}$ yields $\sum_{\ell=1}^{\lceil D/d \rceil d} h_{p_\ell, q} h_{p_\ell, r} = 0$. As $h_{p_\ell, q}, h_{p_\ell, r} \in \{1, -1\}$, half of $\{ h_{p_\ell, q} h_{p_\ell, r} \}_{\ell=1}^{\lceil D/d \rceil d}$ must be equal to $1$ and $-1$, respectively.
From this we get the marginal probabilities
\begin{align*}
    {\rm Pr}(h_{p_\ell, q} h_{p_\ell, r} = 1) = {\rm Pr}(h_{p_\ell, q} h_{p_\ell, r} = -1) = 0.5
\end{align*}
for any $q \neq r$ being fixed, where the probabilities are taken over the indices $p_{\ell}$ and $p_{\ell}$, i.e., the shuffling operation. Next, we obtain the following conditional probabilities using the same logic as before:
\begin{align*}
    &{\rm Pr}(h_{\ell', q} h_{\ell', r} = 1 | h_{\ell, q} h_{\ell, r} = 1)
    = {\rm Pr}(h_{\ell', q} h_{\ell', r} = -1 | h_{\ell, q} h_{\ell, r} = -1)
    = \frac{(\lceil D/d \rceil d) / 2 - 1}{\lceil D/d \rceil d - 1} \\
    &{\rm Pr}(h_{\ell', q} h_{\ell', r} = 1 | h_{\ell, q} h_{\ell, r} = -1)
    = {\rm Pr}(h_{\ell', q} h_{\ell', r} = -1 | h_{\ell, q} h_{\ell, r} = 1)
    = \frac{(\lceil D/d \rceil d) / 2}{\lceil D/d \rceil d - 1}
\end{align*}
Using these conditional probabilities along with the marginal probabilities ${\rm Pr}(h_{p_\ell, q} h_{p_\ell, r})$ allows us to solve $\mathbb{E} [h_{p_\ell, q} h_{p_\ell, r} h_{p_{\ell'}, r} h_{p_{\ell'}, q}]$ via the law of total expectation:
\begin{align*}
    \mathbb{E} [h_{p_\ell, q} h_{p_\ell, r} h_{p_{\ell'}, r} h_{p_{\ell'}, q}]
    &= \mathbb{E}_{p_{\ell}} [\mathbb{E}_{p_{\ell'}} [h_{p_\ell, q} h_{p_\ell, r} h_{p_{\ell'}, r} h_{p_{\ell'}, q} | h_{p_{\ell}, r} h_{p_{\ell}, q}]] \\
    &= \frac{1}{2} \left(\mathbb{E}_{p_{\ell'}} [h_{p_{\ell'}, r} h_{p_{\ell'}, q} | h_{p_{\ell}, r} h_{p_{\ell}, q} = 1] - \mathbb{E}_{p_{\ell'}} [h_{p_{\ell'}, r} h_{p_{\ell'}, q} | h_{p_{\ell}, r} h_{p_{\ell}, q} = -1] \right) \\
    &= \frac{1}{2} \left( \left(\frac{(\lceil D/d \rceil d) / 2 - 1}{\lceil D/d \rceil d - 1} - \frac{(\lceil D/d \rceil d) / 2}{\lceil D/d \rceil d - 1}\right)
    - \left( \frac{(\lceil D/d \rceil d) / 2}{\lceil D/d \rceil d - 1} - \frac{(\lceil D/d \rceil d) / 2 - 1}{\lceil D/d \rceil d - 1} \right)
    \right) \\
    &= -\frac{1}{\lceil D/d \rceil d - 1}
\end{align*}

\end{proof}

\newpage

\section{FURTHER EXPERIMENTS}

\label{sec:app-further-experiments}

In this section, we provide further experiments complementing our evaluation in Section~\ref{sec:experiments} of the main paper.

\subsection{Empirical Variance Comparison of (CtR-) Rademacher Sketches}
\label{sec:further-var-comp}

We first study the practical effect of the non-negativity condition $a = \sum_{i=1}^d \sum_{j' \neq i}^d x_i x_{j'} y_i y_{j'} \geq 0$ in Thm.~\ref{thrm:rad-ctr-advantage}. Fig.~\ref{fig:ctr-rad-r-rad} shows the results of an empirical variance comparison of CtR-Rademacher sketches against their real analogs.

Fig.~\ref{fig:ctr-rad-r-rad-a} shows the case, where the condition $a \geq 0$ always holds (non-negative data) and Fig.~\ref{fig:ctr-rad-r-rad-b} the case, where $a \geq 0$ does not always hold (zero-centered data). While the CtR sketch offers lower variance ratios for CIFAR-10 and MNIST in most cases even if $a \geq 0$ does not always hold, we see that $a \geq 0$ is needed to \textit{guarantee} an advantage of the CtR sketch. For Letter and Mocap with zero-centered data (Fig.~\ref{fig:ctr-rad-r-rad-b}), around half the variances ratios are less than one and half are more than one, suggesting that real Rademacher sketches perform similarly to CtR-Rademacher sketches in this case. For non-negative data (Fig.~\ref{fig:ctr-rad-r-rad-a}), the relative gains of CtR-sketches improve drastically. That is, all variance ratios are less than one, with an increasing gain for larger $p$.

\begin{figure}[ht]
\begin{subfigure}{.5\textwidth}
    \centering
    \includegraphics[width=1\linewidth]{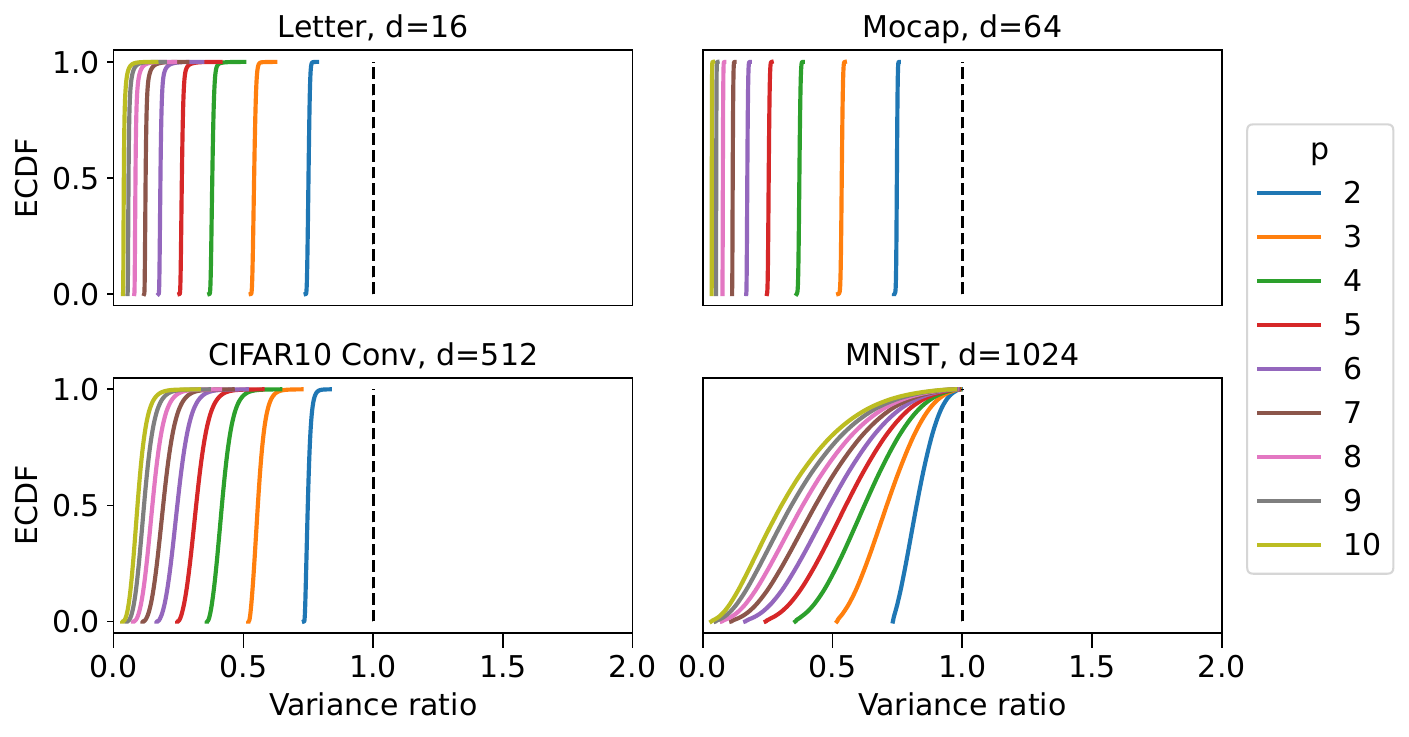}
    \caption{Non-negative data.}
    \label{fig:ctr-rad-r-rad-a}
\end{subfigure}%
\begin{subfigure}{.5\textwidth}
    \centering
    \includegraphics[width=1\linewidth]{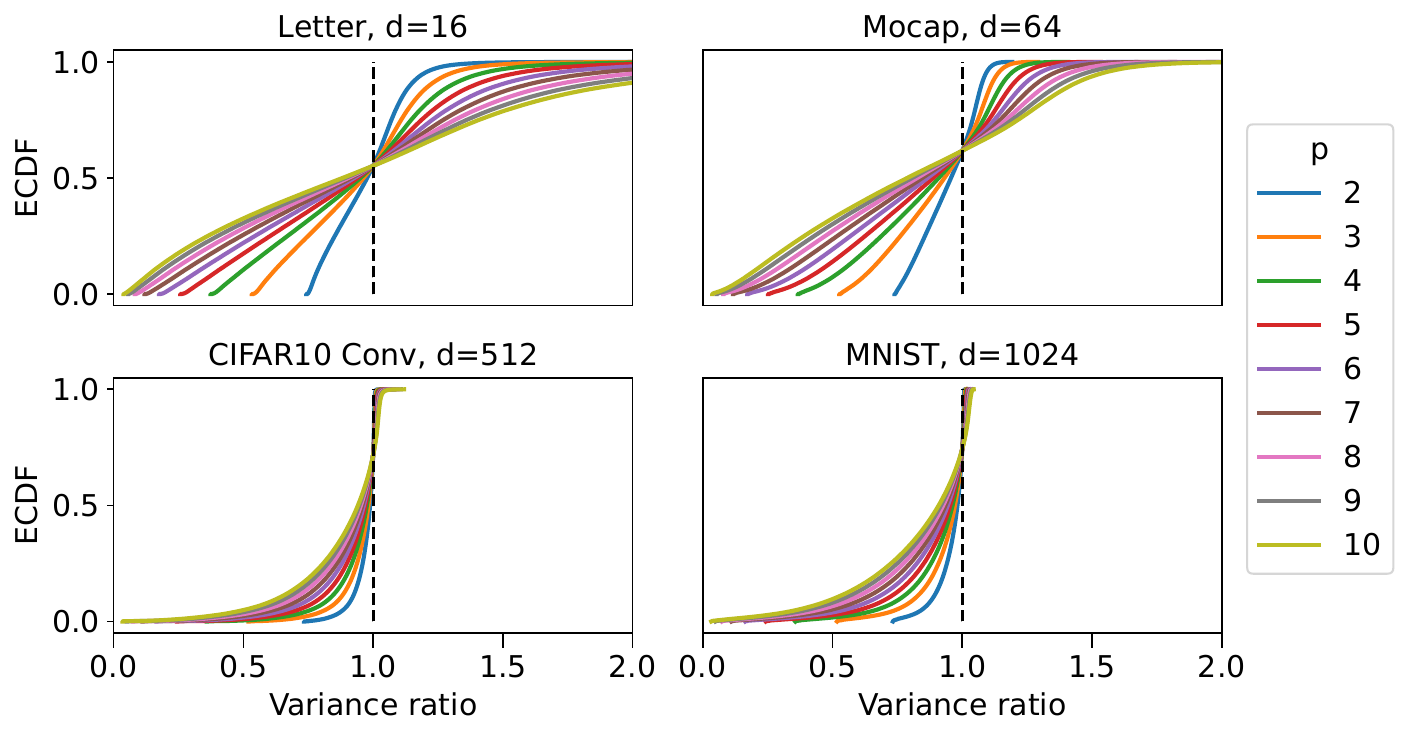}
    \caption{Zero-centered data.}
    \label{fig:ctr-rad-r-rad-b}
\end{subfigure}
\caption{ECDF of Var(CtR-Rademacher) / Var(Rademacher) for pairwise evaluations of the variance ratio evaluated on a subset of each dataset.}
\label{fig:ctr-rad-r-rad}
\end{figure}

\subsection{Closed-Form GP Classification}

\label{sec:further-downstream-comp}

We carry out a set of additional GP classification experiments to complement Section~\ref{sec:gp-classification}. The experiments are the same as in Section~\ref{sec:gp-classification}, but compare a larger range of values for $D$ and two additional data sets: Letter and Mocap \citep{Dua:2019}. Moreover, we add experiments for zero-centered data. The following is a brief summary of the plots:

\begin{itemize}
    \item Fig.~\ref{fig:gp-mnist-cifar10-big} shows MNIST/CIFAR-10 experiments for $p=3,7$ using unit-normalized non-negative data (same as Fig.~\ref{fig:final-comparison} for a larger range of $D$).
    \item Fig.~\ref{fig:gp-letter-mocap-big} shows Letter/Mocap experiments for $p=3,7$ using unit-normalized non-negative data.
    \item Fig.~\ref{fig:gp-mnist-cifar10-big-0mean} shows MNIST/CIFAR-10 experiments for $p=3,7$ using unit-normalized zero-centered data.
    \item Fig.~\ref{fig:gp-letter-mocap-big-0mean} shows Letter/Mocap experiments for $p=3,7$ using unit-normalized zero-centered data.
\end{itemize}

In general, we find that relative performance gains of CtR-sketches over their real analogs are larger for non-negative than for zero-centered data. This makes sense because of the condition of Thm.~\ref{thrm:rad-ctr-advantage}. However, they still lead to some improvements even for zero-centered data. Gains over SRF on the other hand increase for zero-centered data, in particular regarding kernel approximation errors.

\subsection{Online Learning for Fine-Grained Visual Recognition}

\label{sec:app-fine-grained}

Fig.~\ref{fig:online-learning-fine-grained} shows an online learning experiment on the CUB-200 \citep{welinder2010} data set. We follow the experimental setup in \citet{Gao2016}, but only train the classification layer of the VGG-M \citep{ChatfieldSVZ14} convolutional neural network. This option is referred to as \textit{no fine-tuning} in the original paper.

We use an Adam optimizer with decaying learning rate starting from $10^{-3}$, where the learning rate is divided by $10$, when the validation loss stagnates. The mini-batch size is $32$ and we train over $50$ epochs. The sketch dimension is $D=2^{13}$ and $p=3$, $a=2$ in our experiments (see Section~\ref{sec:experimental-setup}).

Our final test errors are lower than 36.42\% and 31.53\% for Rademacher and TensorSketch, respectively, given in \citet[Table 4]{Gao2016}. An exception is SRF that requires unit-normalized features and hence loses important information, leading to around 75\% test error. Since the polynomial degree $p=3$ is small, CtR-ProductSRHT does not achieve an advantage over ProductSRHT in terms of test errors. ProductSRHT is also slightly faster. When using CRAFT maps on the other hand, both CtR-ProductSRHT and ProductSRHT perform similarly well, and are significantly faster than TensorSketch.

\begin{figure}[ht]
\begin{center}
\begin{subfigure}{.5\textwidth}
    \centering
    \includegraphics[width=1\linewidth]{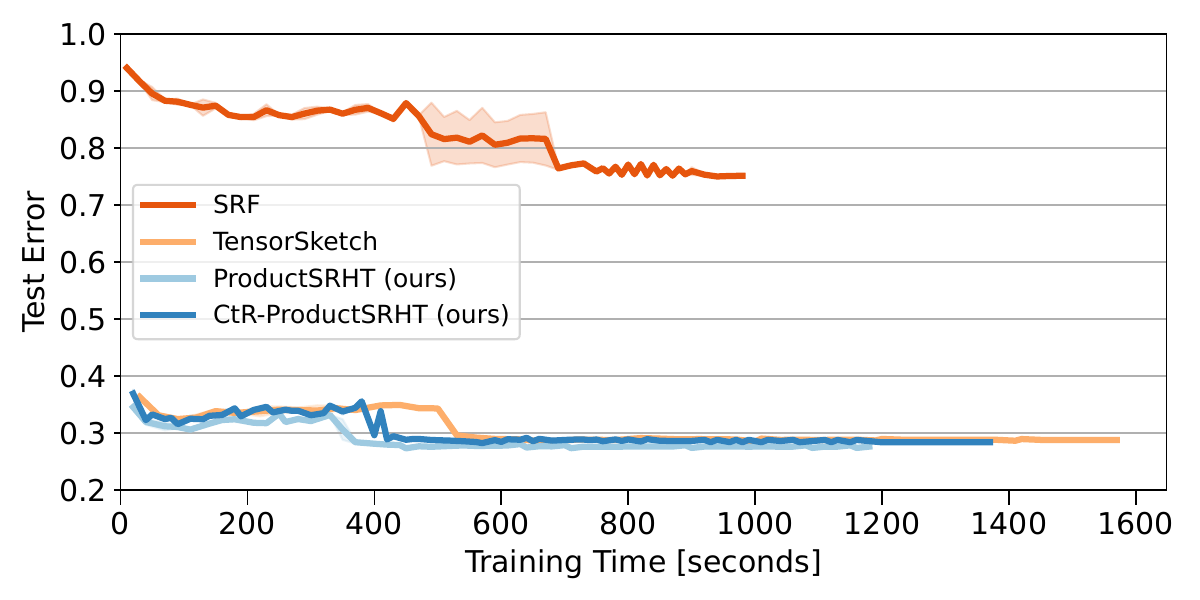}
    \caption{(CtR-) ProductSRHT vs. TensorSketch/SRF.}
    \label{fig:fine-grained}
\end{subfigure}%
\begin{subfigure}{.5\textwidth}
    \centering
    \includegraphics[width=1\linewidth]{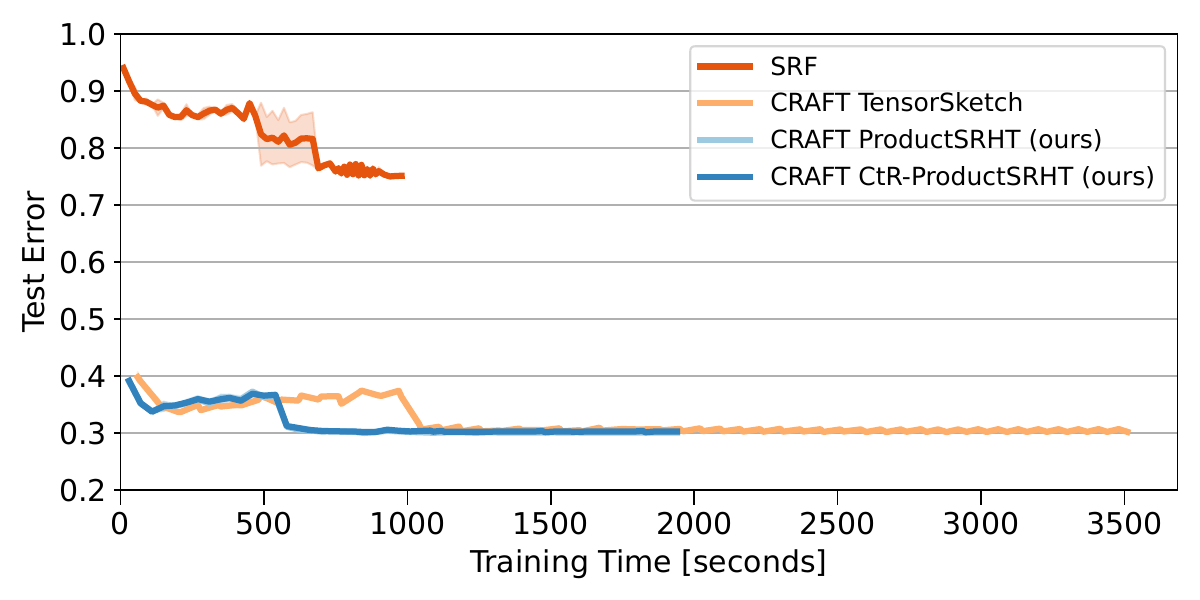}
    \caption{Same as (a) with CRAFT maps.}
    \label{fig:fine-grained-craft}
\end{subfigure}
\caption{Stochastic optimization following \citet{Gao2016} for the CUB-200 data set \textit{without} fine-tuning of the VGG-M convolutional layers.}
\label{fig:online-learning-fine-grained}
\end{center}
\end{figure}


\begin{figure}[ht]
\begin{center}
\centerline{\includegraphics[width=1.0 \textwidth]{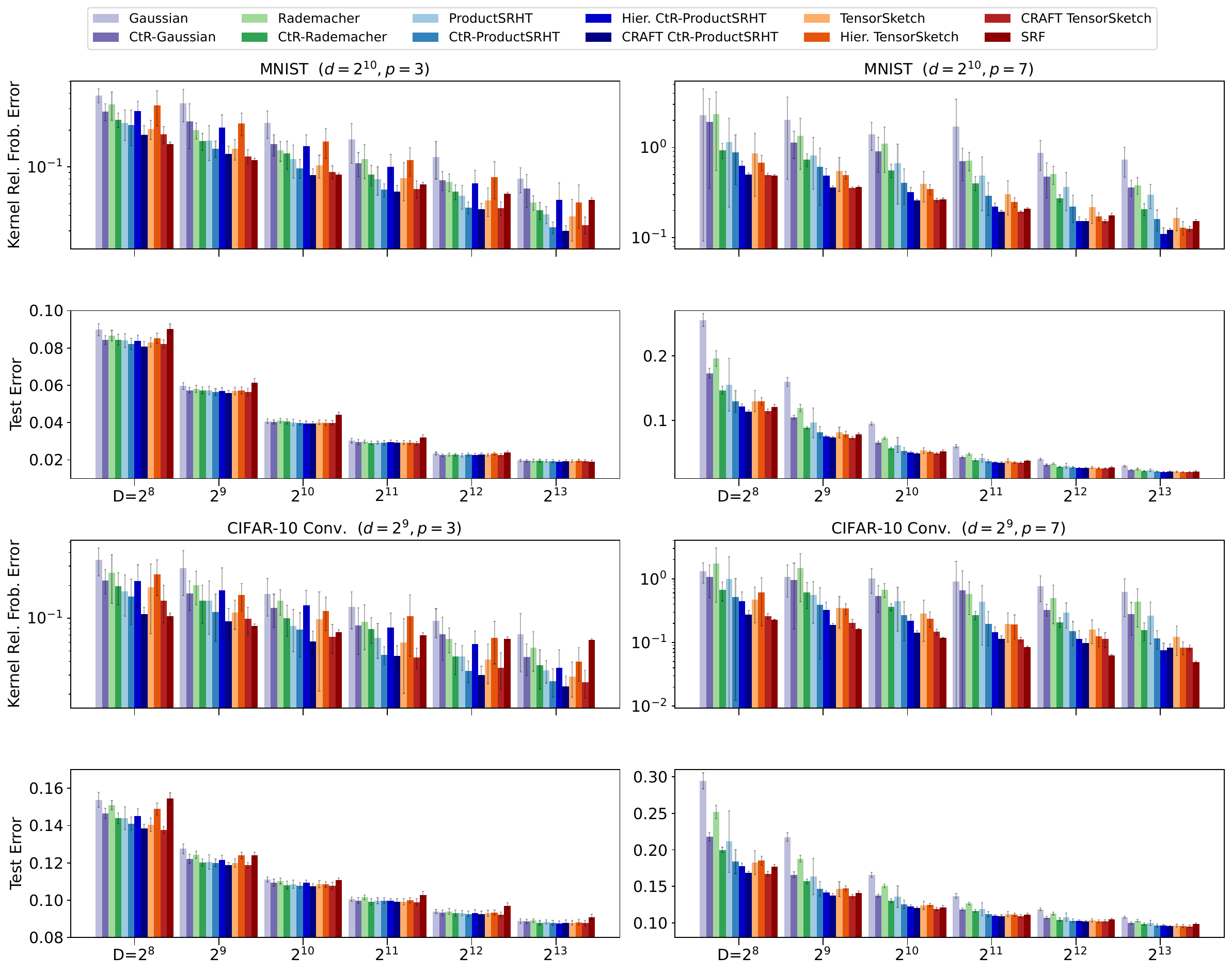}}
\caption{MNIST and CIFAR-10 comparison for $p=3$ and $p=7$ with unit-normalized data averaged over 20 seeds.
}
\label{fig:gp-mnist-cifar10-big}
\end{center}
\end{figure}

\begin{figure}[ht]
\begin{center}
\centerline{\includegraphics[width=1.0 \textwidth]{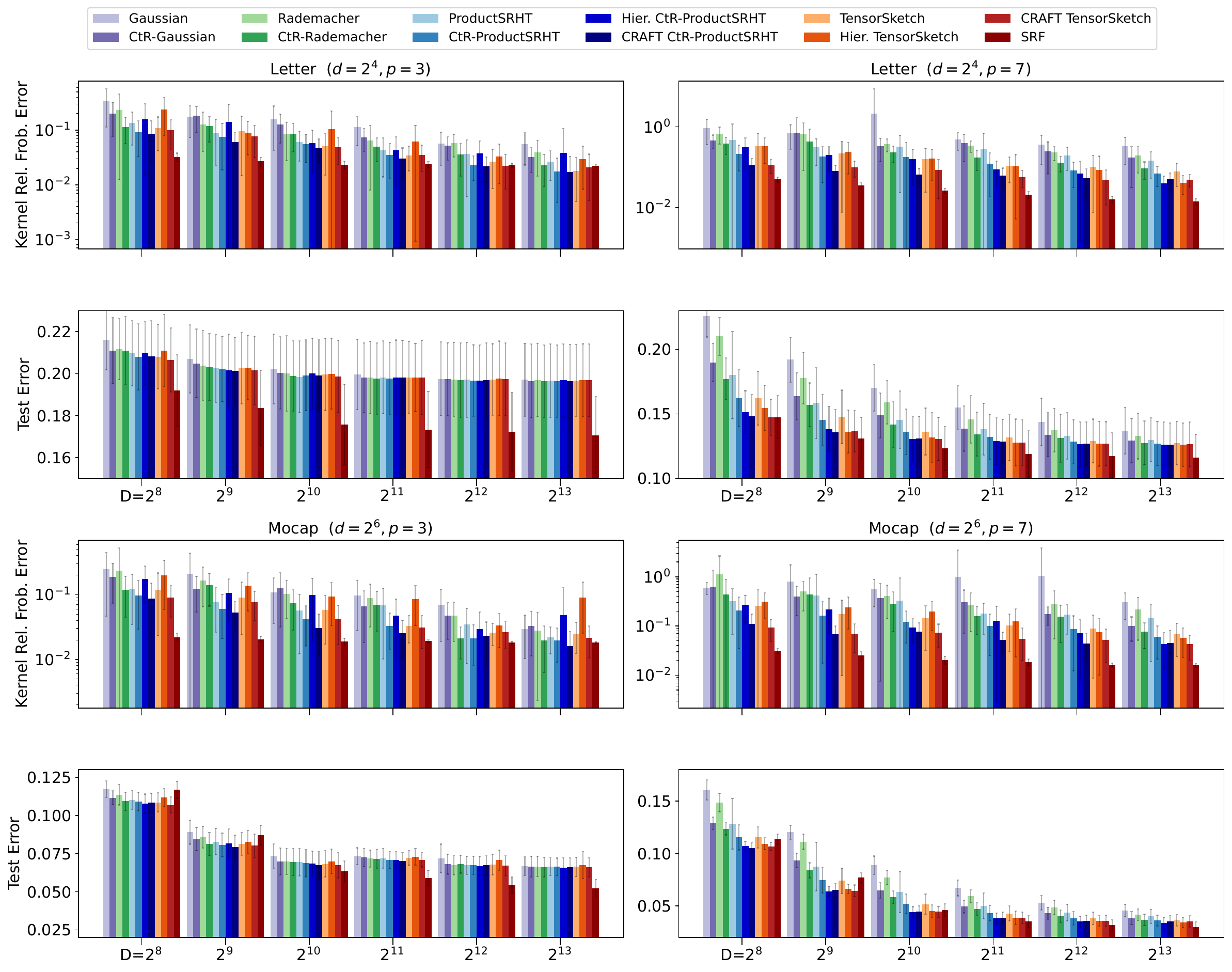}}
\caption{Letter and Mocap comparison for $p=3$ and $p=7$ with unit-normalized data averaged over 20 seeds.
}
\label{fig:gp-letter-mocap-big}
\end{center}
\end{figure}

\begin{figure}[ht]
\begin{center}
\centerline{\includegraphics[width=1.0 \textwidth]{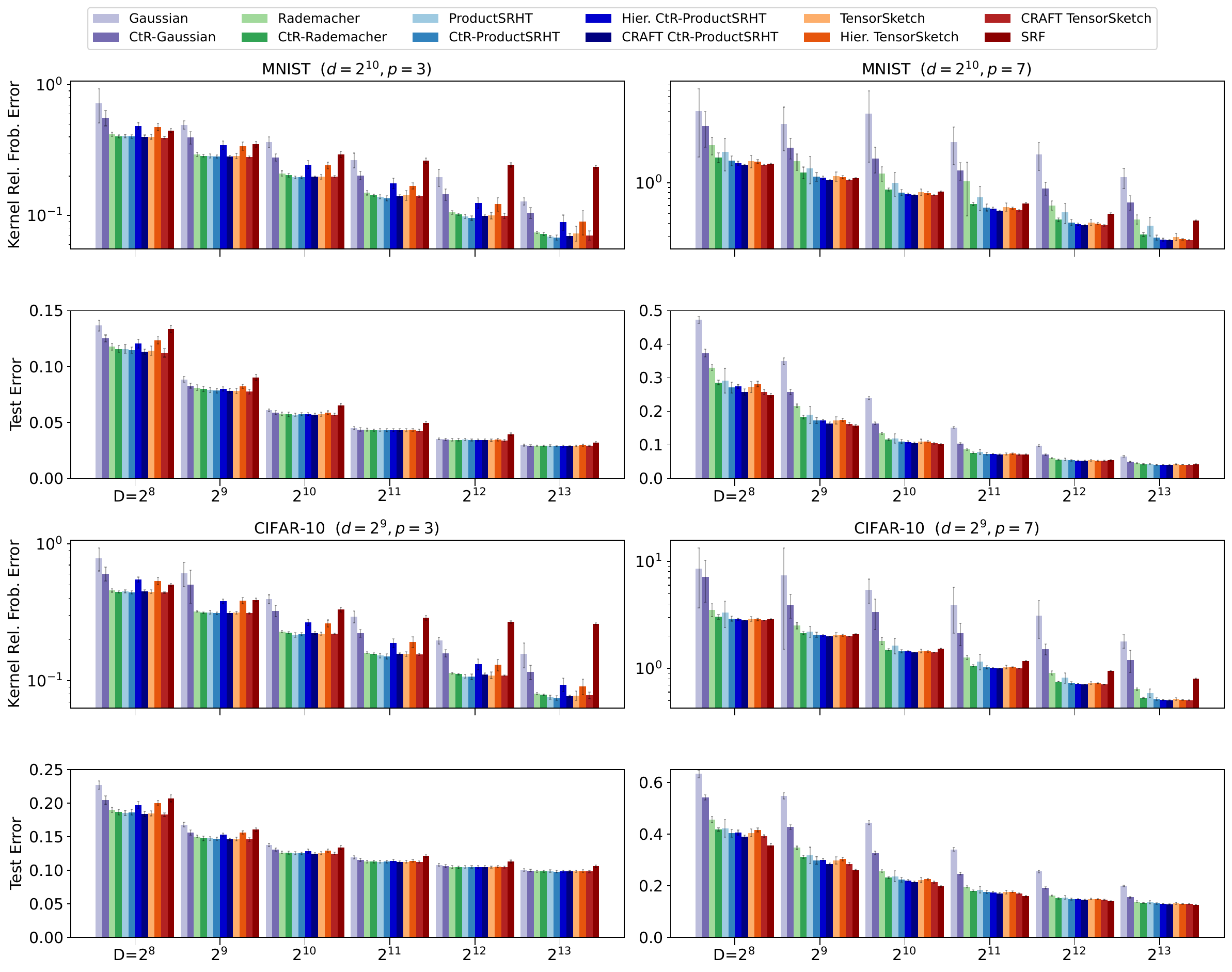}}
\caption{MNIST and CIFAR-10 comparison for $p=3$ and $p=7$ averaged over 20 seeds. The data is centered through a subtraction of the training mean and unit-normalized afterwards.
}
\label{fig:gp-mnist-cifar10-big-0mean}
\end{center}
\end{figure}

\begin{figure}[ht]
\begin{center}
\centerline{\includegraphics[width=1.0 \textwidth]{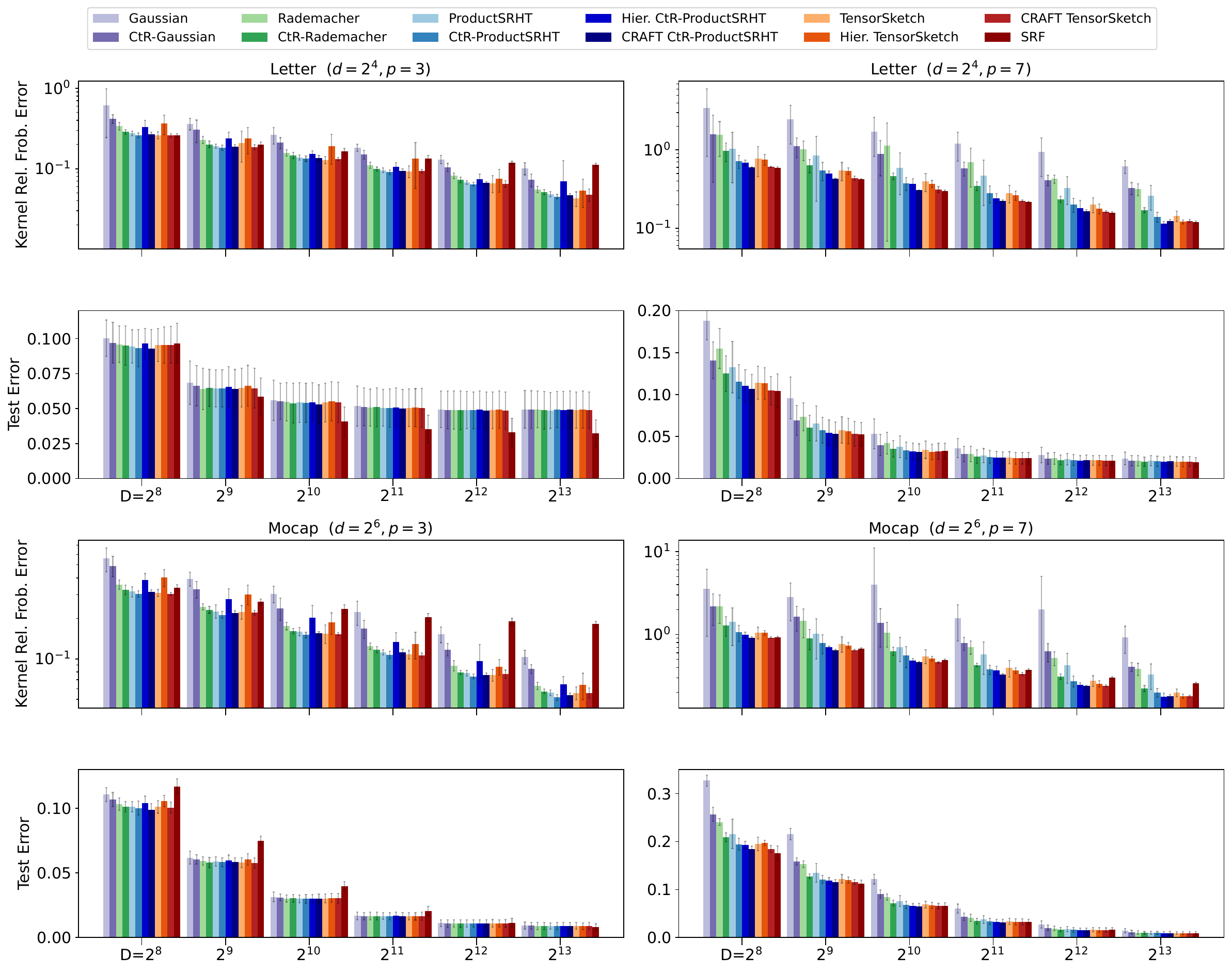}}
\caption{Letter and Mocap comparison for $p=3$ and $p=7$ averaged over 20 seeds. The data is centered through a subtraction of the training mean and unit-normalized afterwards.
}
\label{fig:gp-letter-mocap-big-0mean}
\end{center}
\end{figure}






\end{document}


%

%

\onecolumn
\aistatstitle{Instructions for Paper Submissions to AISTATS 2022: \\
Supplementary Materials}

\section{FORMATTING INSTRUCTIONS}

To prepare a supplementary pdf file, we ask the authors to use \texttt{aistats2022.sty} as a style file and to follow the same formatting instructions as in the main paper.
The only difference is that the supplementary material must be in a \emph{single-column} format.
You can use \texttt{supplement.tex} in our starter pack as a starting point, or append the supplementary content to the main paper and split the final PDF into two separate files.

Note that reviewers are under no obligation to examine your supplementary material.

\section{MISSING PROOFS}

The supplementary materials may contain detailed proofs of the results that are missing in the main paper.

\subsection{Proof of Lemma 3}

\textit{In this section, we present the detailed proof of Lemma 3 and then [ ... ]}

\section{ADDITIONAL EXPERIMENTS}

If you have additional experimental results, you may include them in the supplementary materials.

\subsection{The Effect of Regularization Parameter}

\textit{Our algorithm depends on the regularization parameter $\lambda$. Figure 1 below illustrates the effect of this parameter on the performance of our algorithm. As we can see, [ ... ]}

\vfill